\documentclass[a4paper,reqno]{amsart}

\usepackage{amsmath, amssymb, amsthm, eucal}
\usepackage{graphicx}
\usepackage[shortlabels]{enumitem}

\usepackage{caption}
\usepackage{subcaption}

\usepackage{placeins}
\usepackage{hyperref}
\usepackage{verbatim}
\usepackage{pdfsync}

\usepackage{comment}

\usepackage{tikz}
\usepackage{scalefnt}
\usepackage[textheight=630pt,
textwidth=468pt,
centering]{geometry}

\usepackage{xcolor}

\newtheorem{theorem}{Theorem}
\newtheorem{corollary}[theorem]{Corollary}
\newtheorem{lemma}[theorem]{Lemma}
\newtheorem{proposition}[theorem]{Proposition}

\newtheorem{remark}[theorem]{Remark}

\theoremstyle{definition}
\newtheorem{definition}[theorem]{Definition}

\newcommand{\norm}[1]{\left\Vert #1 \right\Vert}

\newcommand{\W}{\overrightarrow{W}}

\newcommand{\NN}{\ensuremath{\mathbb{N}}}

\newcommand{\RR}{\mathbb{R}}

\newcommand{\F}{\mathcal{F}}

\DeclareMathOperator{\E}{\mathbb{E}}

\begin{document}

	\title[Stochastic gradient descent in deep linear networks]
	{Convergence of stochastic gradient descent for learning linear neural networks} 
	
	\date{\today}

	\author{Gabin Maxime Nguegnang}
	
	\author{Bubacarr Bah}
	
	\author{Holger Rauhut}
	
	\author{Ulrich Terstiege}

\begin{abstract}
	 	Stochastic gradient descent (SGD) is the cornerstone of modern deep learning, yet its theoretical convergence properties remain poorly understood in non-convex settings. To address this gap, we study SGD dynamics in deep linear neural networks—simplified models that retain key non-convex characteristics while remaining analytically tractable. Building on recent results for gradient descent, we extend convergence guarantees to SGD under regularized squared loss. We establish almost sure boundedness of SGD iterates. 
        Using the framework of asymptotic pseudotrajectories from stochastic approximation theory, our analysis establishes then, for both deterministic and adaptive step-size choices, convergence of SGD to the set of critical points under mild conditions (without relying on Lipschitz properties or whitened data). To our knowledge, this is the first work providing almost sure convergence guarantees for SGD in deep linear networks, bridging the gap between theoretical optimization and practical stochastic training.
	
\end{abstract}

\maketitle

\section{Introduction}

Deep learning has delivered transformative breakthroughs in recent years, fueling innovations across domains such as computer vision, natural language processing, healthcare, and scientific computing. Central to this progress are iterative optimization methods, particularly stochastic gradient descent (SGD) and its variants, which enable the effective training of highly parameterized models.

Despite SGD’s empirical success in optimizing deep neural networks, its theoretical convergence properties remain only partially understood. This gap is most pronounced in non-convex settings, where the loss landscape is riddled with saddle points and local minima. Yet, SGD consistently navigates these challenges and produces models that generalize well \cite{zhang2017understanding, belkin2019reconciling}.

To bridge theory and practice, researchers often study simplified models such as deep linear neural networks multi-layer architectures composed solely of linear transformations. Although these networks ultimately represent linear mappings, their parameterization introduces non-convexity through matrix factorization. Consequently, they retain many 
characteristics of deep learning while providing a simplified entry-point for the analysis of optimization algorithms.
\cite{saxe2014exact, arora2018optimization}.

A prominent line of work examines optimization dynamics via \textit{gradient flow} (GF) the continuous-time limit of gradient descent with infinitesimal step sizes. Early studies revealed that depth-induced overparameterization can act as an implicit regularizer in certain situations, accelerating convergence \cite{arora2018optimization}. In particular, \cite{chitour2022geometric} and \cite{bah2022learning} basically showed that, under suitable dimensional assumptions, GF converges to global minima from almost all initializations. These results underscore how architectural properties such as width and depth influence training dynamics.

However, GF is an abstraction and does not capture the behavior of practical algorithms. This motivates the study of discrete-time \textit{gradient descent} (GD) with finite step sizes. For instance, \cite{bartlett2018gradient} proved GD convergence for linear residual networks initialized with identity mappings. Later, \cite{arora2019fine} generalized this to approximately balanced initializations, establishing global convergence under a deficiency margin condition. 
Extending these earlier results, \cite{elkabetz2021continuous} and \cite{nguegnang2024convergence} provided 
general convergence guarantees for GD under mild upper bounds on the step sizes, without relying on Lipschitz properties of whitened data.

In practice, \textit{stochastic} gradient descent is favored over full-batch GD for its computational efficiency on large datasets. Yet, its stochastic nature introduces additional complexities, and most existing convergence results assume bounded gradients or Lipschitz continuity conditions \cite{mertikopoulos2020almost,khaled2022better,ge2015escaping,ghadimi2013stochastic,ghadimi2016accelerated,bottou2018optimization,lei2019stochastic,vaswani2019fast}. The Lipschitz gradient property and its variants are fundamental for establishing the descent lemma, which guarantees decrease of the loss function. Moreover, Lipschitz gradient properties are also key to showing boundedness of the the second moment of the gradient, see \cite{ghadimi2013stochastic,lei2019stochastic} and references therein. In particular, Lipschitz gradient properties are key for most convergence theory of SGD, see also 
\cite{ge2015escaping,khaled2022better} and references therein. 

Unfortunately, the gradient of the empirical loss function is not globally Lipschitz for many neural network architectures. This includes deep linear networks when combined with squared loss. Therefore, existing general bounds, such as the ones in \cite{ghadimi2013stochastic, mertikopoulos2020almost, khaled2022better}, are not applicable -- unless one shows boundedness of the iterates directly. In fact, existing approaches use Lipschitz gradient properties to actually show boundedness. For this reason, there is a significant gap in the convergence theory of SGD when applied to deep learning scenarios. In order to make progress, it becomes essential to develop techniques for showing boundedness of the iterates without assuming global Lipschitz bounds on the gradient.


This paper addresses this gap by extending the convergence results of \cite{nguegnang2024convergence} from GD to SGD. We analyze deep linear networks trained with a regularized squared loss.  Our key theoretical tool is the framework of \textit{asymptotic pseudotrajectories} (APT) from stochastic approximation theory \cite{benaim1996dynamical}, which connects discrete SGD iterates to continuous-time GF trajectories.
Note that the convergence analysis of SGD (in particular, showing boundedness of the iterates) for the unregularized square-loss (as considered in \cite{arora2018optimization, arora2019fine,bah2022learning,nguegnang2024convergence} for GF/GD) turned out to be very challenging and is still open. This is the reason why we passed to a regularized loss, see \eqref{eq:regularized_loss} below, for which we are then able to show convergence (as already stated). Moreover, we believe that our analysis provides a blue-print for many steps of the convergence analysis of SGD for the unregularized loss.

Our contributions are threefold:
\begin{itemize}
	\item We prove the \textbf{almost sure boundedness} of SGD iterates under minimal assumptions (without relying on gradient Lipschitz continuity or uniform data whitening).
	\item We establish \textbf{almost sure convergence} of SGD to the set of critical points
    of the squared loss in deep linear networks, showing that SGD inherits GD’s convergence behavior under similar conditions.
	\item We develop a rigorous analytical framework linking SGD dynamics to gradient flow using stochastic approximation tools, offering new insights into stochastic training in deep networks.
\end{itemize}

To our knowledge, this is the first almost sure convergence result for SGD in deep linear networks in the present 
setting.
Our work narrows the gap between theoretical optimization results and practical stochastic training procedures in deep learning. 




	\section{Problem statement and results summary}\label{sec:problem&results}
	
%

	\subsection{Problem statement}\label{sec:problem}
We consider deep linear neural networks trained via over-parameterization, i.e., mappings of the form $X \mapsto W_N \cdots W_1 X$ with depth $N \in \NN$.  Here, $W_j \in \RR^{d_j \times d_{j-1}}$ for $j = 1, \dots, N$ denotes the weight matrix of the $j^{\mbox{th}}$ layer, and $X \in \RR^{d_x \times m}$ (where $d_x=d_0$) with data columns $x_1,\hdots,x_m \in \RR^{d_x}$ is the input data matrix. Throughout the paper, we will assume $N \geq 2$. The training loss 
for an output data matrix $Y \in \RR^{d_y \times m}$ (where $d_y=d_N$) with columns $y_1,\hdots,y_m \in \RR^{d_y}$ is defined as 
\begin{equation}\label{normal loss}
	L^N\big(\W\big) = \frac{1}{2m} \sum_{i=1}^m \|y_i - W_N \cdots W_1 x_i\|_2^2
    = \frac{1}{2m} \| Y - W_N \cdots W_1 X \|_F^2,
\end{equation}
where $\W = \big(W_1, \dots, W_N\big)$ and $\|\cdot\|_F$ denotes the Frobenius norm. Defining the product matrix $W=W_N\cdots W_1$ associated to $\W$ and 
\[
L^1\big(W) = \frac{1}{2m} \|Y-WX\|_F^2,
\]
it holds $L^N(\W) = L^1(W)$.

We introduce the regularized loss
\begin{align}\label{eq:regularized_loss}
	\Tilde{L}^N\big(\W(k)\big) = L^N\big(\W(k)\big) + R\big(\W(k)\big),
\end{align}
where for parameters $\beta > 0$ and $\lambda \geq 0$, the regularization term is defined as
\begin{equation}\label{eq:regularizer}
	R(\W) = R_{\beta,\lambda}(\W) = \lambda \left( \frac{1}{2} \|\W\|_F^2 \right)^{\beta},
\end{equation}
where we use the notation $\|\W\|_F = \left( \sum_{j=1}^N \|W_j\|_F^2\right)^{1/2}$.

The gradient flow associated with the regularized loss $\Tilde{L}^N$ is given by
\begin{align}\label{eq:sgf_w}
	\frac{dW_j}{dt} = -\nabla_{W_j} \Tilde{L}^N\big(\W\big).
\end{align}

For a subset $S \subset \{1, \dots, m\}$ chosen 
uniformly at random among all subsets of cardinality $T \leq m$ ($T = 1$ being allowed), we define the corresponding (mini batch) stochastic loss (over the randomly selected data points) as
\begin{align}
    L^N_S\big(\W\big) &= \frac{1}{2T} \sum_{i\in S} \|y_i - W_N\cdots W_1 y_i\|_2^2 = \frac{1}{2T} \| Y_{S} - W_N \cdots W_1 X_{S} \|_F^2,
\end{align}
where $Y_S$ and $X_S$ denote the submatrices of $Y$ and $X$ consisting only of the columns indexed by $S$.
Then the regularized stochastic loss 
$\Tilde{L}^N_{S}$, and its corresponding stochastic gradient $\nabla \Tilde{L}^N_{S}$ are defined as
\begin{align}
	\label{stochastic loss}
	\Tilde{L}^N_{S}\big(\W\big) &= L^N_S\big(\W\big) + R\big(\W\big), \\
	\label{stochastic gradient}
	\nabla \Tilde{L}^N_{S}\big(\W\big) &= \nabla L^N_{S}\big(\W\big) + \nabla R\big(\W\big).
\end{align}
At each iteration $k \in \NN$ of stochastic gradient descent (SGD), we select a subset $S_k \subset [m] = \{1, \dots, m\}$ uniformly at random among all subsets of size $T \leq m$ (with $T = 1$ allowed), independently of $\W(k)$. It then holds that 
\[
\E\big[\nabla \Tilde{L}^N_{S_k}\big(\W(k)\big) \mid \W(k)\big] = \nabla \Tilde{L}^N\big(\W(k)\big).
\]
The corresponding SGD update for each weight matrix $W_j$ is
\begin{align}\label{eq:sgd}
	W_j(k+1) &= W_j(k) - \eta_k \nabla_{W_j} \Tilde{L}^N_{S_k}\big(\W(k)\big) \\
	&= W_j(k) - \eta_k \Big( \nabla_{W_j} L^N_{S_k}\big(\W(k)\big) + \nabla_{W_j} R\big(\W(k)\big) \Big). \nonumber
\end{align}
Our goal is to analyze the convergence properties of the mini-batch SGD algorithm \eqref{eq:sgd} for training deep linear networks using the regularized stochastic loss \eqref{stochastic loss}.

\subsection{Results Summary}\label{sec:summary}

Our main result (Theorem \ref{thm:sgd-conv2cset}) shows that under mild conditions on the step size $\eta_k$ in \eqref{eq:sgd} and the regularization parameter $\beta$—specifically, $\beta \geq N$—we establish:
\begin{itemize}
	\item convergence of the SGD iterates to the critical set of $\Tilde{L}^N$;
	\item convergence of the regularized loss $\Tilde{L}^N$ evaluated along the SGD iterates; and
	\item validity of these results for both deterministic and adaptive step-size schedules.
\end{itemize}

The key intermediate results include bounds on the sequence of SGD iterates under both deterministic and adaptive step sizes (Propositions \ref{prop:bounded} and \ref{prop:bounded_adaptive}). A crucial component of our analysis is the concept of an asymptotic pseudotrajectory for the gradient flow, which corresponds to the continuous-time counterpart of gradient descent using the full gradient rather than stochastic estimates. The SGD sequence defined by \eqref{eq:sgd} can be viewed as an approximate solution to the gradient flow equation \eqref{eq:sgf_w}. To ensure convergence, we show that the interpolated process \eqref{eq:interpoolated_process_w} constructed from these iterates forms an asymptotic pseudotrajectory for the gradient flow (Proposition \ref{prop:apt_property}). This property guarantees that the interpolated process accurately tracks the gradient flow, enabling analysis of the long-term behavior of SGD. Furthermore, Proposition \ref{prop:bounded_adaptive} ensures that SGD with non-constant step sizes remains bounded as time tends to infinity.

\section{Preliminaries}\label{sec:prelim}
\subsection{Key ingredients}\label{sec:prelim-ingredients}
  
In this section we  summarize the main theoretical tools used in the derivation of the results in this manuscript. We mostly use stochastic approximation theory as presented in \cite{benaim1996asymptotic, benaim1999dynamics}, which focuses on the analysis of stochastic processes of the form 
\begin{align}\label{eq:SA}
	x_{k+1} = x_k + \eta_{k+1} H\left(x_k,\xi_{k+1}\right).
\end{align}
Note that to make the comparison to \cite{benaim1996asymptotic, benaim1999dynamics} easy, the involved stepsize  has index $k+1$ here as in \cite{benaim1996asymptotic, benaim1999dynamics}, whereas in our setting \ref{eq:sgd} the corresponding stepsize has index $k$.
In $\eqref{eq:SA}$, $H:\RR^m\times\RR^d\to\RR^m$ is a measurable function that defines the algorithm, the $\eta_k$ are step sizes, and $\xi_k$ are  random variables. 
A common approach in this field is to consider a vector field obtained by averaging the function $H$ as
\begin{align*}
	F(x)= \E \left[H(x,\xi)\right],
\end{align*}
where $\xi$ is the random variable (assuming the $\xi_k$ are identically distributed as $\xi$). 
Then, one re-interprets the discrete process in $\eqref{eq:SA}$ as an approximation to the solutions of the ODE
\begin{align}\label{eq:ODE_for_SA}
	\frac{dx}{dt}= F(x).
\end{align}
This way the stochastic term $H(x_k,\xi_{k+1})$ can be split into a deterministic term $F(x_k)$ and a noise term $U_{k+1}$,
\begin{align*}
	H\left(x_k,\xi_{k+1}\right)= F\left(x_k\right)+ U_{k+1}.
\end{align*}
Taking this into account \eqref{eq:SA} becomes 
\begin{align}\label{eq:SA_for_Robbin_Monro}
	x_{k+1} = x_k + \eta_{k+1} \left(F(x_k)+ U_{k+1} \right).
\end{align}
The concept of an asymptotic pseudotrajectory (APT) has proven very useful to analyze the long-term behavior of a continuous function through the flow trajectory of a dynamical system, defined for instance via the ODE \eqref{eq:ODE_for_SA}.
In order to introduce this concept we recall that a (semi-)flow on some 
metric space

$\mathcal{M}$ (later $\mathcal{M} = \RR^{d_x\times d_1} \times \RR^{d_1 \times d_2} \times \cdots \times \RR^{d_N \times d_y}$) is
a continuous function $\Phi : \RR_+ \times \mathcal{M} \to \mathcal{M}$, $\Phi(t,x) = \Phi_t(x)$, that satisfies $\Phi_0(x) = x$ and $\Phi_t \circ \Phi_s (x) = \Phi_{t+s}(x)$ for all $x \in \mathcal{M}$ and $(t,s) \in \RR_+^2$. We will always use the flow $\Phi$ mapping an initialization $x_0 = x(0)$ to the solution $\Phi_t(x_0) = x(t)$, of the ODE \eqref{eq:ODE_for_SA} at time $t\in [0,\infty)$. 

\begin{definition}[Asymptotic Pseudotrajectory (APT), \cite{benaim1996asymptotic}]\label{def:apt}
	Let $(M,d)$ be a metric space. An asymptotic pseudo trajectory for a flow $\Phi$ is a continuous function $X:\RR_+\to M$ that satisfies 
	\begin{align*}
		\lim_{t\to\infty}\sup_{h\in[0, T]}d\left(X(t+h), \Phi_h(X(t))\right)=0,
	\end{align*}
	for all $T>0.$
\end{definition}
We will work with the so-called interpolated process of the discrete process in \eqref{eq:SA_for_Robbin_Monro} as a candidate for an asymptotic pseudotrajectory, defined for $t \in [0,\infty)$ as 
\begin{align}\label{eq:interp_process}
	X(t)=x_k+\frac{t-\tau_k}{\tau_{k+1}-\tau_k}\left(x_{k+1}-x_k\right) \text{ if } t\in [\tau_k, \tau_{k+1}],~ k\in\NN,
\end{align}
where $\tau_0=0,$ $\tau_k=\displaystyle\sum_{l=1}^{k}\eta_l,$ for $k\geq1.$ We define the piecewise constant interpolation $\overline{U}$ and $\overline{\eta}$ on $[0 , \infty)$ as 
\begin{align}\label{def: U-bar eta-bar}
	\overline{U}(s)={U_{k+1}},\quad \text{and}\quad \overline{\eta}(s)={\eta_{k+1}}\quad \text{for $s\in [\tau_k, \tau_{k+1})$}.
\end{align} 
The number of iterations required for time $\tau_k$ to  attain the timestamp $t$ (in the next step) is given by
\begin{align*}
	m(t)=\sup\{k\geq0:~t\geq\tau_k\}.
\end{align*}
Under the APT property, for a fixed $T>0$ and a sufficiently large $t$, the interpolated process $X(t)$ closely follows the trajectory of the flow induced by the vector field $F$ over the interval $[0, T].$

Proposition~\ref{prop:apt_property} stated next provides conditions for the interpolated process \eqref{eq:interp_process} of the stochastic process \eqref{eq:SA_for_Robbin_Monro} to be an asymptotic pseudotrajectory for the flow induced by the vector field $F$. 
\begin{proposition}[Proposition 4.1 in \cite{benaim1999dynamics}]\label{prop:apt_property}
	Let $F$ be a continuous globally integrable 
	vector field 
    (i.e., the integral curves are defined on $\RR$ and are uniqueley determined). Consider a discrete process $\{x_k\}_{k\in\NN}$ in $\RR^m$ generated by F perturbed with a deterministic or random sequence $\{U_k\}_{k\in\NN}$ in $\RR^m$  defined as  
    \[x_{k+1}-x_k=\eta_{k+1}\left(F(x_k)+U_{k+1}\right).
    \]
    We assume that the stepsizes $\eta_k > 0$ satisfy 
    $\eta_k\to 0$ and $\displaystyle\sum_{k=0}^{\infty}\eta_k=\infty$. Suppose that 
	\begin{enumerate}
		\item For all $T>0,$
		\begin{align*}
			\limsup_{k\to\infty}\left\{\left\|\sum_{i=k}^{n-1}\eta_{i+1}U_{i+1}\right\| : n=k+1,\dots,m(\tau_k+T)\right\}=0,
		\end{align*}
		or equivalently
		\begin{align*}
			\lim_{t\to\infty}\Delta(t,T)=0 \text{ with} ~\Delta(t,T)=\sup_{0\leq h\leq T}\left\|\int_{t}^{t+h}\overline{U}(s)ds\right\|,
		\end{align*}
		where $\overline{U}$ is as defined in $\eqref{def: U-bar eta-bar}.$
		\item $\displaystyle\sup_{k}\|x_k\|<\infty,$ or
		\item $F$ is Lipschitz and bounded on a neighborhood of $\{x_k : k\geq0\}$.
	\end{enumerate}
	Then, the interpolated process X is an asymptotic pseudotrajectory of the flow $\phi$ induced by $F$. 
\end{proposition}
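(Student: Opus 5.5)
The plan is to compare the interpolated process $X$ with the flow $\Phi_h(X(t))$ over a window $[t,t+T]$ and control the difference with a Gronwall argument. The estimate will be driven by two things: the accumulated noise $\Delta(t,T)$, and the discretization error coming from freezing $F$ at the grid points.

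First, fix $T>0$ and a compact set $K$ that contains every iterate $x_k$. Under condition (2) such a $K$ exists. Under condition (3), $F$ is bounded on a neighbourhood of the iterates, which replaces compactness in what follows. Since $F$ is continuous on $K$, it is bounded by some $M$ and uniformly continuous there. For $t\in[\tau_k,\tau_{k+1}]$ the increments give the exact representation
\[
X(t)=X(0)+\int_0^t F(\overline{x}(s))\,ds+\int_0^t \overline{U}(s)\,ds,
\]
where $\overline{x}(s)=x_k$ for $s\in[\tau_k,\tau_{k+1})$. Hence for $h\in[0,T]$
\[
X(t+h)=X(t)+\int_t^{t+h}F(X(s))\,ds+A(t,h)+\int_t^{t+h}\overline{U}(s)\,ds,
\]
with $A(t,h)=\int_t^{t+h}\bigl(F(\overline{x}(s))-F(X(s))\bigr)ds$. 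To bound $A$, note that $\|X(s)-\overline{x}(s)\|\le \eta_{k+1}\bigl(M+\|U_{k+1}\|\bigr)$. The quantity $\eta_{k+1}\|U_{k+1}\|=\|\int_{\tau_k}^{\tau_{k+1}}\overline{U}\|\le\Delta(\tau_k,T)$ tends to $0$ by (1), and $\eta_k\to0$. So $\sup_s\|X(s)-\overline{x}(s)\|\to0$ as $s\to\infty$, and uniform continuity gives $\sup_{h\le T}\|A(t,h)\|\to0$.

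Next, write $y(h)=\Phi_h(X(t))$, so that $y(h)=X(t)+\int_0^hF(y(u))\,du$. In the Lipschitz case (3) with constant $L$, subtracting the two integral equations gives
\[
\|X(t+h)-y(h)\|\le \sup_{h\le T}\|A(t,h)\|+\Delta(t,T)+L\int_0^h\|X(t+u)-y(u)\|\,du ,
\]
and Gronwall yields $\sup_{h\le T}\|X(t+h)-y(h)\|\le e^{LT}\bigl(\sup_h\|A\|+\Delta(t,T)\bigr)\to0$.

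The main obstacle is case (2), where $F$ is only continuous and no Lipschitz constant is available. Here I would use a compactness argument instead of Gronwall. Set $X_t(h)=X(t+h)$ on $[0,T]$. The family $\{X_t\}$ is uniformly bounded, and it is equicontinuous up to an error that vanishes as $t\to\infty$: $F$ is bounded on $K$, and the noise increments are controlled by $\Delta$. By Arzelà–Ascoli, every sequence $t_n\to\infty$ has a subsequence along which $X_{t_n}\to Z$ uniformly. Passing to the limit in the integral identity, using $A\to0$ and $\Delta\to0$, gives $Z(h)=Z(0)+\int_0^hF(Z(u))\,du$, so $Z$ is an integral curve. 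Because integral curves are unique by assumption, $Z=\Phi_\cdot(Z(0))$. The flow map $\Phi$ is continuous, so $\Phi_\cdot(X(t_n))\to\Phi_\cdot(Z(0))$ uniformly on $[0,T]$, using compactness of $[0,T]$ and continuous dependence. Therefore $\sup_h\|X(t_n+h)-\Phi_h(X(t_n))\|\to0$ along the subsequence. Since every sequence $t_n\to\infty$ has such a subsequence, the limit holds along all $t\to\infty$, which is the APT property.

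Finally, the equivalence of the two forms of condition (1) is routine. The sums $\sum_{i=k}^{n-1}\eta_{i+1}U_{i+1}$ are exactly $\int_{\tau_k}^{\tau_n}\overline{U}$. For general $t$ and $h$, the two forms differ by at most two partial-step terms, each bounded by $\eta_{k+1}\|U_{k+1}\|$, which the same condition forces to $0$.
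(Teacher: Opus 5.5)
The paper does not prove this proposition itself but imports it from Bena\"\i m, and your argument is essentially the standard proof given there. It splits $X(t+h)-X(t)$ into drift, a discretization error that vanishes by uniform continuity of $F$ and $\eta_k\to 0$, and a noise term controlled by $\Delta(t,T)$, then uses Gronwall in the Lipschitz case and Arzel\`a--Ascoli plus uniqueness of integral curves and continuity of the flow in the bounded case. Two small fixes: take $K$ convex (e.g.\ the closed ball of radius $\sup_k\|x_k\|$) so the interpolation segments and limit curves $Z$ lie where $F$ is uniformly continuous; and in case (3), keep the comparison curve $\Phi_u(X(t))$ inside the region where $F$ is Lipschitz, which, if that neighbourhood contains a uniform tube around the iterates, follows from a first-exit-time argument combined with your Gronwall bound.
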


\begin{remark} Inspecting the proof of this proposition in \cite{benaim1999dynamics} shows that its conclusion remains valid if instead of the assumption that $F$ is globally integrable, we only assume that the trajectories of $F$ exist on $\RR_+$ and are uniquely determined. 
\end{remark}

The boundedness property required by the second condition is central to studying the long-term behavior of the process. However, establishing this property for discrete processes is challenging. 
It is this difficulty that leads most prior work on stochastic processes to simply assume boundedness. Other authors impose the Lipschitz condition from the third condition instead, and use it to derive an almost sure bound on the iterates \cite{mertikopoulos2020almost}.

In contrast, 
we will give a careful analysis of the SGD iterates and prove that they remain bounded almost surely, see Section \ref{sec:boundedness}.

As seen in Proposition \ref{prop:apt_property}, the first critical condition for APT is technical.
However, Proposition \ref{prop:apt_property} gives no route to establish it. Proposition \ref{prop:crucial_apt_tool} fills this gap and constitutes a reference tool that helps to show this technical condition. It relies on Robbins-Monro algorithms, also known as Martingale Difference Noise. We first define Robbins-Monro algorithms and then state Proposition \ref{prop:crucial_apt_tool}.

\begin{definition}[Robbins-Monro Algorithm \cite{benaim1996asymptotic}]\label{def:robbins_monro_algo}
	Consider a sequence of positive integers $\{\eta_k\}_{k\in\NN}$ which satisfies $\displaystyle\sum_{k=1}^{\infty}\eta_k=\infty$ and $\displaystyle \lim_{k\to\infty}\eta_k=0.$ Let $F: \RR^m\to\RR^m$ be a continuous map and $\{x_k\}_{k\in\NN}$ a process in $\RR^m$ defined as follows:
	\begin{align}\label{eq:robbins_monro_process}
		x_{k+1} = x_k + \eta_{k+1} \left(F(x_k) + U_{k+1}\right),
	\end{align}
	where $U_k\in\RR^m$ are (deterministic or random) perturbations. Let $\left(\Omega, \mathcal{F}, P\right)$ be a probability space and $\{\mathcal{F}_k\}$ an increasing sequence of sub-$\sigma$-algebras of $\mathcal{F}.$ 
	
	The stochastic process $\{x_k\}$ defined in \eqref{eq:robbins_monro_process} satisfies the Robbins-Monro or Martingale Difference Noise Condition if the following hold
	\begin{itemize}
		\item [(a)] $\{\eta_k\}_{k\in\NN}$ is a deterministic sequence.
		\item [(b)] $\{U_k\}_{k\in\NN}$ is adapted: $U_k$ is measurable with respect to $\mathcal{F}_k$ for each $k\geq0.$
		\item[(c)] $\E[U_{k+1}|\mathcal{F}_k]=0.$ 
	\end{itemize}
\end{definition}

\begin{proposition}[Proposition 4.2, \cite{benaim1996asymptotic}] \label{prop:crucial_apt_tool}
	Let the process $\{x_k\}_{k\in\NN}$ defined in \eqref{eq:robbins_monro_process} be a Robbins-Monro algorithm. Assume that for some $q\geq 2$
	\begin{align*}
		\sup_{k}\E\left[\|U_{k}\|^q\right] < \infty 
	\end{align*}
	and
	\begin{align*}
		\sum_{k=0}^{\infty}\eta_k^{1+\frac{q}{2}} < \infty.
	\end{align*}
	Then with probability one condition (1) in Proposition \ref{prop:apt_property} holds, i.e., 
	for all $T>0,$
	\begin{align*}
		\limsup_{n\to\infty}\left\{\left\|\sum_{i=n}^{k-1}\eta_{i+1}U_{i+1}\right\| : k=n+1,\dots,m(\tau_n+T)\right\}=0,
	\end{align*}
	or equivalently
	\begin{align*}
		\lim_{t\to\infty}\Delta(t,T)=0 \text{ with} ~\Delta(t,T)=\sup_{0\leq h\leq T}\left\|\int_{t}^{t+h}\overline{U}(s)ds\right\|
	\end{align*}
	where $\overline{U}$ is as defined in $\eqref{def: U-bar eta-bar}$
\end{proposition}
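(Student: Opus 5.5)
The statement to prove is Proposition~\ref{prop:crucial_apt_tool}. The plan is to run the standard martingale argument of Benaïm, controlling a discrete martingale with Burkholder's inequality and a union bound over blocks.

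Set $M_n=\sum_{i=0}^{n-1}\eta_{i+1}U_{i+1}$. By conditions (b) and (c) of Definition~\ref{def:robbins_monro_algo}, $(M_n)$ is an $\mathcal{F}_n$-martingale, and for $n<k$ we have $\sum_{i=n}^{k-1}\eta_{i+1}U_{i+1}=M_k-M_n$. It therefore suffices to show that almost surely
\[
\sup_{n<k\le m(\tau_n+T)}\|M_k-M_n\|\to 0 .
\]

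Fix $T>0$ and partition time into consecutive blocks of length about $T$. Take indices $n_0=0$ and $n_{j+1}=m(\tau_{n_j}+T)+1$. Any window $\{n,\dots,m(\tau_n+T)\}$ is contained in the union of at most two consecutive blocks $[n_j,n_{j+2}]$. Hence the supremum above is bounded by $4\sup_{n_j\le k\le n_{j+2}}\|M_k-M_{n_j}\|$, where $j$ is the block containing $n$. The task is thus reduced to showing that
\[
\Delta_j:=\sup_{n_j\le k\le n_{j+1}}\|M_k-M_{n_j}\|\to 0 \quad\text{a.s.}
\]

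To control $\Delta_j$, apply Doob's maximal inequality together with the Burkholder inequality for $q\ge2$:
\[
\E[\Delta_j^q]\le C_q\,\E\Big[\Big(\sum_{i=n_j}^{n_{j+1}-1}\eta_{i+1}^2\|U_{i+1}\|^2\Big)^{q/2}\Big].
\]
Write $\eta_{i+1}^2=\eta_{i+1}\cdot\eta_{i+1}$ and apply Hölder's inequality with exponents $q/2$ and $q/(q-2)$, using weights $\eta_{i+1}$ whose block sum is at most $T+\sup\eta$. This gives
\[
\E[\Delta_j^q]\le C_q (T+\bar\eta)^{q/2-1}\sum_{i=n_j}^{n_{j+1}-1}\eta_{i+1}^{1+q/2}\,\E\|U_{i+1}\|^q .
\]
Since $\sup_k\E\|U_k\|^q=:K<\infty$, summing over $j$ yields
\[
\sum_j \E[\Delta_j^q]\le C_q K (T+\bar\eta)^{q/2-1}\sum_k \eta_k^{1+q/2}<\infty .
\]
Markov's inequality then gives $\sum_j P(\Delta_j>\varepsilon)<\infty$ for every $\varepsilon>0$, and the Borel--Cantelli lemma shows $\Delta_j\to0$ almost surely.

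To obtain the statement for all $T>0$ simultaneously, apply the argument for integer values of $T$ and take a countable intersection of full-measure events. This is enough because the supremum is monotone in $T$.

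The equivalence with $\Delta(t,T)\to0$ holds because $\int_{\tau_n}^{\tau_k}\overline U\,ds=M_k-M_n$, and a general $t$ differs from the nearest grid point by a partial step of size at most $\eta\|U\|$. That error tends to zero, since $\eta_k^{1+q/2}\|U_k\|^q$ is summable in expectation and hence $\eta_k\|U_k\|\to0$ almost surely.

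The main obstacle is the Burkholder–Hölder step. The summability $\sum\eta_k^{1+q/2}<\infty$ must be matched exactly by distributing the powers of $\eta$ so that the block length $T$ absorbs the excess.
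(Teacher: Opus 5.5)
Your proof is correct. Its engine is the same as the paper's, which proves the statement in the more general form of Proposition~\ref{prop:crucial_apt_tool_update}, following Bena\"\i m: Burkholder's inequality; the H\"older splitting of $\eta_{i+1}^2$ so that the time length of a block absorbs the surplus power of $\eta$; summation over consecutive blocks to produce $\sum_k \eta_k^{1+q/2}$; then Markov and Borel--Cantelli.

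What differs is how time is cut up. The paper fixes a continuous window $[t,t+T]$ and uses the truncated weights $a_i=\bigl|[\tau_i,\tau_{i+1})\cap[t,t+T]\bigr|$. Then $\Delta(t,T)$ is exactly the running maximum of the martingale $\sum_{i<k}a_iU_{i+1}$, and the weights sum to exactly $T$. Since $a_i\le\eta_{i+1}$, one gets $\sum_i a_i^{1+q/2}\le\int_t^{t+T}\overline{\eta}(s)^{q/2}\,ds$. Summing over $t=kT$ and using $\Delta(t,T)\le 2\Delta(kT,T)+\Delta((k+1)T,T)$ yields the continuous-time form with no further work.

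You work instead with full steps on deterministic index blocks. This costs an overshoot of at most $\sup_k\eta_k$ in the block length, the two-block covering argument, and a separate passage to $\Delta(t,T)$. In return, you obtain the discrete form of condition (1) directly by a purely discrete argument. In that passage, there is also a partial step at the right endpoint $t+h$, and $t+T$ may exceed $\tau_{m(t)}+T$ by up to $\eta_{m(t)+1}$. Both are handled by your observation $\eta_k\|U_k\|\to0$ a.s.\ and by invoking the discrete statement with $T+1$ in place of $T$.

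The paper's device also extends verbatim to $\mathcal{F}_k$-measurable step sizes, because the $a_i$ stay predictable and still sum to the deterministic value $T$. That is why it is used for Proposition~\ref{prop:crucial_apt_tool_update}. With adaptive steps, your block endpoints $n_j$ would become random stopping times, and the blockwise Burkholder estimate would need more care.
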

Proposition \ref{prop:crucial_apt_tool} only covers deterministic stepsizes $\eta_k$ as it uses the Robbins-Monro sequence. Later, in Proposition \ref{prop:crucial_apt_tool_update}, we discuss additional conditions on the noise $U_k$ and the stepsizes $\eta_k$ that lead to the conclusion of Proposition \ref{prop:crucial_apt_tool} with stochastic (adaptive) stepsizes.

In our setting, the vector field $F$ is the negative gradient of the loss function $\Tilde{L}^N$ and the noise term is the difference between the stochastic gradient and the full gradient.
Within this stochastic approximation framework, Section~\ref{sec:theory} proves that the stochastic gradient iterations \eqref{eq:sgd} satisfy the Martingale Difference Noise condition.
The interpolated discrete SGD process will then define a continuous SGD process. With this at hand, we will establish 
that the interpolated process of SGD in \eqref{eq:sgd} is an APT for the gradient flow \eqref{eq:sgf_w}. 

Lyapunov functions provide another key tool for understanding the stability of the dynamical system \eqref{eq:sgf_w}. The basic characteristics of a Lyapunov function and the notion of an equilibrium point (an important concept for defining a Lyapunov function) are provided in Definitions~\ref{def:lyapunov_fn} and \ref{def:eqlbrm_pt}, respectively, in the Appendix.
%
Lyapunov functions help to determine whether a system converges to its equilibrium point as it evolves over time. 
As we will see in Section \ref{sec:theory}, the functional $\Tilde{L}^N$ itself is a  Lyapunov function, which will be a key ingredient for our convergence analysis.

We proceed by recalling further concepts from \cite{benaim1999dynamics} that are critical to the understanding of asymptotic pseudotrajectories.
For completeness, we provide the definition of a pseudo-orbit in the appendix (Definition~\ref{def:pseudo_orbit}), describing a particular finite sequence of flow trajectories that is useful in characterizing the flow and the set upon which it is defined.
The limit set of an asymptotic pseudotrajectory is given in 
the next defintion. 
\begin{definition}[Limit Set of an APT]\label{def:apt_limit_set}
	Let $\mathcal{M}$ be a metric space and let $X~:[0,\infty)\to \mathcal{M}$ be an asymptotic pseudotrajectory of a semiflow $\Phi$. We define $$\displaystyle L(X){ := }\left\{m  \in \mathcal{M}: \;\text{there exists $(t_k)_k$ 
    with $\lim_{k\to \infty} t_k =\infty$
    such that} \; \lim_{t_k\to \infty} X(t_k)=m \right\},$$ 
	the set of all limits of convergent sequences $X(t_k),$ as $t_k\to\infty.$
\end{definition}
Theorem~\ref{thm:chain_transitive_set} in the appendix characterizes the limit set of a precompact pseudotrajectory $X$ for a flow $\Phi$, in particular, it indicates that the flow $\Phi$ is internally chain transitive (see Definition~\ref{def:pseudo_orbit} (2)) on the limit set.
%
%
Moreover, Proposition \ref{prop:chain_transitive_set} in the appendix shows that a Lyapunov function can be constant on a chain internally transitive set. We will see in Section \ref{sec:theory} how Theorem~\ref{thm:chain_transitive_set} and Proposition~\ref{prop:chain_transitive_set} contribute to deriving the convergence property of SGD.

Proposition~\ref{prop:crucial_apt_tool_update} below is concerned with random (possibly adaptive) stepsizes. It provides conditions that the noise $U_k$ and the stepsizes $\eta_k$ must meet for condition $(1)$ in Proposition~\ref{prop:apt_property} to hold.
Proposition~\ref{prop:crucial_apt_tool_update} appeared in \cite[Remark 4.3]{benaim1999dynamics}, but is  explicitly proved in that paper only in the deterministic case. Thus for completeness we provide a proof in  Section~\ref{prf:crucial_apt_tool_update}.

\begin{proposition}\label{prop:crucial_apt_tool_update}
	Let $\{\F_k\}$ and $\{U_k\}$  be as in Definition \ref{def:robbins_monro_algo}. Assume that the stepsizes $\eta_{k+1}$ are $\F_k$ measurable for all $k \in \NN$. Suppose that for some $q\geq 2$ the noise $U_k$ satisfies
	\begin{align*}
		\sup_{k}\E\left[\|U_{k+1}\|^q\mid \F_k\right] \leq C,\quad \text{for some deterministic constant $C<\infty$}
	\end{align*}
	and the stepsizes $\eta_k\geq 0$ satisfy $\displaystyle\sum_{k=1}^{\infty}\eta_k=\infty$ almost surely and 
	\begin{align*} 
		\E\left[\sum_{k=1}^{\infty}\eta_k^{1+\frac{q}{2}}\right] < \infty.
	\end{align*}
Then the conclusion of Proposition \ref{prop:crucial_apt_tool} holds, i.e.,  with probability one, 
	for all $T>0$
	\begin{align*}
		\limsup_{n\to\infty}\left\{\left\|\sum_{i=n}^{k-1}\eta_{i+1}U_{i+1}\right\| : k=n+1,\dots,m(\tau_n+T)\right\}=0,
	\end{align*}
	or equivalently
	\begin{align*}
		\lim_{t\to\infty}\Delta(t,T)=0 \quad \text{ with} \quad \Delta(t,T)=\sup_{0\leq h\leq T}\left\|\int_{t}^{t+h}\overline{U}(s)ds\right\|.
	\end{align*}
\end{proposition}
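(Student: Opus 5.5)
We follow the martingale argument behind Proposition~\ref{prop:crucial_apt_tool}, modified so that randomness of the stepsizes enters only through predictable weights. Set $M_n=\sum_{i=0}^{n-1}\eta_{i+1}U_{i+1}$. Because $\eta_{i+1}$ is $\F_i$-measurable and $\E[U_{i+1}\mid\F_i]=0$, the increments $\eta_{i+1}U_{i+1}$ are martingale differences. The target quantity is $\sup_{n<k\le m(\tau_n+T)}\|M_k-M_n\|$, and we must show it tends to zero almost surely for every fixed $T$. It suffices to treat rational $T$, or even $T\in\NN$, since the quantity is monotone in $T$.

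\textbf{Step 1 (moment bound via Burkholder).} We apply the Burkholder--Davis--Gundy inequality conditionally on $\F_n$, to the martingale $k\mapsto M_k-M_n$ stopped at the random time $m(\tau_n+T)$. Because the $\eta$'s are predictable, this is a stopping time: $\{m(\tau_n+T)\ge k\}=\{\tau_k\le \tau_n+T\}\in\F_{k-1}$. We obtain
\[
\E\Bigl[\sup_{n<k\le m(\tau_n+T)}\|M_k-M_n\|^q\Bigr]\le C_q\,\E\Bigl[\Bigl(\sum_{i=n}^{m(\tau_n+T)-1}\eta_{i+1}^2\|U_{i+1}\|^2\Bigr)^{q/2}\Bigr].
\]
Hölder's inequality with weights $\eta_{i+1}$ then gives
\[
\Bigl(\sum\eta_{i+1}\cdot\eta_{i+1}\|U_{i+1}\|^2\Bigr)^{q/2}\le\Bigl(\sum\eta_{i+1}\Bigr)^{q/2-1}\sum\eta_{i+1}^{1+q/2}\|U_{i+1}\|^q ,
\]
and on the summation range $\sum\eta_{i+1}\le T$. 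Using the predictability of $\eta_{i+1}$ together with the conditional bound $\E[\|U_{i+1}\|^q\mid\F_i]\le C$, we can take the conditional expectation term by term. This is legitimate because the indicator $\mathbf 1\{i<m(\tau_n+T)\}$ is $\F_i$-measurable. The result is
\[
\E\bigl[\sup\|M_k-M_n\|^q\bigr]\le C_q C\,T^{q/2-1}\,\E\Bigl[\sum_{i\ge n}\eta_{i+1}^{1+q/2}\Bigr]=:C'\,r_n ,
\]
where $r_n\to0$ by the integrability hypothesis and dominated convergence.

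\textbf{Step 2 (from moments to almost sure convergence).} Here lies the main obstacle. In the deterministic proof one sums the bounds over a deterministic sequence of blocks. With random stepsizes the block boundaries $n_j$ (defined by $\tau_{n_{j+1}}\approx\tau_{n_j}+T$) are random times. We would therefore define blocks through stopping times, setting $n_{j+1}=m(\tau_{n_j}+T)$, which is predictable in the sense above. For each block we bound the $q$-th moment by $C_qCT^{q/2-1}\E[\sum_{i=n_j}^{n_{j+1}-1}\eta_{i+1}^{1+q/2}]$, which we justify by optional sampling conditional on $\F_{n_j}$. The blocks are disjoint, so summing over $j$ gives a total of at most $C'\E[\sum_k\eta_k^{1+q/2}]<\infty$. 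Markov's inequality and Borel--Cantelli then show that the block oscillations $\sup_{n_j\le k\le n_{j+1}}\|M_k-M_{n_j}\|$ tend to $0$ almost surely. Since $\sum\eta_k=\infty$ a.s., the $n_j$ are finite and cover all indices.

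\textbf{Step 3 (conclusion).} Any window $[n,m(\tau_n+T)]$ meets at most two consecutive blocks, provided the blocks have length at least $T$ in $\tau$-time. Otherwise we use blocks of size $T$ and windows of size $T$, which overlap at most three blocks. A triangle inequality then bounds the window supremum by three times the maximum block oscillation over nearby blocks, and this tends to $0$. The equivalence with $\Delta(t,T)\to0$ holds exactly as in Proposition~\ref{prop:crucial_apt_tool}, because $\int_t^{t+h}\overline U$ differs from the discrete sums only by boundary terms of the form $\eta_{k+1}\|U_{k+1}\|$. These boundary terms vanish almost surely, since $\sum\E[\eta_{k+1}^q\|U_{k+1}\|^q]\le C\,\E\sum\eta_{k+1}^q<\infty$ when $\eta$ is eventually small (for $q\ge2$, $\eta^q\le\eta^{1+q/2}$ once $\eta\le1$, and $\eta_k\to0$ a.s. follows from the summability hypothesis).
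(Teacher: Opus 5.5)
Your proof is correct in substance, but it handles the random stepsizes differently from the paper. The paper keeps the blocks deterministic. It fixes a continuous time $t$ and replaces the steps by the truncated weights $a_i=|[\tau_i,\tau_{i+1})\cap[t,t+T]|$, which are $\F_i$-measurable because $\eta_{i+1}$ is. It then applies Burkholder and H\"older to the martingale $\sum_i a_iU_{i+1}$, using $\sum_i a_i=T$ and $a_i\le\eta_{i+1}$, to get $\E[\Delta(t,T)^q]\le CC_qT^{q/2-1}\E\int_t^{t+T}\overline{\eta}(s)^{q/2}\,ds$. Summing over the deterministic grid $t=kT$, Borel--Cantelli and $\Delta(t,T)\le 2\Delta(kT,T)+\Delta((k+1)T,T)$ finish the proof.

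You instead stay in discrete time and make the block endpoints $n_{j+1}=m(\tau_{n_j}+T)$ stopping times. The facts you need do hold:
\begin{itemize}
\item $\{n_j\le i<n_{j+1}\}=\{n_j\le i\}\cap\{\tau_{i+1}\le\tau_{n_j}+T\}\in\F_i$, so $k\mapsto\sum_{i<k}\mathbf{1}\{n_j\le i<n_{j+1}\}\eta_{i+1}U_{i+1}$ is a martingale with predictable weights.
\item The disjoint blocks contribute in total at most $\E\sum_k\eta_k^{1+q/2}$.
\item For $n_j\le n<n_{j+1}$, monotonicity of $m$ gives $m(\tau_n+T)\le m(\tau_{n_{j+1}}+T)=n_{j+2}$. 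So every window meets at most two consecutive blocks, and your hedge about blocks of length ``at least $T$'' is unnecessary.
\end{itemize}
The paper's truncation avoids optional sampling, handles long steps automatically, and yields $\Delta(t,T)\to0$ directly. Your route yields the discrete index-based form directly, closer to Bena\"\i m's formulation.

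Two steps need repair.

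First, your greedy blocks can stall. If $\eta_{n_j+1}>T$, then $m(\tau_{n_j}+T)=n_j$, so $n_{j'}=n_j$ for all $j'\ge j$ and the $n_j$ do not cover all indices. Nothing in the hypotheses excludes this: the steps are random, and only $\eta_k\to0$ a.s.\ is available. Take $n_{j+1}=\max\{m(\tau_{n_j}+T),n_j+1\}$, and measure the oscillation of block $j$ only up to $m(\tau_{n_j}+T)$. A step longer than $T$ lies in no window, so nothing is lost and the moment sum is unchanged.

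Second, your control of the boundary terms invokes $\E\sum_k\eta_{k+1}^q<\infty$, which does not follow from the hypotheses when $q>2$. An $\F_0$-measurable $\eta_1$ with $\E[\eta_1^{1+q/2}]<\infty=\E[\eta_1^q]$ is allowed, and almost-sure eventual smallness does not bound an expectation. There are two easy fixes:
\begin{itemize}
\item Use $\min\{\eta_{k+1},1\}^q\le\eta_{k+1}^{1+q/2}$ together with $\eta_k\to0$ a.s.
\item Note that the discrete condition with $k=n+1$, admissible once $\eta_{n+1}\le T$, already gives $\eta_{n+1}\|U_{n+1}\|\to0$ a.s.
\end{itemize}
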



Corollary \ref{cor:apt_convergence} is a key tool for the convergence analysis of stochastic approximation. It provides conditions on a Lyapunov function that ensure convergence of an APT.  
We will use it to prove  convergence of $\Tilde{L}^N\big(\W(k)\big)$ to the critical set.
\begin{corollary}[Corollary 6.7, \cite{benaim1999dynamics}]\label{cor:apt_convergence}
	Assume $\mathcal{M}$ is a smooth $C^r$ Riemannian manifold of dimension $m\geq 1.$ Consider a $C^r$ map $V: \mathcal{M}\to\RR$ and let $F$ be the gradient vector field
	\begin{align*}
		F(x)=-\nabla V(x).
	\end{align*}
	Further assume
	\begin{enumerate}
		\item F induces a global flow $\Phi$
		\item $X$ is a precompact asymptotic pseudotrajectory of $\Phi$
		\item $r\geq m$
	\end{enumerate}
	Then $L(X)$ consists of equilibria and $V(X(t))$ converges as $t$ tends to $\infty.$  
\end{corollary}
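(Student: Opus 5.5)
The plan is to combine three ingredients: the structure of limit sets of precompact asymptotic pseudotrajectories (Theorem~\ref{thm:chain_transitive_set}), the behaviour of Lyapunov functions on internally chain transitive sets (Proposition~\ref{prop:chain_transitive_set}), and Sard's theorem, which is where the assumption $r\geq m$ enters. Throughout, let $\Lambda=\{x\in\mathcal{M}: \nabla V(x)=0\}$ denote the set of equilibria of $F=-\nabla V$.

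\textbf{Step 1: structure of $L(X)$.} Since $X$ is a precompact asymptotic pseudotrajectory of the global flow $\Phi$, Theorem~\ref{thm:chain_transitive_set} gives that $L(X)$ is nonempty, compact, invariant, and internally chain transitive for $\Phi$. Precompactness also gives
\[
\operatorname{dist}\bigl(X(t),L(X)\bigr)\to 0 \quad\text{as } t\to\infty .
\]
Indeed, otherwise some sequence $X(t_k)$ would stay at distance at least $\varepsilon$ from $L(X)$. By precompactness it would have a convergent subsequence, and the limit of that subsequence would lie in $L(X)$, a contradiction.

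\textbf{Step 2: $V$ is a strict Lyapunov function for $\Lambda$.} Along trajectories,
\[
\frac{d}{dt}V(\Phi_t(x))=-\|\nabla V(\Phi_t(x))\|^2\leq 0 ,
\]
with equality exactly when $\Phi_t(x)\in\Lambda$. So $V$ is nonincreasing along the flow and strictly decreasing off $\Lambda$. This is exactly the hypothesis needed to apply Proposition~\ref{prop:chain_transitive_set} to the pair $(V,\Lambda)$.

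\textbf{Step 3: Sard's theorem.} The set $V(\Lambda)$ is the set of critical values of the $C^r$ map $V:\mathcal{M}\to\RR$ from an $m$-dimensional manifold. Sard's theorem applies as soon as $r\geq \max(1,m-1+1)=m$, which is precisely assumption (3). It gives that $V(\Lambda)$ has Lebesgue measure zero, hence empty interior in $\RR$.

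\textbf{Step 4: conclusion.} With $V(\Lambda)$ having empty interior, Proposition~\ref{prop:chain_transitive_set} yields that every internally chain transitive set $A$ satisfies $A\subset\Lambda$ and that $V|_A$ is constant. The underlying mechanism is as follows. If $V$ took two values on $A$, chain transitivity together with invariance would force $V(A)$ to contain an interval. But $V(A\cap\Lambda)\subset V(\Lambda)$ has empty interior, so some level $c$ would be attained only off $\Lambda$, where $V$ strictly decreases. Then $\{V<c\}$ would act as a trapping region that small-jump chains cannot leave, which contradicts transitivity.

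Applying this to $A=L(X)$ gives $L(X)\subset\Lambda$, i.e.\ $L(X)$ consists of equilibria, and $V\equiv v$ on $L(X)$ for some constant $v$. Finally, $V$ is uniformly continuous on the compact closure of $X(\RR_+)$, and $\operatorname{dist}(X(t),L(X))\to0$ by Step~1. Hence $V(X(t))\to v$ as $t\to\infty$.

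\textbf{Main obstacle.} The delicate point is Step~4, namely showing that a Lyapunov function whose critical values have empty interior must be constant on an internally chain transitive set. The rest is bookkeeping, plus checking that the Sard regularity threshold matches $r\geq m$. Since Proposition~\ref{prop:chain_transitive_set} is available, I would invoke it directly and only verify that its hypotheses are met.
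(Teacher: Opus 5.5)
Your overall route is the paper's: Theorem~\ref{thm:chain_transitive_set}, then Sard, then Proposition~\ref{prop:chain_transitive_set}, then a limit argument for $V(X(t))$. But the hypothesis check in Steps 2 and 4 has a hole.

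\textbf{The gap.} Proposition~\ref{prop:chain_transitive_set}, and already Definition~\ref{def:lyapunov_fn}, require $\Lambda$ to be a \emph{compact} invariant set. You take $\Lambda$ to be the full set of equilibria. That set is closed and invariant, but in general not compact. For example, $V(x)=x_1^2$ on $\RR^m$ with $m\geq 2$ has a global flow, and $\operatorname{Crit}(V)$ is a hyperplane. So the proposition cannot be invoked as you state it, and compactness of $\Lambda$ is exactly the hypothesis your ``only verify that its hypotheses are met'' step misses.

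\textbf{The paper's repair.} It takes $\Lambda:=\operatorname{Crit}(V)\cap L(X)$, which is compact because $L(X)$ is. It then applies Proposition~\ref{prop:chain_transitive_set} on the metric space $L(X)$ with the restricted semiflow, which is well defined because $L(X)$ is invariant. Relative to that semiflow:
\begin{itemize}
\item $V|_{L(X)}$ is a Lyapunov function for $\Lambda$;
\item $V(\Lambda)\subset V(\operatorname{Crit}(V))$ still has empty interior;
\item $L(X)$ is internally chain transitive.
\end{itemize}
Both halves of this device are needed. If you only intersect with $L(X)$ but keep $\mathcal{M}$ as the ambient space, equilibria outside $L(X)$ lie in $\mathcal{M}\setminus\Lambda$ but have constant $V$. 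That violates the strict-decrease requirement.

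Alternatively, your Step~4 sketch suggests the trapping argument only uses compactness of the chain transitive set, not of $\Lambda$. That is plausible, but then you are reproving the proposition rather than citing it. The sketch would have to be carried out: a uniform drop of $V$ on the compact sublevel set $\{x\in A: V(x)\leq v\}$ with $v\notin V(\Lambda)$, and then $A\subset\Lambda$ from constancy of $V$ on $A$ plus invariance.

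\textbf{A smaller point in Step 2.} ``Strictly decreasing off $\Lambda$'' needs that a trajectory starting outside $\Lambda$ never reaches $\Lambda$. Under assumption (1) this takes one line: $\Phi_{t_0}(x)=y\in\Lambda$ forces $x=\Phi_{-t_0}(y)=y$. You should say it. The paper argues differently, with a minimal-time argument based on forward uniqueness. It does so because it proves the corollary under the weaker hypothesis of Remark~\ref{adapted_assumption}, which is the version used in the proof of Theorem~\ref{thm:sgd-conv2cset}. Your backward-flow shortcut would not cover that variant.

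The rest is sound and equivalent to the paper's subsequence argument: Sard with $r\geq m$, and $V(X(t))\to v$ via $\operatorname{dist}(X(t),L(X))\to 0$ and uniform continuity of $V$ on the compact closure of the image of $X$.
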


Condition (1) here means that for any $x$, the differential equation $\dot \Phi_x(t)=F(\Phi_x(t)),\ \Phi_x(0)=x$ has a unique solution defined for all $t\in \mathbb R$. 

 To highlight some of the crucial ingredients to our result and because we need a slightly stronger variant of this corollary (see the remark below), we repeat the proof that can be found in \cite{benaim1999dynamics}, involving a few adaptions and  additional details, see Section \ref{prf:apt_convergence}.  

\begin{remark} \label{adapted_assumption} Corollary  \ref{cor:apt_convergence} remains valid if  assumption (1) is replaced by the  assumption: 
    For any $x\in \mathcal M$, there is an $\varepsilon(x)>0$ such that the differential equation $\dot \Phi_x(t)=F(\Phi_x(t)),\ \Phi_x(0)=x$ has a  solution defined for all $t\in (-\varepsilon(x),\infty)$ and the solution is unique where it exists (that is,  if $u$ and $v$ are solutions defined on intervals $I_u$ resp. $I_v$ that both contain $0$ and  if $u(0)=v(0)=x$, then $u=v$ on $I_u\cap I_v$). 
\end{remark}

Under this assumption (and using that $F$ is continuous) the semiflow map $\Phi\colon \mathbb R_+\times \mathcal{M} \to \mathcal M, \ (t,x)\mapsto \Phi_x(t)$ is continuous, this follows e.g. from \cite[Theorem 4.3]{CoddingtonLevinson1955}.
Also the flow properties $\Phi_0(x) = x$ and $\Phi_t \circ \Phi_s (x) = \Phi_{t+s}(x)$ are clearly fulfilled. So we obtain indeed a semiflow.

\subsection{Our setting}\label{sec:prelim-setting}
Now we adapt some of these analysis tools into our problem setting. The basic training problem of linear neural networks, we are considering here, corresponds to the nonconvex optimization problem of the square loss function with a regularization term. 
We aim at minimizing the regularized square loss function $\Tilde{L}^N$ $\eqref{eq:regularized_loss}$ from Section \ref{sec:problem&results} via stochastic gradient descent. 
We study the long-term behavior of SGD \eqref{eq:sgd} through the asymptotic properties of the gradient flow equation \ref{eq:sgf_w}. 
The key link is that the SGD process \eqref{eq:sgd} approximates the solutions of the gradient flow equation \eqref{eq:sgf_w}.

We denote the flow originating from \eqref{eq:sgf_w} as $\phi.$ 
Definition \ref{def:apt4lnn} adapts the notion of asymptotic pseudotrajectory (APT) introduced in \cite{benaim1999dynamics} to the context of learning linear neural networks.
\begin{definition}\label{def:apt4lnn}
	A continuous function 
    $\overrightarrow{X}:\RR_+\to \RR^{d_N\times d_{N-1}}\times \dots \times\RR^{d_1\times d_{0}}$
    is an asymptotic pseudotrajectory for $\phi$ if \begin{align*}
		\lim_{t\to\infty}\sup_{0\leq h\leq T}\norm{\overrightarrow{X}(t+h) - \phi_h(\overrightarrow{X}(t))}_F = 0,
	\end{align*} 
	for any $T>0$ and $j=1,\dots,N.$
\end{definition}

We introduce the noise term
\begin{align}\label{grad noise}
	\overrightarrow{U}(k+1)=\nabla \Tilde{L}^N_{S_k}\big(\W(k)\big)-\nabla \Tilde{L}^N\big(\W(k)\big),
\end{align}
so that the stochastic gradient can be written over iterations as the deterministic gradient plus the noise
\begin{align*}
	\nabla\Tilde{L}^N_{S_k}\big(\W(k)\big)= \nabla \Tilde{L}^N\big(\W(k)\big) +\overrightarrow{U}(k+1).
\end{align*}
We denote by $\mathcal{F}_k:=\sigma\left(\left\{\W(0),\W(1),\dots,\W(k)\right\}\right)$  the $\sigma$-algebra generated by $\left\{\W(0),\W(1),\dots,\W(k)\right\}$, for $k\in\NN$. 
Note that this gives rise to a natural filtration of the $\mathcal{F}_k$.
 

We set $\tau_k=\displaystyle\sum_{l=1}^{k}\eta_l,$ and introduce the interpolated process $\overrightarrow{W}(t)$, $t \in [0,\infty)$, as
\begin{align}\label{eq:interpoolated_process_w}
	\overrightarrow{W} (t)=\W(k)+\frac{t-\tau_k}{\tau_{k+1}-\tau_k}\left(\W(k+1)-\W(k)\right) \text{ for } t\in [\tau_k, \tau_{k+1}],~ k\in\NN.
\end{align}

\section{Theoretical Results}\label{sec:theory}  
In this section, we will always consider the SGD iterates $\overrightarrow{W}(k)$ defined in \eqref{eq:sgd} for the regularized loss \eqref{eq:regularized_loss} with the regularizer \eqref{eq:regularizer}. 
 The convergence analysis of the discrete process $\W(k)$ hinges on proving that its interpolation $\overrightarrow{W}(t)$ is an asymptotic pseudotrajectory. We achieve this by showing that $\overrightarrow{W}(t)$ meets the conditions of Proposition~\ref{prop:apt_property}.
We start by defining the constants
\begin{align}
	P = P_T & :=\frac{1}{T} \max_{S\subset [m], |S| = T} \|X_S\|^2,\label{def:p_t}\\
	Q = Q_T & := \frac{1}{T}\max_{S\subset [m], |S| = T} \|Y_S X_S^T\|.\label{def:q_t}
\end{align}

\subsection{Boundedness of SGD iterates}\label{sec:boundedness} 
  We start by showing boundedness of the SGD iterates both for deterministic and stochastic (adaptive) cases. 
  
  \subsubsection{Deterministic step sizes}
  The next proposition 
  guarantees boundedness of the SGD iterates for learning deep linear neural networks with 
  deterministic  stepsizes.
\begin{proposition}\label{prop:bounded} Let $\lambda > 0$ and $\beta \geq N$ be regularization parameters.
	Set $c_\beta = \beta 2^{1-\beta}$.
	If $\beta > N,$  we set
	\[
	M = \max\left\{\left(\frac{2P_T\sqrt{N}}{\lambda c_\beta}\right)^{\frac{1}{2(\beta - N)}},  \left(\frac{2Q_T\sqrt{N}}{\lambda c_\beta}\right)^{\frac{1}{2\beta-N}}\right\}.\]
	If $\beta = N$, we assume that $\lambda > \frac{P_T\sqrt{N}}{c_\beta}$ and set
	\[
	M = \left(\frac{Q_T\sqrt{N}}{\lambda c_\beta - P_T\sqrt{N}}\right)^{\frac{1}{N}}.\]
	Let
	\begin{equation}
		B := \frac{\sqrt{N}\left(P_T M^{2N-1} + Q_T M^{N-1}\right)}{\lambda \beta \max\left\{\frac{1}{2}\|\W(0)\|_F^2, \frac{1}{2}M^2\right\}^{\beta-1}}, \label{def:B}
	\end{equation}
	and schedule the stepsizes as
	\begin{equation}\label{def-eta-k}
		\eta_k = \frac{b_k}{\beta\lambda \max\left\{\left(\frac{1}{2}\|\W(0)\|_F^2\right), \frac{1}{2}(M+B)^2 \right\}^{\beta-1}} ,
	\end{equation}
	where $(b_k)$ is a (deterministic) sequence satisfying $0 < b_k \leq 1$. 
	Then the stochastic gradient descent sequence $\left(\W(k)\right)_{k \in \NN}$ is bounded. More precisely,
	\begin{equation}\label{eq:bound_w}
		\|\W(k)\|_F \leq \max\{ \|\W(0)\|_F, M+B\} \quad\mbox{ for all } k \in \NN_0.
	\end{equation}
\end{proposition}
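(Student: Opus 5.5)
We argue by induction on $k$ that $\|\W(k)\|_F\le R_0:=\max\{\|\W(0)\|_F, M+B\}$. The bound holds trivially at $k=0$. Both cases are deterministic estimates: they use only that $S_k$ is some subset of size $T$, with the constants $P_T,Q_T$ bounding every such subset. Hence the bound will hold surely, not merely almost surely.

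\textbf{Gradient bounds.} First we record the gradients. Write $r=\|\W\|_F$. Then $\nabla_{W_j} R(\W) = \lambda\beta(\tfrac12 r^2)^{\beta-1} W_j$, so the regularizer contracts $\W$ radially. For the data term,
\[
\nabla_{W_j} L^N_S(\W)=\tfrac1T W_{j+1}^T\cdots W_N^T (W X_S X_S^T - Y_S X_S^T) W_1^T\cdots W_{j-1}^T .
\]
Using $\|W_i\|\le \|W_i\|_F\le r$ and submultiplicativity, we get $\|\nabla_{W_j}L^N_S\|_F\le P_T r^{2N-1}+Q_T r^{N-1}$ (up to the precise norm conventions in the definitions of $P_T,Q_T$). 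Summing over $j$ gives
\[
\|\nabla L^N_S(\W)\|_F\le \sqrt N\,(P_T r^{2N-1}+Q_T r^{N-1}).
\]

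\textbf{Large iterates.} Suppose $r\ge M$. We compare the radial contraction $\lambda c_\beta r^{2\beta-1}$ with the data gradient; note $\lambda\beta(\tfrac12r^2)^{\beta-1}r=\lambda c_\beta r^{2\beta-1}$. The choice of $M$ is exactly what is needed so that $\sqrt N P_T r^{2N-1}\le \tfrac12\lambda c_\beta r^{2\beta-1}$ and $\sqrt N Q_T r^{N-1}\le\tfrac12\lambda c_\beta r^{2\beta-1}$ when $\beta>N$. In the case $\beta=N$ the two terms combine into $\sqrt N(P_T r^{2N-1}+Q_Tr^{N-1})\le\lambda c_\beta r^{2N-1}$, which holds by the assumption on $\lambda$ and the definition of $M$. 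With $a_k:=\eta_k\lambda\beta(\tfrac12 r^2)^{\beta-1}$, the update reads
\[
\W(k+1)=(1-a_k)\W(k)-\eta_k\nabla L^N_{S_k}(\W(k)).
\]
By the induction hypothesis $r\le R_0$, and the stepsize \eqref{def-eta-k} gives $a_k\le b_k\le 1$. The triangle inequality then yields
\[
\|\W(k+1)\|_F\le (1-a_k)r+\eta_k\sqrt N(P_Tr^{2N-1}+Q_Tr^{N-1})\le(1-a_k)r+a_k r= r\le R_0 .
\]
So iterates of size at least $M$ do not grow.

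\textbf{Small iterates.} Suppose $r<M$. Then $(1-a_k)r\le r<M$, and it remains to bound the data step by $B$. We have $\eta_k\sqrt N(P_Tr^{2N-1}+Q_Tr^{N-1})\le \eta_k\sqrt N(P_TM^{2N-1}+Q_TM^{N-1})$. Since $M+B\ge M$, the denominator in \eqref{def-eta-k} is at least $\beta\lambda\max\{\tfrac12\|\W(0)\|_F^2,\tfrac12M^2\}^{\beta-1}$, and hence this quantity is at most $B$ by \eqref{def:B}. Therefore $\|\W(k+1)\|_F\le M+B\le R_0$, which closes the induction.

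\textbf{Main obstacle.} The delicate step is the large-iterate case. There we must check that the exponents in the definition of $M$ really give the claimed domination for all $r\ge M$. This needs $2\beta-1\ge 2N-1$ and $2\beta-1> N-1$, with the $\beta=N$ case treated separately. We also need $a_k\le1$ for every iterate up to radius $R_0$, so that $(1-a_k)$ is a genuine contraction rather than an overshoot. The latter is where the $\max$ in the stepsize denominator (involving $M+B$ and $\|\W(0)\|_F$) enters.
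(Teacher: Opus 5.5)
Your proof is correct and follows the paper's argument essentially step for step. It uses the same gradient bound (the paper's lemma), the same induction in which the radial factor $a_k\le b_k\le 1$ comes from the $\max$ in the stepsize denominator, and the same case split at $M$: the definition of $B$ handles small iterates, and the domination $\sqrt N(P_Tr^{2N-1}+Q_Tr^{N-1})\le \lambda c_\beta r^{2\beta-1}$ shows large iterates do not grow. The one detail to make explicit in the gradient bound is that the Frobenius norm must be carried by one of the $W_i$ factors (possible since $N\ge 2$), with $\nabla L^1_S(W)$ measured in operator norm, so that the spectral-norm constants $P_T,Q_T$ apply---exactly as in the paper's lemma.
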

The proof of this proposition is given in Section \ref{prf:bounded}.
   
\subsubsection{Adaptive step sizes}
Proposition \ref{prop:bounded} employs deterministic stepsizes to provide the boundedness of SGD. Proposition \ref{prop:bounded_adaptive} below indicates that SGD run with adaptive stepsizes also remains bounded.

%
%
%
\begin{proposition}\label{prop:bounded_adaptive}
    For $\lambda, \beta, M$ as in  Proposition~\ref{prop:bounded} and an  arbitrary constant $D>0$, assume that 
	\begin{align}\label{def adaptive etak}
		\eta_k = b_k \cdot \min\left\{  \frac{1}{\beta\lambda(\frac12\|\W(k)\|_F^2)^{\beta-1}},\, D\, \right\}, 
	\end{align} 
	where $(b_k)_{k \in \NN_0}$ is a (deterministic) 
	sequence satisfying $0 < b_k \leq 1$. (In the case $\W(k)=0$, the condition means $\eta_k=Db_k$.) Then the stochastic gradient descent sequence $\left\{\W(k)\right\}_{k\geq0}$ is bounded. More precisely,
	\begin{equation}\label{eq:bound_adaptive}
		\|\W(k)\|_F \leq \max\left\{ \|\W(0)\|_F, M +D\sqrt N(P_TM^{2N-1}+Q_TM^{N-1})\right\} \quad\mbox{ for all } k \in \NN_0.
	\end{equation}
\end{proposition}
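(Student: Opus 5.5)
We prove the bound \eqref{eq:bound_adaptive} by induction on $k$, with a per-iteration analysis that splits into two cases according to whether $\|\W(k)\|_F \le M$ or $\|\W(k)\|_F > M$. Write $R_0 := \max\{\|\W(0)\|_F, M + D\sqrt N(P_TM^{2N-1}+Q_TM^{N-1})\}$ and assume $\|\W(k)\|_F\le R_0$. The stochastic gradient of the data term for layer $j$ is $\nabla_{W_j} L^N_{S}(\W) = \frac1T W_{N:j+1}^T(W X_S - Y_S)X_S^T W_{j-1:1}^T$, and the regularizer gradient is $\nabla_{W_j}R(\W) = \lambda\beta(\tfrac12\|\W\|_F^2)^{\beta-1}W_j$. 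Using submultiplicativity and $\|W_i\|\le\|\W\|_F$, together with the definitions \eqref{def:p_t}, \eqref{def:q_t}, we get
\[
\|\nabla L^N_{S}(\W)\|_F \le \sqrt N\left(P_T\|\W\|_F^{2N-1}+Q_T\|\W\|_F^{N-1}\right).
\]
These are the same estimates that presumably underlie Proposition~\ref{prop:bounded}.

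\emph{Case $\|\W(k)\|_F\le M$.} Set $\gamma_k := \eta_k\beta\lambda(\tfrac12\|\W(k)\|_F^2)^{\beta-1}$. The adaptive choice \eqref{def adaptive etak} gives $\gamma_k\le b_k\le 1$, so the regularization step $\W\mapsto(1-\gamma_k)\W$ is a contraction. Since also $\eta_k\le D$, we obtain
\[
\|\W(k+1)\|_F\le (1-\gamma_k)\|\W(k)\|_F + D\sqrt N\left(P_TM^{2N-1}+Q_TM^{N-1}\right)\le M + D\sqrt N\left(P_TM^{2N-1}+Q_TM^{N-1}\right)\le R_0 .
\]

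\emph{Case $M<\|\W(k)\|_F\le R_0$.} Here we aim to show $\|\W(k+1)\|_F\le\|\W(k)\|_F$. Write $\W(k+1)=(1-\gamma_k)\W(k)-\eta_k G$ with $G=\nabla L^N_{S_k}(\W(k))$. By the triangle inequality,
\[
\|\W(k+1)\|_F\le(1-\gamma_k)\|\W(k)\|_F+\eta_k\|G\|_F ,
\]
so it suffices to prove $\|G\|_F\le\lambda\beta(\tfrac12\|\W(k)\|_F^2)^{\beta-1}\|\W(k)\|_F$. Dividing by $\eta_k$ removes the stepsize entirely. The gradient bound therefore reduces this to
\[
\sqrt N\left(P_Tr^{2N-1}+Q_Tr^{N-1}\right)\le \lambda c_\beta r^{2\beta-1}, \qquad r=\|\W(k)\|_F>M,
\]
where we used $\beta 2^{1-\beta}=c_\beta$.

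\emph{Verifying the scalar inequality.} For $\beta>N$ we bound each term of the left side by half of the right side. The term $P_Tr^{2N-1}$ is handled by $r^{2(\beta-N)}\ge 2P_T\sqrt N/(\lambda c_\beta)$, which holds by the first entry of the max defining $M$. The term $Q_Tr^{N-1}$ is handled by $r^{2\beta-N}\ge 2Q_T\sqrt N/(\lambda c_\beta)$, which holds by the second entry. For $\beta=N$ the inequality becomes $\sqrt N Q_T\le(\lambda c_\beta-P_T\sqrt N)r^N$, which is exactly $r\ge M$ given the assumed lower bound on $\lambda$.

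The main obstacle is ensuring the contraction factor $1-\gamma_k$ is nonnegative, so that the triangle-inequality bound is valid. The adaptive cap in \eqref{def adaptive etak} is designed precisely to guarantee $\gamma_k\le 1$. Once this is in place, the remainder is the careful bookkeeping of the gradient norm bound, particularly the $\sqrt N$ factor and the use of operator norms in \eqref{def:p_t}, \eqref{def:q_t}.
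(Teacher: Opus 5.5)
Your proposal is correct and follows essentially the same route as the paper: induction with the split $\|\W(k)\|_F\le M$ versus $\|\W(k)\|_F>M$. The first case uses $\gamma_k\le 1$ and $\eta_k\le D$ together with the gradient bound of Lemma~\ref{lem:grad-bound}; the second uses the triangle-inequality estimate \eqref{bound SGD at k+1} and the scalar inequality \eqref{z:eq}, of which your inequality is just a multiple by $r^{N-1}$. The only difference is that you write out the second case explicitly, whereas the paper refers back to the proof of Proposition~\ref{prop:bounded}.
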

Note that in practice, one would choose the constant $D$ rather large, so that the (adaptive) first term in the minimum in \eqref{def adaptive etak} applies most of the time.
The proof of this proposition is given in Section~\ref{prf:bounded_adaptive}.

The boundedness of the SGD iterates is crucial, see condition (2) in Proposition \ref{prop:apt_property}. It will be shown in Propositions~\ref{prop:bounded} and \ref{prop:bounded_adaptive}
and is instrumental 
to show that the interpolated process \eqref{eq:interpoolated_process_w} is an asymptotic pseudotrajectory.



\subsection{Convergence of SGD to the set of critical points}\label{sec:convergence}

This section states our main result on convergence of SGD. It identifies practical step sizes that guarantee convergence of the SGD to the critical set when training linear neural networks.


We denote by 
$$\operatorname{Crit}(\Tilde{L}^N): =\left\{\W\ |\ \nabla\Tilde{L}^N(\W)=0\right\}$$ the critical set of $\Tilde{L}^N$ 
and 
$$\operatorname{d}\big(\W, \operatorname{Crit}(\Tilde{L}^N) \big) = \inf_{\overrightarrow{V} \in \operatorname{Crit}(\Tilde{L}^N)} \|\overrightarrow{W} - \overrightarrow{V}\|_F
$$ 
the Euclidean distance of $\W$ to $\operatorname{Crit}(\Tilde{L}^N)$.
The next theorem summarizes the main findings of this chapter. 

\begin{theorem}\label{thm:sgd-conv2cset}
Let  $(b_k)$ be a deterministic sequence that satisfies $0 < b_k \leq 1,$ $\displaystyle\sum_{k=0}^{\infty}b_k=\infty$ and $\displaystyle\sum_{k=0}^{\infty}b^2_k<\infty$,
and schedule the stepsizes $\eta_k$ as in Proposition \ref{prop:bounded} or as in Proposition \ref{prop:bounded_adaptive}. 
    Then with probability one, the sequence $\left\{\W(k)\right\}_{k}$ generated by stochastic gradient descent 
	has the following convergence properties:
	\begin{enumerate}
		\item The sequence $\left(\Tilde{L}^N\big(\W(k)\big)\right)_{k \in \NN}$ of the loss  converges.
		\item The gradients $\nabla\Tilde{L}^N\big(\W(k)\big)$ converge to zero as $k \to \infty$.
		\item The Euclidean distance  $\operatorname{d}\big(\W(k), \operatorname{Crit}(\Tilde{L}^N) \big)$ converges to zero as $k \to \infty$.
	\end{enumerate}
\end{theorem}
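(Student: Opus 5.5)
The plan is to cast SGD \eqref{eq:sgd} as a stochastic approximation process of the form \eqref{eq:SA_for_Robbin_Monro}. The vector field is $F=-\nabla\Tilde{L}^N$, the noise is $-\overrightarrow{U}(k+1)$ from \eqref{grad noise}, and the filtration is $\F_k$. We then verify the hypotheses of Proposition~\ref{prop:apt_property}, so that the interpolated process \eqref{eq:interpoolated_process_w} is an asymptotic pseudotrajectory, and conclude with Corollary~\ref{cor:apt_convergence} in the variant of Remark~\ref{adapted_assumption}.

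The steps, in order, are as follows.

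\emph{Step 1: boundedness.} Propositions~\ref{prop:bounded} and \ref{prop:bounded_adaptive} give a deterministic bound $\|\W(k)\|_F\le K$ for all $k$, surely. This is condition (2) of Proposition~\ref{prop:apt_property}.

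\emph{Step 2: step sizes.} In the deterministic case $\eta_k=c\,b_k$, so $\sum\eta_k=\infty$, $\eta_k\to0$ and $\sum\eta_k^2<\infty$. In the adaptive case, boundedness gives $c_1 b_k\le\eta_k\le c_2 b_k$ with deterministic $c_1,c_2>0$. Here $c_1=\min\{1/(\beta\lambda(K^2/2)^{\beta-1}),D\}$ and $c_2=D$, and $\beta\ge N\ge2$ makes the first term bounded below. Hence $\sum\eta_k=\infty$ almost surely, $\eta_k\to 0$, and $\E\sum\eta_k^2\le c_2^2\sum b_k^2<\infty$. Moreover $\eta_{k+1}$ is a function of $\W(k+1)$, so it is $\F_{k+1}$-measurable. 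The index shift between \eqref{eq:sgd} and \eqref{eq:SA} just means the step used at iteration $k$ is $\F_k$-measurable, which is what is needed. The extra $\F$-measurable component of $\W(0)$ is harmless.

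\emph{Step 3: noise.} Since $S_k$ is drawn uniformly and independently of $\F_k$, we get $\E[\overrightarrow{U}(k+1)\mid\F_k]=0$, and $\overrightarrow{U}(k+1)$ is $\F_{k+1}$-measurable. One point needs care: $S_k$ must be recoverable, or else we enlarge $\F_k$ to include $S_0,\dots,S_{k-1}$. The martingale property still holds under this enlargement.

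\emph{Step 4: moment bound.} On the ball of radius $K$ both stochastic and full gradients are bounded by a deterministic constant. For example, $\nabla_{W_j}L^N_S$ is bounded by $P_TK^{2N-1}+Q_TK^{N-1}$. So $\sup_k\E[\|\overrightarrow{U}(k+1)\|^2\mid\F_k]\le C$, and we can take $q=2$. Proposition~\ref{prop:crucial_apt_tool} (deterministic case) or Proposition~\ref{prop:crucial_apt_tool_update} (adaptive case) then yields condition (1) almost surely.

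\emph{Step 5: the flow.} $F$ is polynomial (using $\beta\ge N$, though non-integer $\beta\ge2$ still gives $C^1$), hence locally Lipschitz, so solutions are unique and exist locally. Forward completeness follows because $\Tilde{L}^N$ decreases along the flow and is coercive via $R$. Therefore trajectories stay in a sublevel set, which is compact. Backward existence on some $(-\varepsilon(x),0]$ is local existence. This is exactly the hypothesis of Remark~\ref{adapted_assumption}, and Proposition~\ref{prop:apt_property} gives the asymptotic pseudotrajectory property.

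\emph{Step 6: conclusion.} Boundedness makes $\overrightarrow{W}(t)$ precompact. Corollary~\ref{cor:apt_convergence} then gives that $L(\overrightarrow{W})$ consists of critical points and that $\Tilde{L}^N(\overrightarrow{W}(t))$ converges; evaluating at $t=\tau_k$ gives (1).

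The main obstacle is the smoothness requirement $r\ge m$ in Corollary~\ref{cor:apt_convergence}, where $m$ is the total parameter dimension. It is used, through a Sard-type argument, to make the set of critical values have empty interior. $L^N$ is polynomial, but $R$ is $C^\infty$ only away from $0$ when $\beta$ is not an integer. I would first handle integer $\beta$, where everything is polynomial and Sard applies. For general $\beta$, I would argue that the critical values of $\Tilde{L}^N$ have empty interior directly. The loss is real-analytic on $\W\neq0$, so the critical values on any compact set away from $0$ are finite by a Łojasiewicz/semialgebraic-type argument, and $0$ contributes the single value $\Tilde{L}^N(0)$. I would then rerun the proof of the corollary with this property replacing Sard.

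For (2) and (3), every limit point of $\W(k)$ lies in $\operatorname{Crit}(\Tilde{L}^N)$ and the sequence lies in a compact set. If $d(\W(k),\operatorname{Crit})\ge\varepsilon$ along a subsequence, a further subsequence would converge to a point at distance at least $\varepsilon$ from $\operatorname{Crit}$, which is a contradiction. This gives (3). Then (2) follows from (3) together with uniform continuity of $\nabla\Tilde{L}^N$ on the compact ball.
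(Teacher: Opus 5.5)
Your proposal is correct and follows the paper's proof essentially step for step. You verify the Robbins--Monro structure, use the boundedness from Propositions \ref{prop:bounded}/\ref{prop:bounded_adaptive} both for condition (2) and for a deterministic $q=2$ noise bound, invoke Proposition \ref{prop:crucial_apt_tool} or \ref{prop:crucial_apt_tool_update} for condition (1), and conclude with Corollary \ref{cor:apt_convergence} in the form of Remark \ref{adapted_assumption}. Your coercivity argument for forward completeness and your ordering (3)$\Rightarrow$(2) are harmless variants of the paper's Proposition \ref{global_existence} and its (2)$\Rightarrow$(3).

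Your smoothness caveat is fair: the paper simply asserts that $\Tilde{L}^N$ is $C^\infty$, which fails at $\W=0$ for non-integer $\beta$. Your \L{}ojasiewicz argument works, but a simpler repair is to apply Sard's theorem to $\Tilde{L}^N$ on the open set $\{\W\neq 0\}$, where it is $C^\infty$, and add the single value $\Tilde{L}^N(0)$.
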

The proof of this theorem is given in Section \ref{prf:sgd-conv2cset}. The crucial ingredient is to establish that the interpolated process $\overrightarrow{W}(t)$ is an APT for the continuous flow. To this end, we show condition $(1)$ in Proposition \ref{prop:apt_property} by leveraging Proposition~\ref{prop:crucial_apt_tool} for deterministic stepsizes and Proposition~\ref{prop:crucial_apt_tool_update} for adaptive stepsizes. Moreover, we then exploit Corollary~\ref{cor:apt_convergence} to establish the stated convergence result, also leveraging that $\Tilde{L}^N$ is a Lyapunov function. 

In contrast to previous works, Theorem~\ref{thm:sgd-conv2cset} directly addresses convergence of SGD in linear neural networks.
It avoids the global Lipschitz or bounded-gradient assumptions commonly imposed in SGD convergence theory.

\subsection{Discussion about the constants in our results}\label{sec:constants} 

The following discussion is heuristic and intended only to provide intuition regarding the dependence of the theoretical step-size bounds on the mini-batch size.
Precisely, we conclude this section with a heuristic analysis in which we approximate constant B and examine the influence of constants $P_T$ and $Q_T$ in the SGD algorithm. The goal is to intuitively compare SGD and GD on specific cases. 
Assume $\|X_S\|^2 \ge \|Y_SX_S^T\|$ and $\|\W(0)\|^2 \le M^2.$ 
It follows from \eqref{def:p_t}, \eqref{def:q_t} and \eqref{def:B} that 
\begin{align*}
	M  = \left( \frac{2 P_T \sqrt{N}}{\lambda c_\beta} \right)^{\frac{1}{2(\beta - N)}} = \left( \frac{2 \sqrt{N}}{\lambda c_\beta} \right)^{\frac{1}{2(\beta - N)}} \, P_T^{\frac{1}{2(\beta - N)}} = \Lambda(\lambda, \beta, N) \, P_T^{\frac{1}{2(\beta - N)}},
\end{align*}
where constant $\Lambda(\lambda,\beta,N)=\left( \frac{2 \sqrt{N}}{\lambda c_\beta} \right)^{\frac{1}{2(\beta - N)}}$  and
\begin{align*}
	B = \frac{\sqrt{N} \left( P_TM^{2N - 1} + Q_TM^{N - 1} \right)}{\lambda \beta \left( \frac{1}{2} M^2 \right)^{\beta - 1}}. 
\end{align*}
We approximate $B$ by only keeping the leading term in $M$ from the numerator as follows
\begin{align*}
	B \approx \frac{\sqrt{N} \, P_TM^{2N - 1}}{\lambda \beta \left( \frac{1}{2} M^2 \right)^{\beta - 1}} = \frac{2^{\beta - 1} \sqrt{N} \, P_T}{\lambda \beta \, M^{2(\beta - N) - 1}},  
\end{align*}
that is $B\approx \frac{2^{\beta - 1} \sqrt{N} \, P_T}{\lambda \beta\left( \frac{2 P_T \sqrt{N}}{\lambda c_\beta} \right)^{\frac{2(\beta - N) - 1}{2(\beta - N)}}}= C(\lambda, \beta, N) \, P_T^{\frac{1}{2(\beta - N)}}$
with constant \\ $C(\lambda, \beta, N)=\frac{2^{\beta - 1} \sqrt{N}}{\lambda \beta\left( \frac{2 \sqrt{N}}{\lambda c_\beta} \right)^{\frac{2(\beta - N) - 1}{2(\beta - N)}}}.$ Since $\|\W(0)\|^2 \le M^2$ then the deterministic $\eta_k$ reads
\begin{align}\label{eq:etak to compare}
	\eta_k &= \frac{b_k}{\beta \lambda \left[ \frac{1}{2}(M + B)^2 \right]^{\beta - 1}}=\frac{b_k}{\beta \lambda \left[ \frac{1}{2} \left( \Lambda(\lambda, \beta, N) + C(\lambda, \beta, N) \right)^2 \right]^{\beta - 1} \, P_T^{\frac{\beta -1}{\beta - N}}}.
\end{align}
We now compare SGD and GD in two special cases. In the first, the input data $X$ contains only zeros except for one data point $x_i$ at position $i$. In the second, $X$ consists of $m$ copies of one data point $x$.
\begin{enumerate}
	\item Let $x_i\in\RR^{d_x}$ such that $x_i\neq0,$ and consider a matrix $X$ that contains $x_i$ at column  $i$ and zeros everywhere else $(X = [0 ~\cdots~ 0~ x_i~ 0~\cdots~ 0]).$ Then using a single mini batch $i.e.,$ $T=1,$ we have  $P_T=\|x_i\|_2^2.$  
	Hence, by substituting $P_T$ into $\eqref{eq:etak to compare}$ and by assuming that the stepsizes $\eta_{SGD}$ of SGD are approximately the same, we get after $m$ steps 
	\begin{align*}
		m\cdot\eta_{SGD} &= \frac{mb_1}{\beta \lambda \left[ \frac{1}{2} \left( \Lambda(\lambda, \beta, N) + C(\lambda, \beta, N) \right)^2 \right]^{\beta - 1} \, (\|x_i\|_2^2)^{\frac{\beta -1}{\beta - N}}}.
	\end{align*}
	For the full batch gradient, i.e. $T=m,$ we have $P_m=\frac{1}{m}\|x_i\|_2^2,$ and
	\begin{align*}
		\eta_{GD} &= \frac{b_1}{\beta \lambda \left[ \frac{1}{2} \left( \Lambda(\lambda, \beta, N) + C(\lambda, \beta, N) \right)^2 \right]^{\beta - 1} \, \left(\frac{1}{m}\|x_i\|_2^2\right)^{\frac{\beta -1}{\beta - N}}}\\
		&=\frac{m^{\frac{\beta -1}{\beta - N}}b_1}{\beta \lambda \left[ \frac{1}{2} \left( \Lambda(\lambda, \beta, N) + C(\lambda, \beta, N) \right)^2 \right]^{\beta - 1} \, \left(\|x_i\|_2^2\right)^{\frac{\beta -1}{\beta - N}}}.
	\end{align*}
	
    Therefore, over the corresponding epoch, the GD stepsize $\eta_{GD}$ exceeds the effective SGD stepsize $m\cdot\eta_{SGD}$. Hence, GD may outperform SGD on such data. Intuitively, progress requires a pass over all data points, and GD completes this pass in one step, while SGD needs many steps for it.
	
	\item Let $x\in \RR^{d_x},$ such that $\|x\|_2\neq0,$ and form matrix $X$ with $m$ copies of $x$ as $X=[x~\cdots~ x].$
	For a single mini batch $T=1,$ we have $P_T=\|x\|_2^2.$ By assuming the stepsizes of SGD are approximately the same within this epoch,  after $m$ steps, substituting $P_T$ into $\eqref{eq:etak to compare}$ yields 
	\begin{align*}
		m\cdot\eta_{SGD} &= \frac{mb_1}{\beta \lambda \left[ \frac{1}{2} \left( \Lambda(\lambda, \beta, N) + C(\lambda, \beta, N) \right)^2 \right]^{\beta - 1} \, (\|x\|_2^2)^{\frac{\beta -1}{\beta - N}}}.
	\end{align*}
	In the case of the full batch $T=m,$ (GD), we have $P_T=\frac{1}{m}\left(\sqrt{m}\|x\|_2\right)^2=\|x\|_2^2$ and
	\begin{align*}
		\eta_{GD} &= \frac{b_1}{\beta \lambda \left[ \frac{1}{2} \left( \Lambda(\lambda, \beta, N) + C(\lambda, \beta, N) \right)^2 \right]^{\beta - 1} \, \left(\|x\|_2^2\right)^{\frac{\beta -1}{\beta - N}}}.
	\end{align*}
    Here, a single SGD step already covers the whole dataset, and the effective SGD stepsize $m\cdot\eta_{SGD}$ exceeds the GD stepsize $\eta_{GD}$. So on datasets of this kind, SGD may outperform GD.
    
\end{enumerate}
Overall, these observations suggest that SGD is likely to be more efficient than GD in scenarios where SGD can access the entire data in fewer steps. 

  
\section{Numerical experiments}\label{sec:experiments}

In this section we present numerical experiments illustrating our theoretical findings. We test both deterministic and adaptive step size choices and investigate how sharp the conditions on the maximal step sizes in Theorem~\ref{thm:sgd-conv2cset} are.

Our experiments use a randomly generated dataset, for
which we select $m=300$ data points $x_j$ of dimension $d=70$ from a Gaussian distribution of mean zero and variance $\sigma^2=\frac{1}{d}$ to form the input data $X\in\RR^{d\times m}$, where $d_x=d$. 
The dimensions of the weight matrices $W_j \in \RR^{d_j \times d_{j-1}}$ 
are chosen, for a parameter $r \in \NN$ (determining the (generic) rank of the corresponding product matrix), as
\begin{equation}\label{eq:def:dimensions}
d_0 = d, \qquad d_1 = r , \qquad d_j=\operatorname{round}\left(r+(j-1)\frac{d-r}{N-1}\right), \quad j=2,\dots,N, 
\end{equation}
where $\operatorname{round}(z)$ rounds the value of 
$z$ to the nearest integer. The weight matrices are initialized 
such that $\W(0)=(W_1,\hdots,W_N)$ has a balancedness constant $0$. 
This is achieved by randomly choosing $d_j\times d_j$ and $d\times d$ orthogonal matrices $V_j$, $j=1,\hdots,N$, and $U_1$ from the uniform distribution on the respective orthogonal group (by taking SVDs of a Gaussian matrix). 
The first $d_1$ columns of $V_{j-1}$ are then used to form the $d_j\times d_1$ matrices $U_j,$ for $j=2,\hdots,N$. For any $n_1,n_2\in \NN$, 
$I_{n_1,n_2}$ denotes the $n_1\times n_2$ rectangular diagonal matrix with ones on the diagonal. 
Then matrices $V_j,~U_j$ and $I_{n_1,n_2}$ are used to define $W_j$ as
\[
W_j = V_j I_{d_j,d_1} U_j^T.
\]
This guarantees that $\W(0) = (W_1,\hdots,W_N)$ is balanced, 
\begin{align*}
	W_{j+1}^T W_{j+1} &= U_{j+1} I_{d_1,d_{j+1}} V_{j+1}^T V_{j+1}                            I_{d_{j+1},d_{1}} U_{j+1}^T
	= U_{j+1} U_{j+1}^T 
	= V_j I_{d_{j},d_{1}} U_j^T U_j I_{d_{1},d_j} V_j^T \\
	&= W_j W_j^{T}.
\end{align*}
The target matrix $Y \in \RR^{d \times m}$ is also generated 
randomly by chosing
matrices $\widetilde{W_j}$ in the same way as $W_j$ above and then setting $Y = \widetilde{W_N} \cdots \widetilde{W_1}X$.

Various values for the regularization parameter $\lambda$ and the rank parameter $r$ are explored. 
The constant $c_\beta$ is set as $c_\beta = \beta 2^{1-\beta},$ and $\beta$ is chosen such that $\beta>N$. 
Recall that the constant $M$ in Proposition \ref{prop:bounded}
is given as
	\begin{align}\label{exp: M def}
		M = \max\left\{\left(\frac{2P_T\sqrt{N}}{\lambda c_\beta}\right)^{\frac{1}{2(\beta - N)}},  \left(\frac{2Q_T\sqrt{N}}{\lambda c_\beta}\right)^{\frac{1}{2\beta-N}}\right\}.
	\end{align}
The constants $P_T$ and $Q_T$ are computed by taking the maximum of $\|X_S\|^2$ and $\|Y_SX_S^T\|$ over the actually employed mini-batches of size $T=30$ used during the run of SGD. (Inspecting the proof of Proposition~\ref{prop:bounded} reveals that only these mini-batches are needed.)

We conduct our tests on linear neural networks with depths 
$3$ and $4$ of the architecture described above. 
Our first experiments employ deterministic stepsizes that satisfy the requirements of Theorem \ref{thm:sgd-conv2cset}, 
i.e., they are scheduled as  
	\begin{align}\label{exp: deterministic etak}
		\eta_k = \frac{b_k}{\beta\lambda \max\left\{\left(\frac{1}{2}\|\W(0)\|_F^2\right), \frac{1}{2}(M+B)^2 \right\}^{\beta-1}},
	\end{align}
where $B$ is defined as in Theorem~\ref{thm:sgd-conv2cset} and $\displaystyle b_k=\frac{1}{(k+1)^{\gamma}}$ represents a sequence converging to zero at decay rate $\gamma.$ 

Figure~\ref{fig:regularized losses and Frobenius norm of individual weights with deterministic stepsize} presents the regularized loss and Frobenius norms of the weight matrices during the training of the linear networks of widths $N=3,4$, each optimized via SGD with deterministic step sizes as scheduled in $\eqref{exp: deterministic etak}$. 
Subfigures ~\ref{fig:Regularized_loss_deterministicetak_r5_beta5_batchsize30_lambdapara0_000_epochs10000_N3} and \ref{fig:Regularized_loss_deterministicetak_r4_beta7_batchsize30_lambdapara0_000_epochs10000_N4} illustrate the evolution of the regularized loss for networks of depth $N=3,4$ and with corresponding parameter values $r=5,4$ in \eqref{eq:def:dimensions}. 
\begin{figure}[h!] 
				\begin{subfigure}{0.45\textwidth}
					\centering\includegraphics[width=0.99\textwidth]{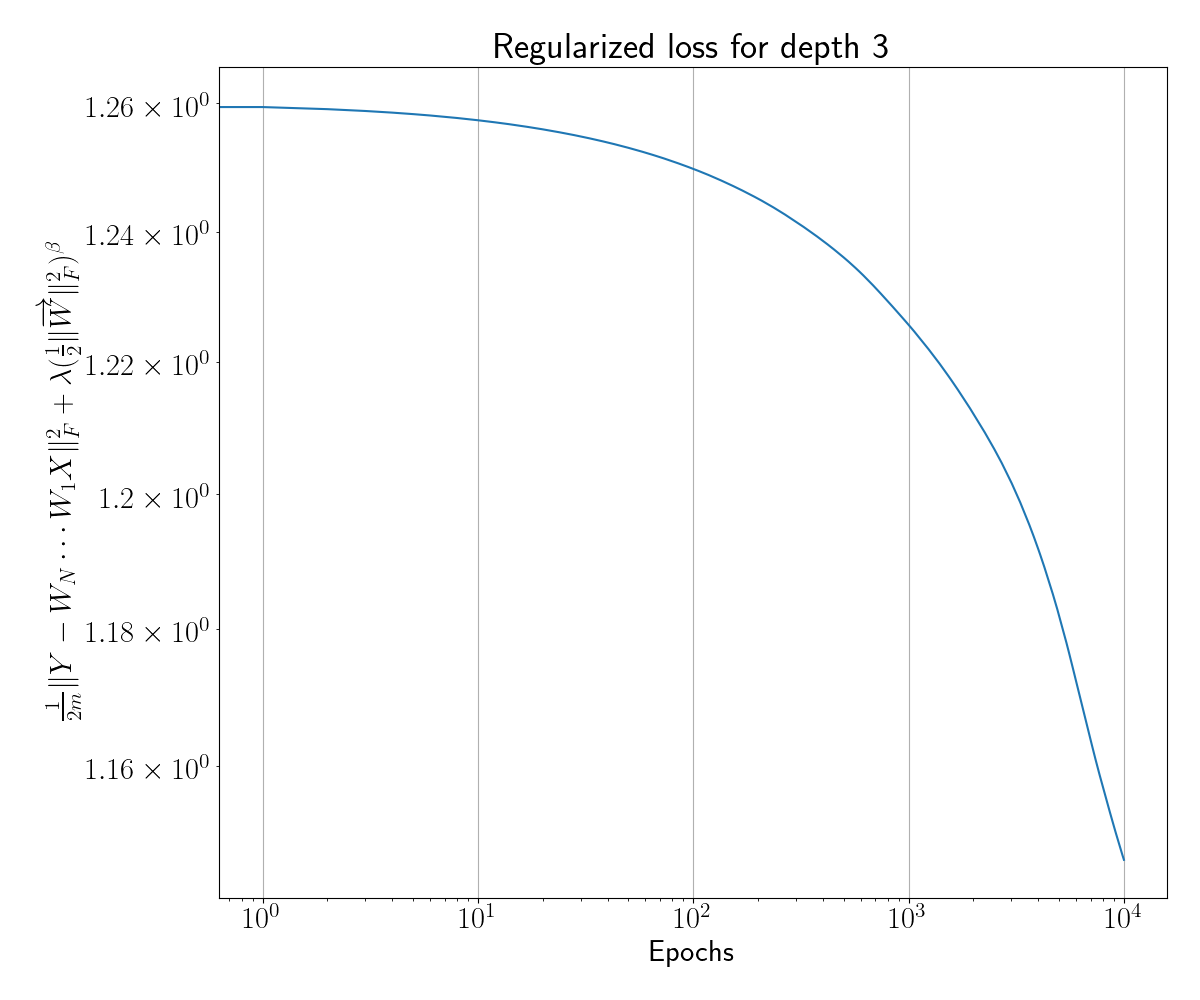}
					\caption{Evolution of the 
	                   regularized 
	                   loss for a network of depth 3}
				\label{fig:Regularized_loss_deterministicetak_r5_beta5_batchsize30_lambdapara0_000_epochs10000_N3}
			\end{subfigure}\hfill\begin{subfigure}{0.45\textwidth}\centering\includegraphics[width=0.99\textwidth]{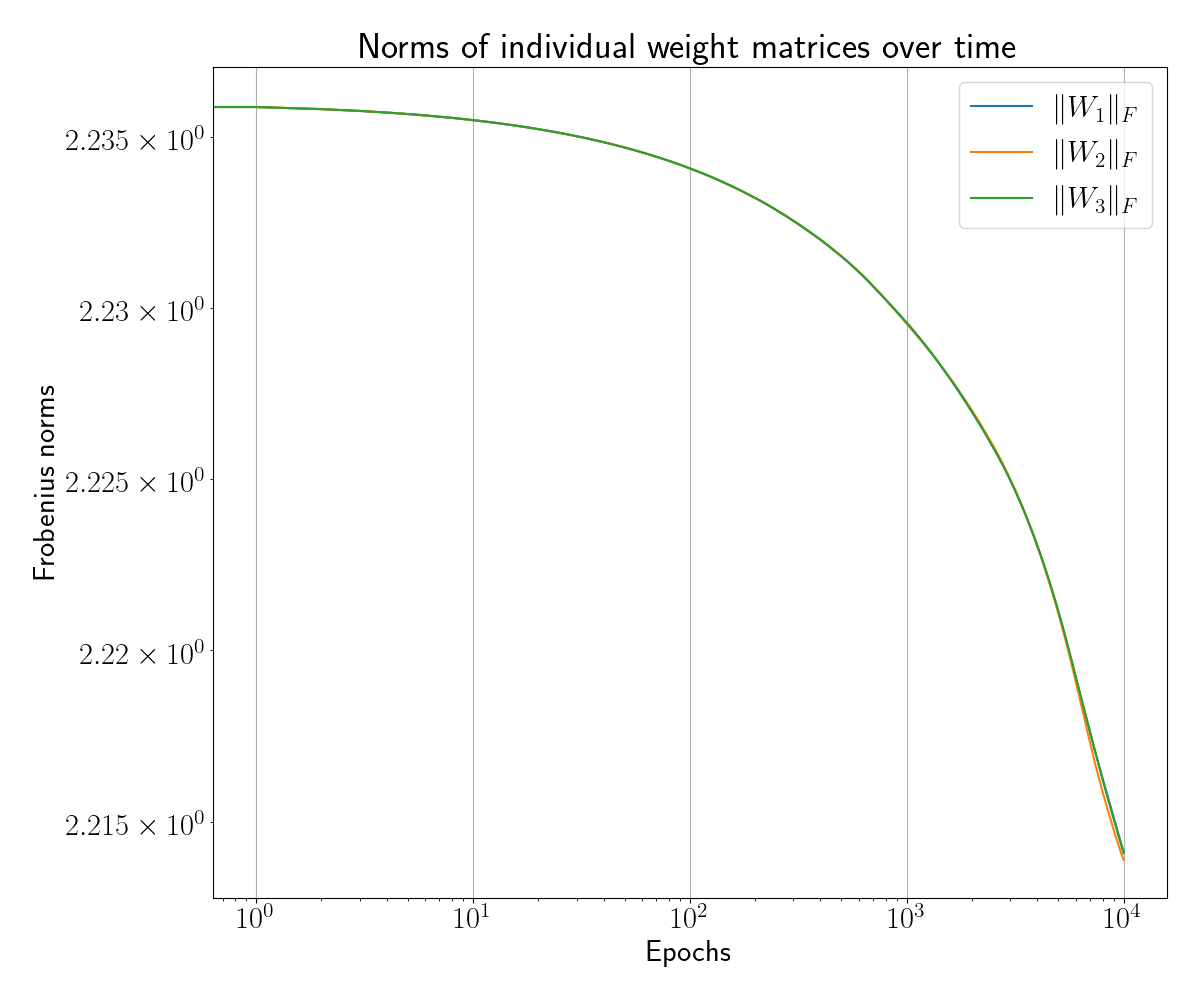}
				\caption{
					Frobenius norm of the individual weight matrices of a network of depth 3}
			\label{fig:Norms_of_Wj_deterministicetak_r5_beta5_batchsize30_lambdapara0_000_epochs10000_N3}
		\end{subfigure}
           				\begin{subfigure}{0.45\textwidth}
					\centering\includegraphics[width=0.99\textwidth]{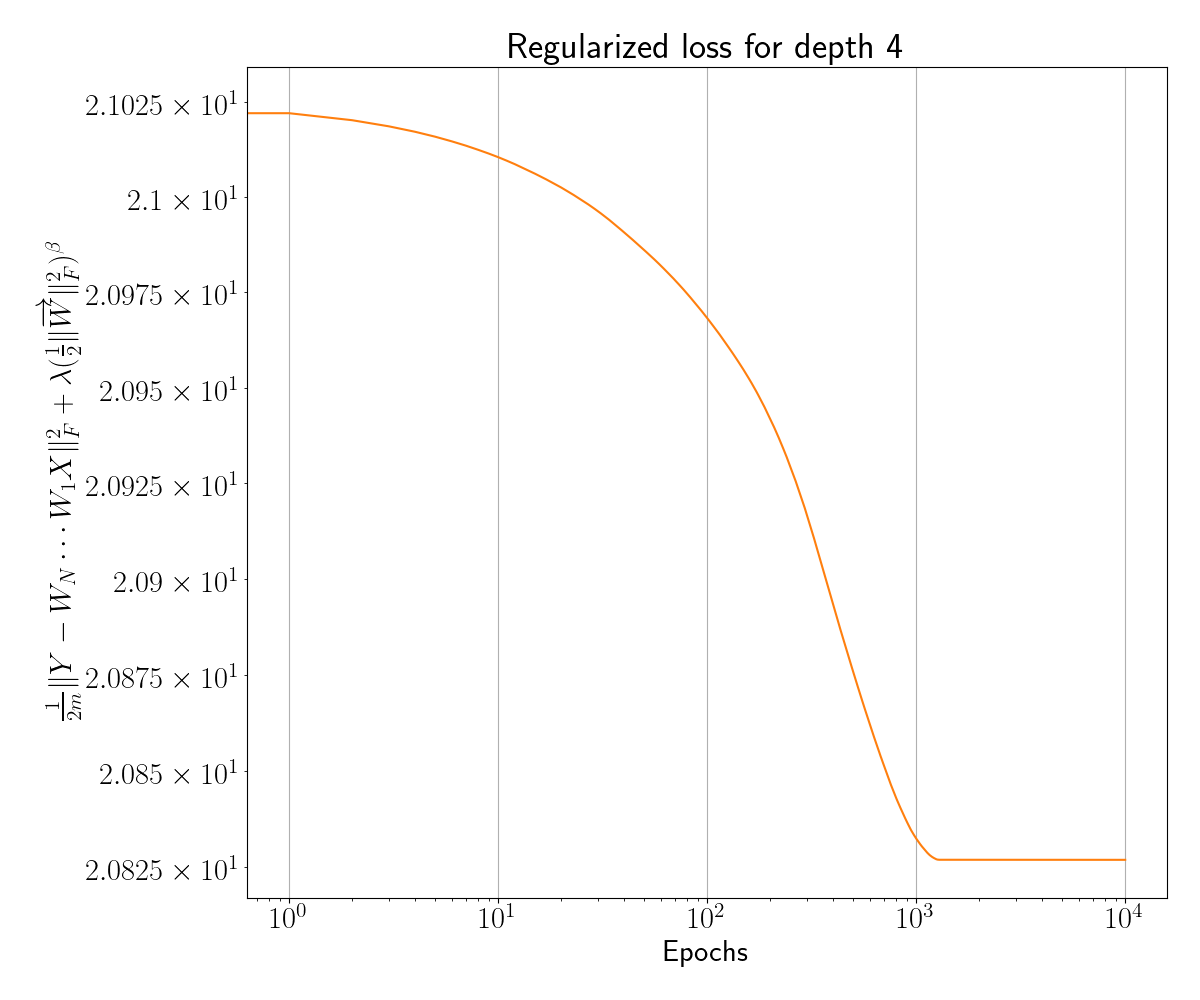}
					\caption{Evolution of the 
                   regularized 
                   loss for a network of depth 4}
				\label{fig:Regularized_loss_deterministicetak_r4_beta7_batchsize30_lambdapara0_000_epochs10000_N4}
			\end{subfigure}\hfill\begin{subfigure}{0.45\textwidth}\centering\includegraphics[width=0.99\textwidth]{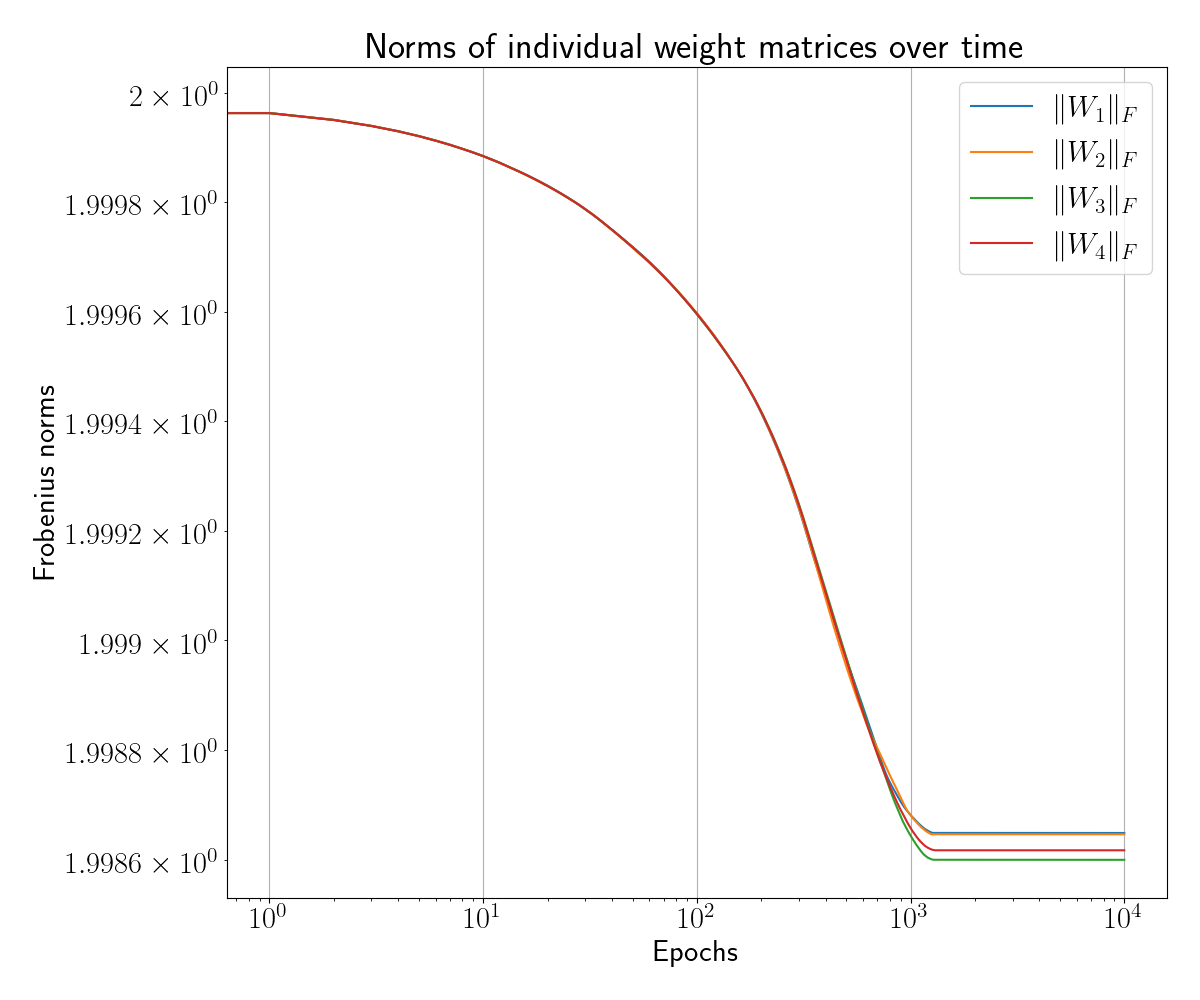}
				\caption{
					Frobenius norm of the individual weight matrices of a network of depth 4}
			\label{fig:Norms_of_Wj_deterministicetak_r4_beta7_batchsize30_lambdapara0_000_epochs10000_N4}
		\end{subfigure}
		\caption{The left panel illustrates the evolution 
       of the regularized loss functions across networks with deterministic step sizes as specified in $\eqref{exp: deterministic etak}$. The right panel presents the Frobenius norms of individual weight matrices for networks with depths 3 and 4 during training. 
       }\label{fig:regularized losses and Frobenius norm of individual weights with deterministic stepsize}
\end{figure}
These tests validate the convergence of the regularized loss function $\tilde{L}^N$ applied to the SGD iterates under deterministic step sizes. The Frobenius norms of the weight matrices for these networks are reported in Subfigures~\ref{fig:Norms_of_Wj_deterministicetak_r5_beta5_batchsize30_lambdapara0_000_epochs10000_N3} and \ref{fig:Norms_of_Wj_deterministicetak_r4_beta7_batchsize30_lambdapara0_000_epochs10000_N4}.
These figures demonstrate that the norms of the SGD iterates remain bounded throughout training. The 
regularization parameters 
used in the experiments for Figure~\ref{fig:regularized losses and Frobenius norm of individual weights with deterministic stepsize} are $\lambda=10^{-5}$ for $N=3$ and $\lambda=5\cdot 10^{-5}$ for $N=4$. 
The parameter $\beta$ is set as $\beta=2N-1,$ for both $N=3$ and $N=4$. The resulting step size constants 
are $B=5.98,$ $M=11.97$ for $N=3$ and $B=4.18,$ $M=8.37$ for $N=4.$

In the next experiments, we explore SGD using deterministic step size selections that are not directly derived from the conditions outlined in our theoretical findings. 
Figure~\ref{fig:Regularized losses and Frobenius norm of individual weights with deterministic stepsize without our conditions} presents the evolution of the regularized loss across networks in the left panel.
The right panel 
displays the Frobenius norm of the weight matrices for each network during training. 
The step sizes are given by $\eta_k=\frac{c}{(1+k)^{\gamma}}$ with $c=0.9,$ $\gamma=0.4$    
and $\beta=2$ for all three networks, 
violating the condition $\sum_{k=0}^{\infty}\eta_k^2<\infty$.
\begin{figure}[h!] 
				\begin{subfigure}{0.45\textwidth}
					\centering\includegraphics[width=0.99\textwidth]{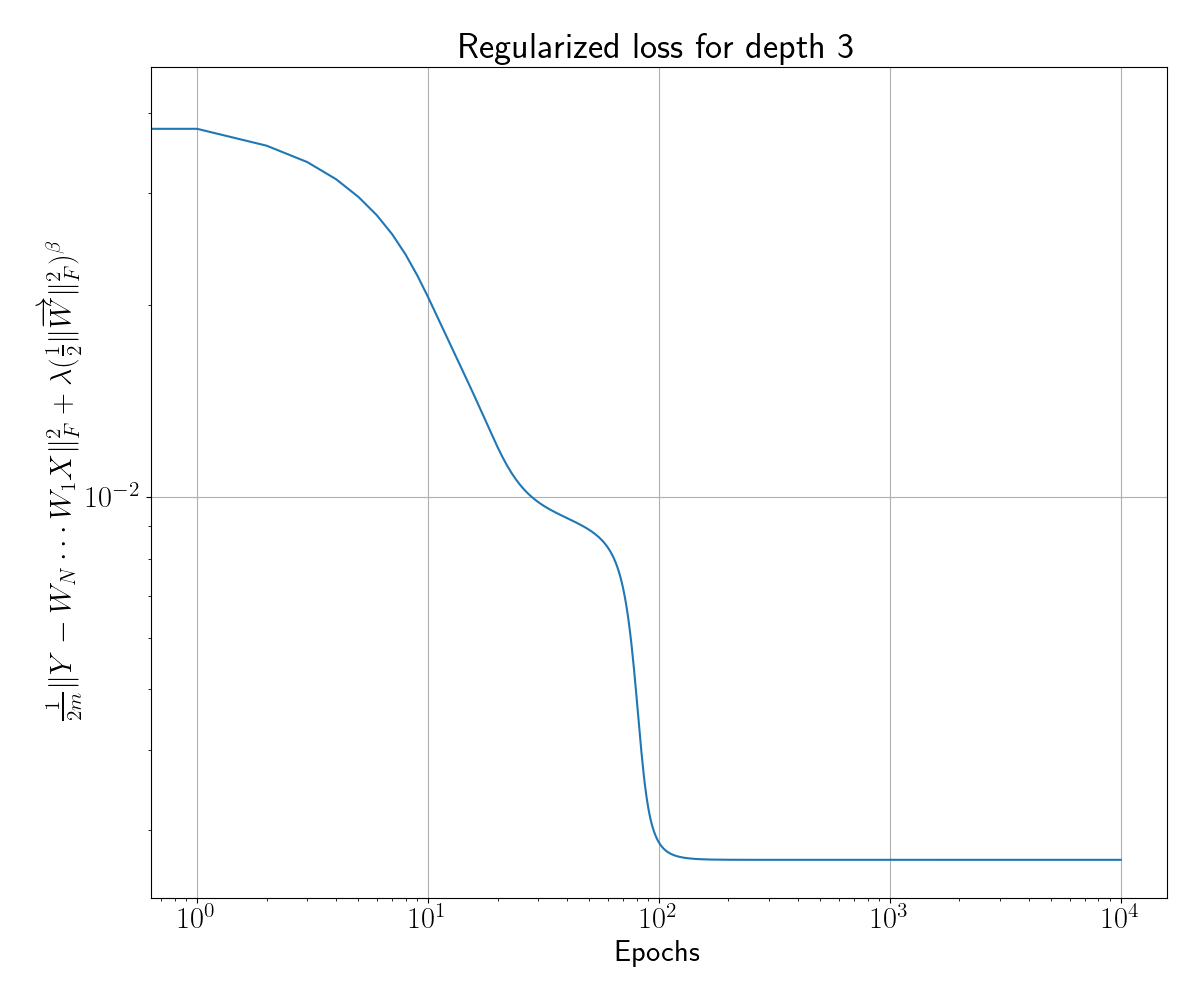}
					\caption{Evolution of the regularized loss for a neural network of depth 3}
				\label{fig:Regularized_loss_deterministicetak_without_our_conditions_r5_beta0.5_batchsize30_lambdapara0_00005_epochs10000_N3}
			\end{subfigure}\hfill\begin{subfigure}{0.45\textwidth}\centering\includegraphics[width=0.99\textwidth]{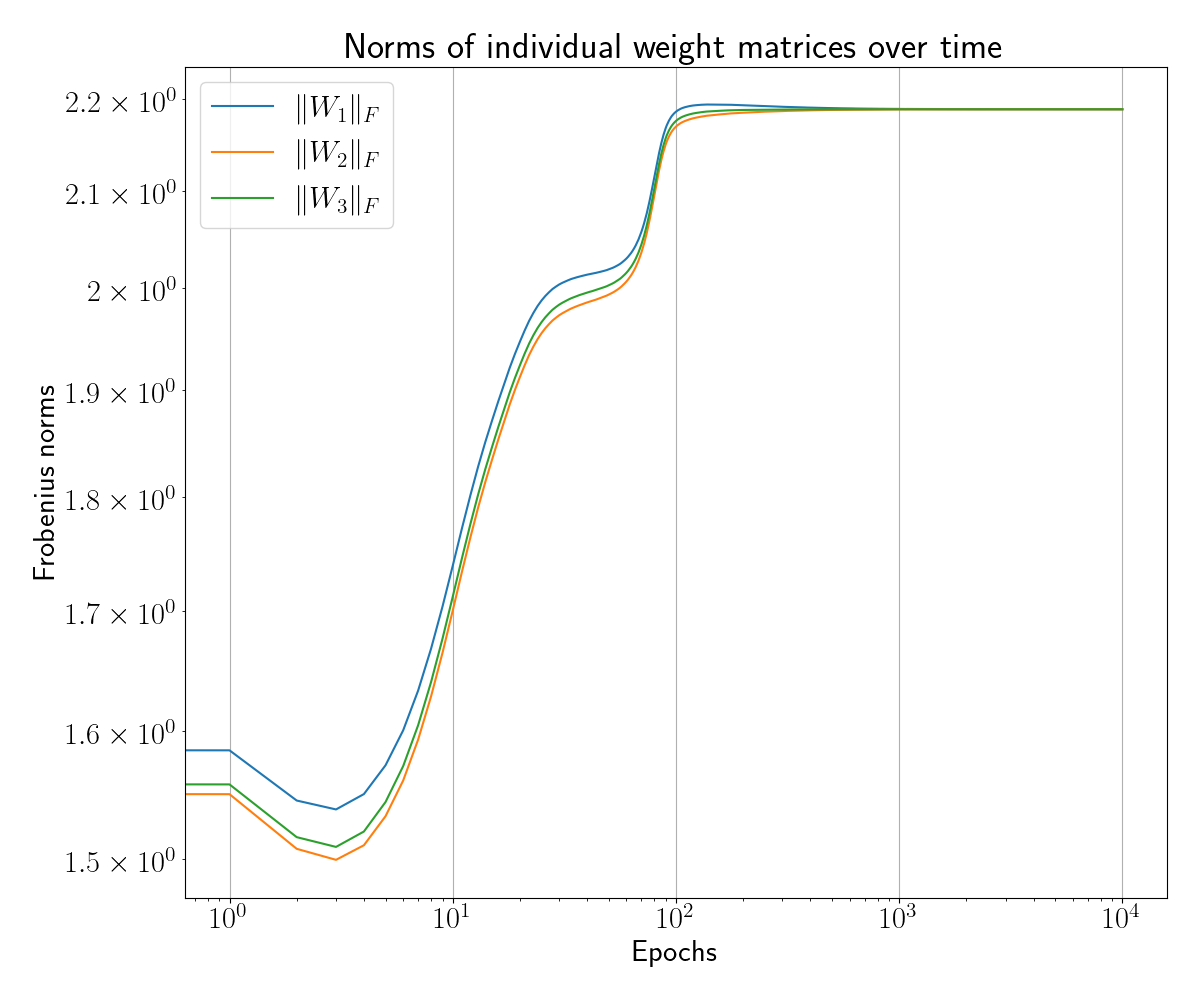}
				\caption{
					Frobenius norm of the individual weight matrices of a network of depth 3}
			\label{fig:Norms_of_Wj_deterministicetak_without_our_conditions_r5_beta1_batchsize30_lambdapara0_00005_epochs10000_N3}
		\end{subfigure}
           				\begin{subfigure}{0.45\textwidth}
					\centering\includegraphics[width=0.99\textwidth]{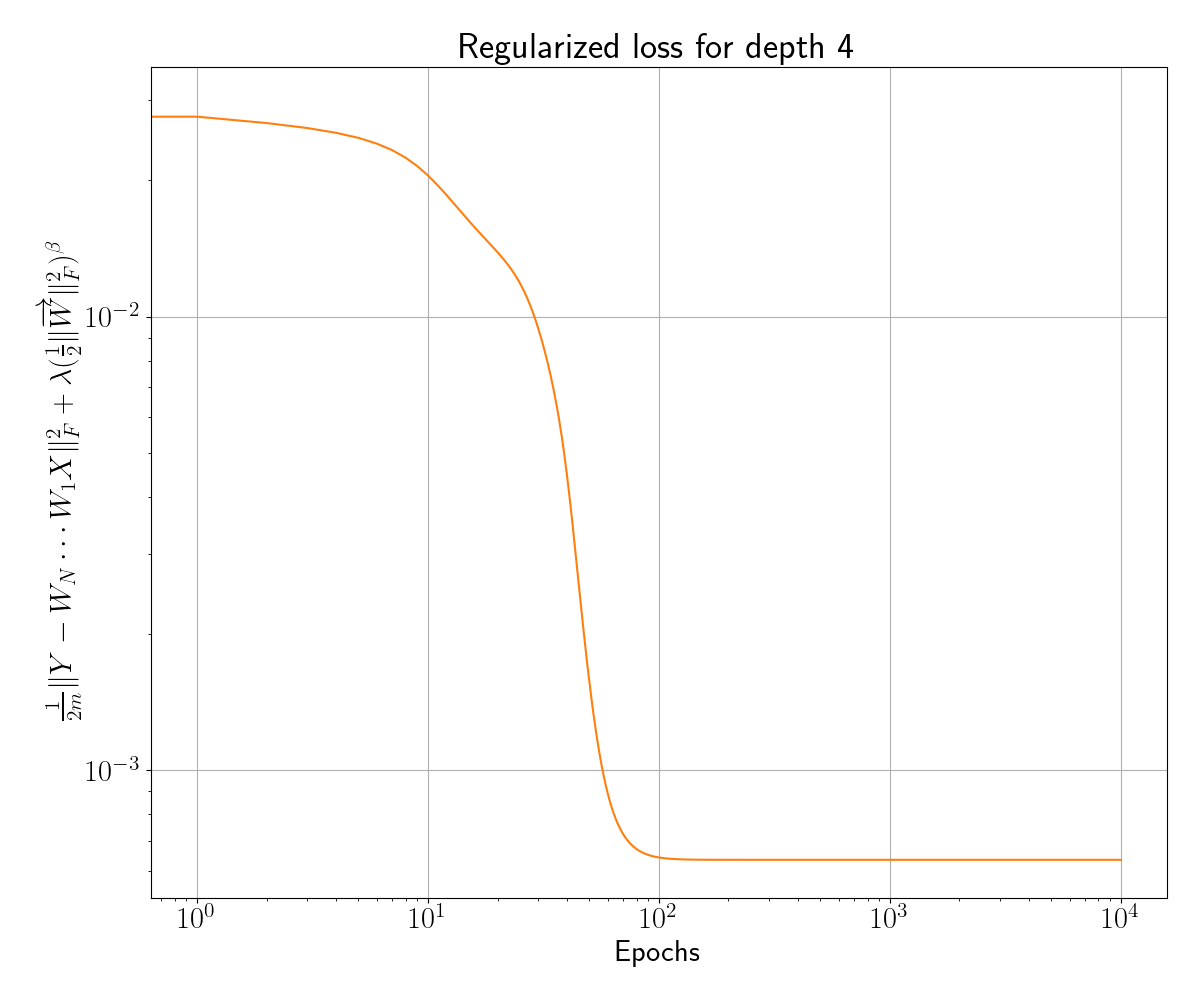}
					\caption{Evolution of the regularized loss for a network of depth 4}
				\label{fig:Regularized_loss_deterministicetak_without_our_conditions_r4_beta1_batchsize30_lambdapara0_00001_epochs10000_N4}
			\end{subfigure}\hfill\begin{subfigure}{0.45\textwidth}\centering\includegraphics[width=0.99\textwidth]{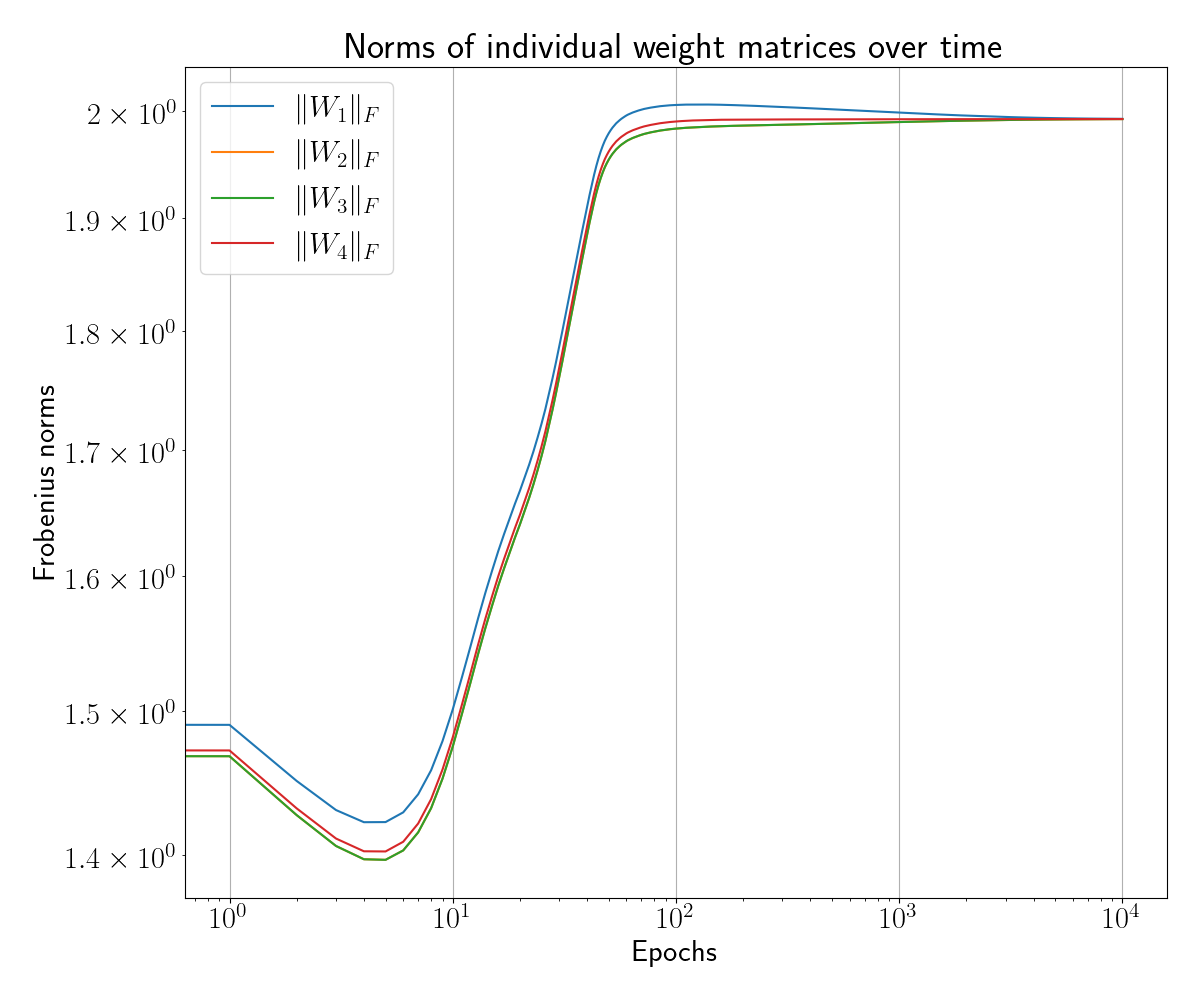}
				\caption{
					Frobenius norm of the individual weight matrices of a network of depth 4}
			\label{fig:Norms_of_Wj_deterministicetak_without_our_conditions_r4_beta1_batchsize30_lambdapara0_00001_epochs10000_N4}
		\end{subfigure}
		\caption{The left panel displays the evolution of 
       the regularized losses across networks with deterministic stepsizes that do not meet requirements in Theorem~\ref{thm:sgd-conv2cset}. The right panel shows the Frobenius norms of individual matrices for networks of depths 2, 3, and 4 during training. 
       }\label{fig:Regularized losses and Frobenius norm of individual weights with deterministic stepsize without our conditions}
\end{figure}
Subfigures \ref{fig:Norms_of_Wj_deterministicetak_without_our_conditions_r5_beta1_batchsize30_lambdapara0_00005_epochs10000_N3} and \ref{fig:Norms_of_Wj_deterministicetak_without_our_conditions_r4_beta1_batchsize30_lambdapara0_00001_epochs10000_N4} indicate that the SGD iterates still remain bounded, suggesting that the established bound for SGD is not tight.


The second part of the experiments 
tests SGD with adaptive stepsizes. The regularization parameters 
remain consistent with those previously described, and the stepsize $\eta_k$ is scheduled as
   \begin{align}\label{exp: adaptive etak}
       \eta_k = b_k \cdot \min\left\{  \frac{1}{\beta\lambda(\frac12\|\W(k)\|_F^2)^{\beta-1}},\, D\, \right\}, 
   \end{align}
for $D$ 
as in Theorem~\ref{thm:sgd-conv2cset} and $b_k=\frac{1}{(1+k)^{\gamma}}$, $\gamma>\frac{1}{2}$. 

Plots~\ref{fig:Regularized_loss_adaptivestepsize_r5_beta5_batchsize30_lambdapara0_00005_epochs500000_N3} and \ref{fig:Regularized_loss_adaptivestepsize_r4_beta7_batchsize30_lambdapara0_00001_epochs500000_N4} in the left panel of Figure \ref{fig:regularized losses and Frobenius norm of individual weights with adaptive stepsize} display the regularized loss 
versus iterations for networks of depth $3$ and $4$ respectively. These 
validate the convergence of $\left(\tilde{L}^N\big(\W(k)\big)\right)_k$ as established in Theorem~\ref{thm:sgd-conv2cset}. The right panel of Figure~\ref{fig:regularized losses and Frobenius norm of individual weights with adaptive stepsize} shows the evolution of the Frobenius norm of individual weight matrices during training for networks of depth $3$ and $4$ on plots \ref{fig:Norms_of_Wj_adaptivestepsize_r4_beta7_batchsize30_lambdapara0_00005_epochs500000_N3} and \ref{fig:Norms_of_Wj_adaptiveetak_r4_beta7_batchsize30_lambdapara0_00001_epochs500000_N4}, respectively.
\begin{figure}[h!] 
				\begin{subfigure}{0.45\textwidth}
					\centering\includegraphics[width=0.99\textwidth]{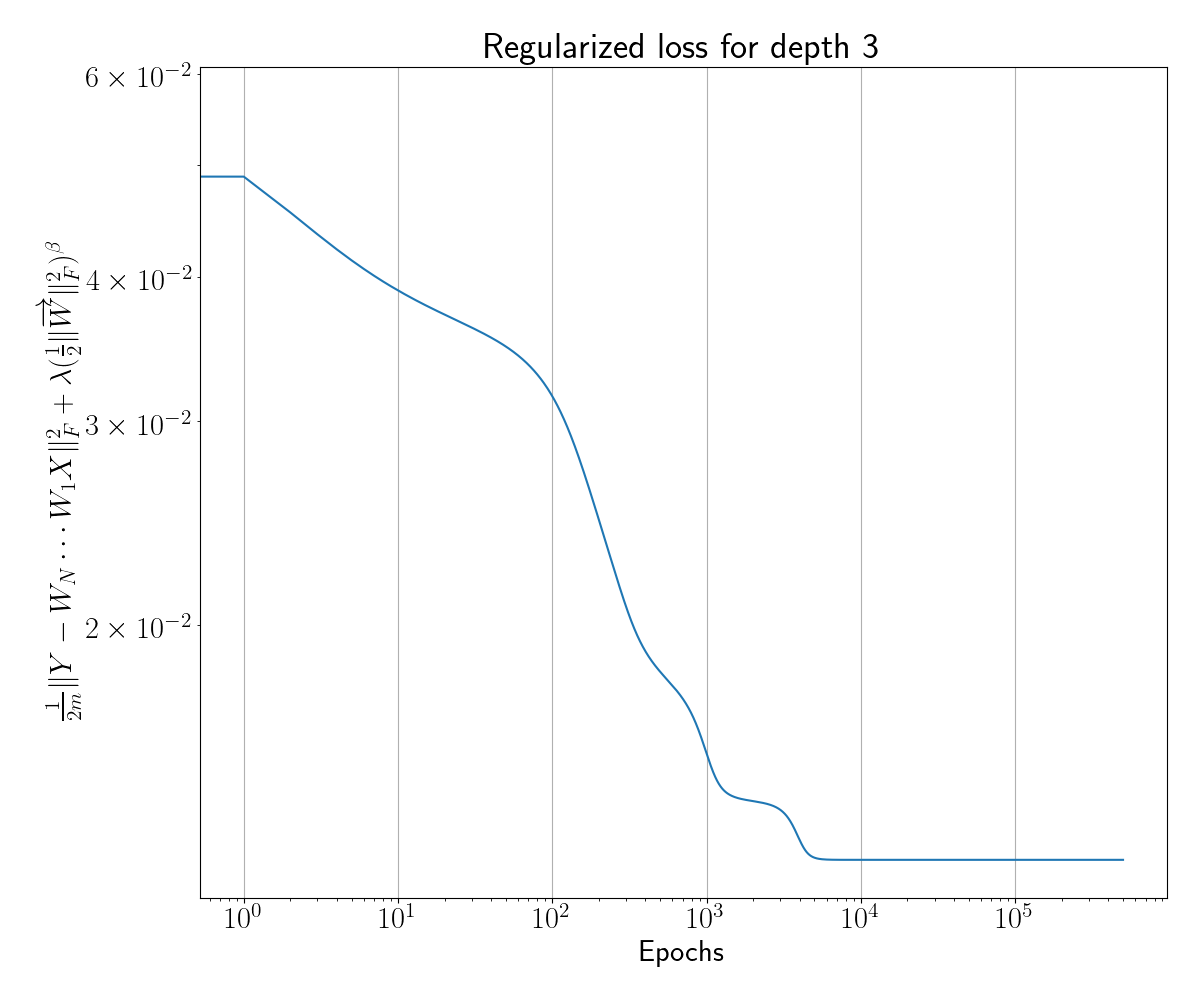}
					\caption{Evolution of the regularized loss for a network of depth 3}
				\label{fig:Regularized_loss_adaptivestepsize_r5_beta5_batchsize30_lambdapara0_00005_epochs500000_N3}
			\end{subfigure}\hfill\begin{subfigure}{0.45\textwidth}\centering\includegraphics[width=0.99\textwidth]{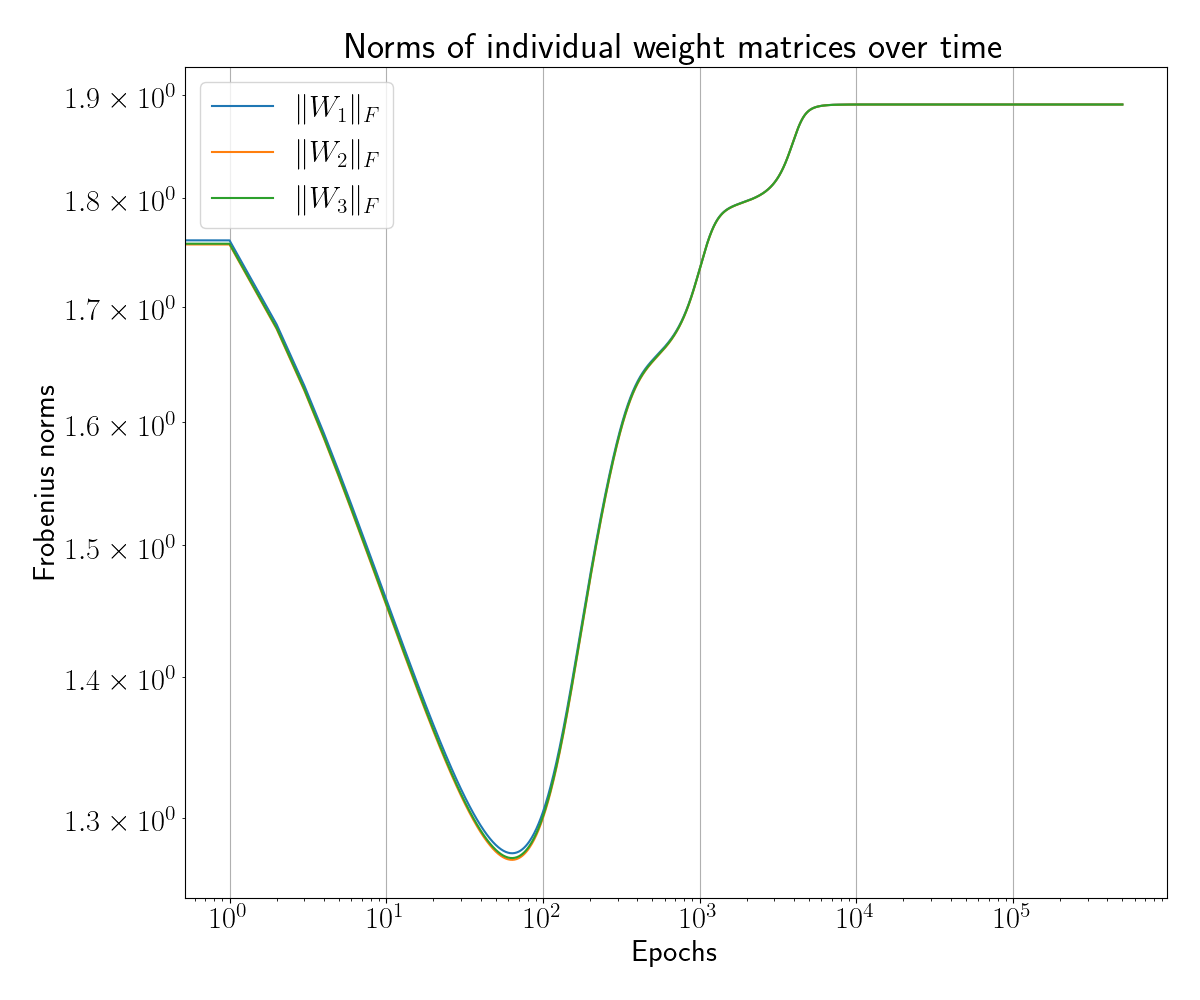}
				\caption{
					Frobenius norm of the individual weight matrices of a network of depth 3}
			\label{fig:Norms_of_Wj_adaptivestepsize_r4_beta7_batchsize30_lambdapara0_00005_epochs500000_N3}
		\end{subfigure}
           				\begin{subfigure}{0.45\textwidth}
					\centering\includegraphics[width=0.99\textwidth]{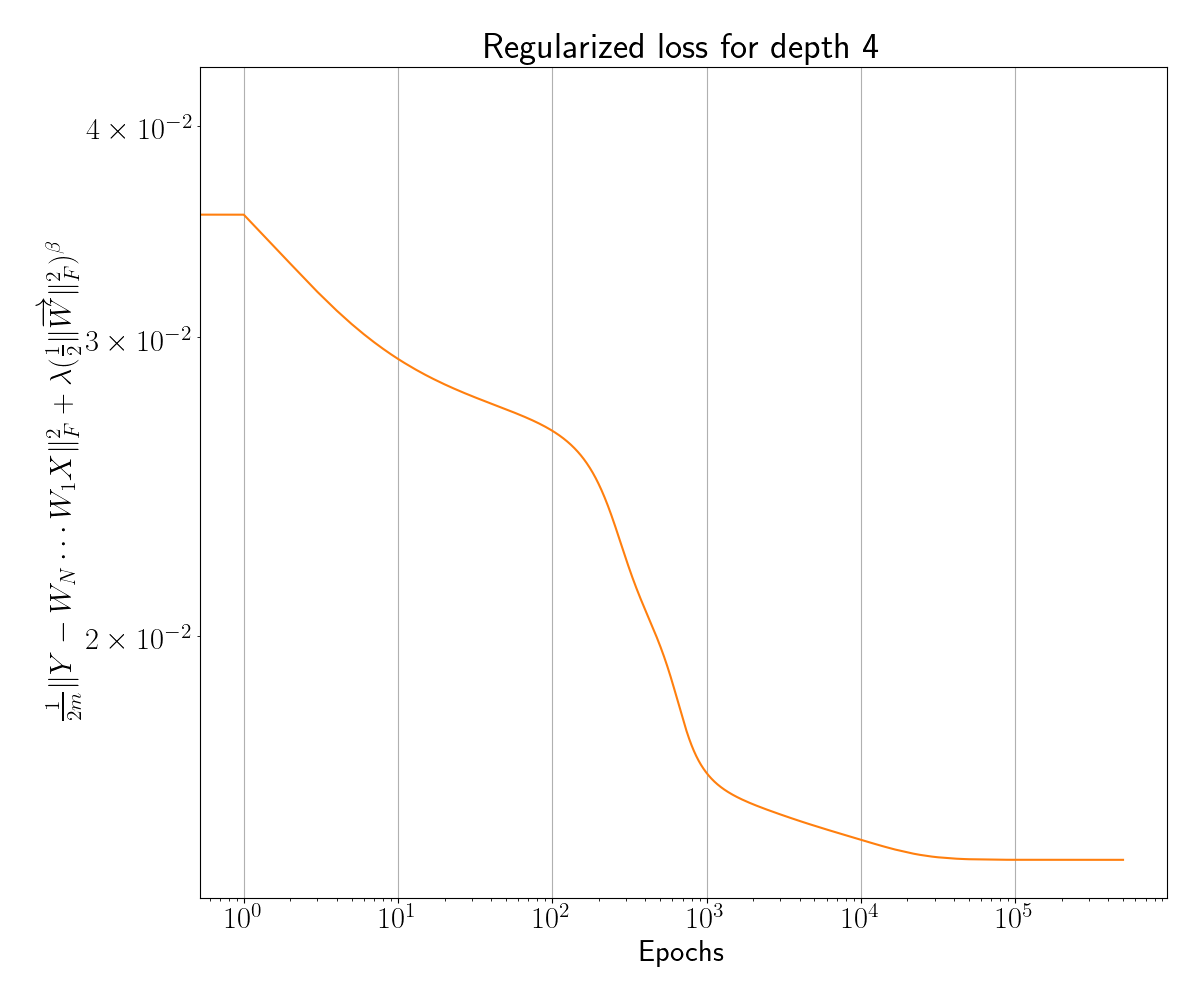}
					\caption{Evolution of the regularized loss for a network of depth 4}
				\label{fig:Regularized_loss_adaptivestepsize_r4_beta7_batchsize30_lambdapara0_00001_epochs500000_N4}
			\end{subfigure}\hfill\begin{subfigure}{0.45\textwidth}\centering\includegraphics[width=0.99\textwidth]{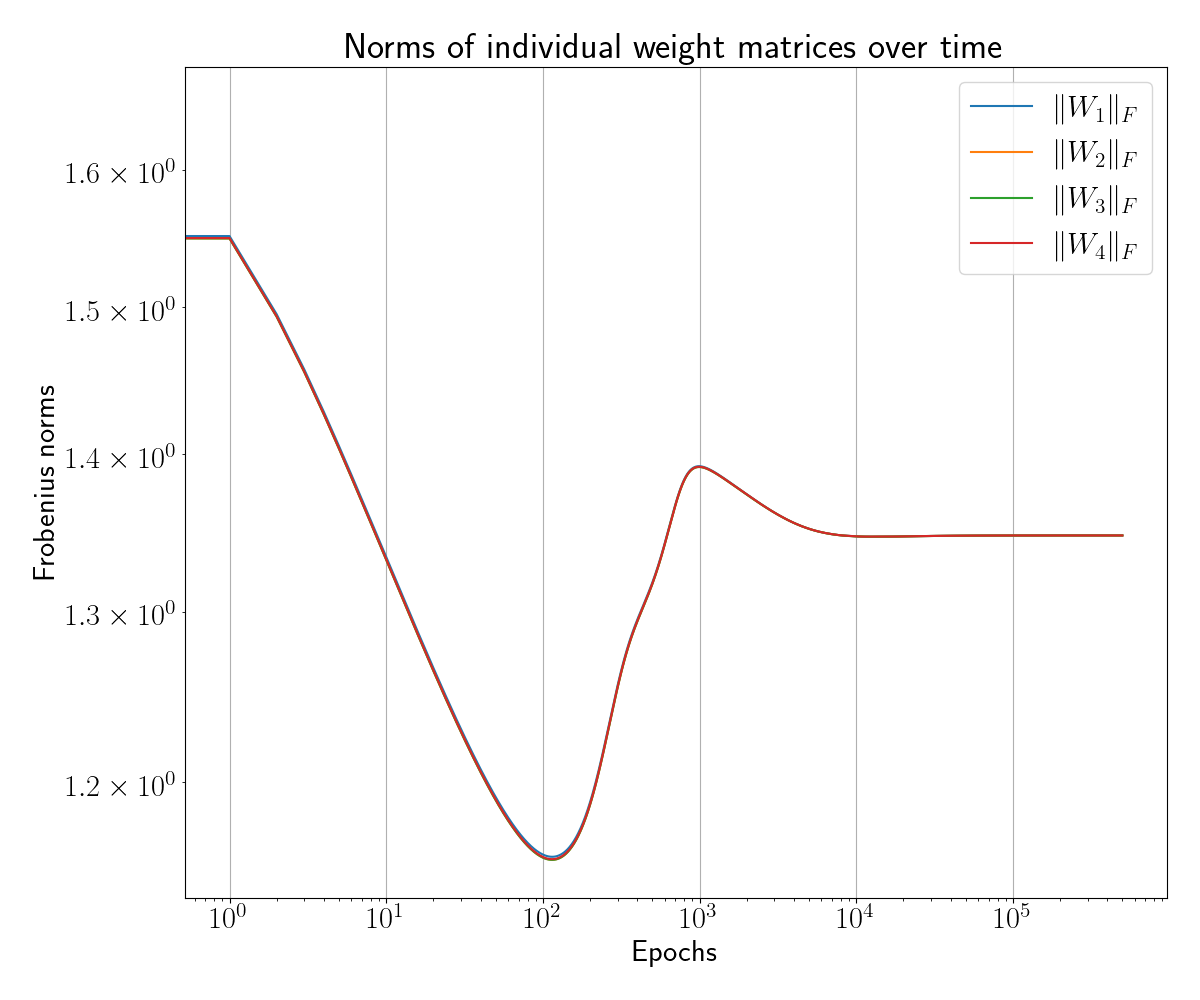}
				\caption{
					Frobenius norm of the individual weight matrices of a network of depth 4}
			\label{fig:Norms_of_Wj_adaptiveetak_r4_beta7_batchsize30_lambdapara0_00001_epochs500000_N4}
		\end{subfigure}
		\caption{The left panel illustrates the evolution of 
       the regularized loss across networks with adaptive stepsizes $\eqref{exp: adaptive etak}$. The right panel presents the Frobenius norms of individual weight matrices for networks of depth 3 and 4. 
      }\label{fig:regularized losses and Frobenius norm of individual weights with adaptive stepsize}
\end{figure}
In this experiment, the 
constant $D$ is set to $1$ for $N=3$ and $N=4.$ 
The regularization parameters are $\lambda=10^{-5}$ for $N=3,$ and $\lambda=5\cdot10^{-5}$ for $N=4.$ 
The parameter $r$ in \eqref{eq:def:dimensions} is set to $5$ for $N=3,$ and $4$ for $N=4.$ 


Figure~\ref{fig:regularized losses and Frobenius norm of individual weights without theo conditions adaptive stepsize} displays the training dynamics using a modified adaptive stepsize, differing from $\eqref{exp: adaptive etak}$ 
and defined as
   \begin{align}\label{exp: modified adaptive etak}
       \eta_k = \frac{\alpha}{\frac{1}{2}\|\W(k)\|_F^2(1+k)^{\gamma}},
   \end{align}
 where $\alpha>0.$ 
Subfigures~\ref{fig:Regularized_loss without theo conditions adaptive etak N3} and \ref{fig:Regularized_loss without theo conditions adaptive etak N4} presents the regularized training loss, while Subfigures~\ref{fig:Norms_of_Wj 
without theo conditions adaptive etak N3} and \ref{fig:Norms_of_Wj without theo conditions adaptive etak N4} display the training norms of the weight matrices. The corresponding parameter values are $\lambda=10^{-5}$ and $r=5$ for $N=3,$ $\lambda=5\cdot10^{-5}$ and $r=4$ for $N=4,$ with $\beta=2$ for all networks, $\alpha=0.9$ and $\gamma=0.4.$ 
\begin{figure}[h!] 
				\begin{subfigure}{0.45\textwidth}
					\centering\includegraphics[width=0.99\textwidth]{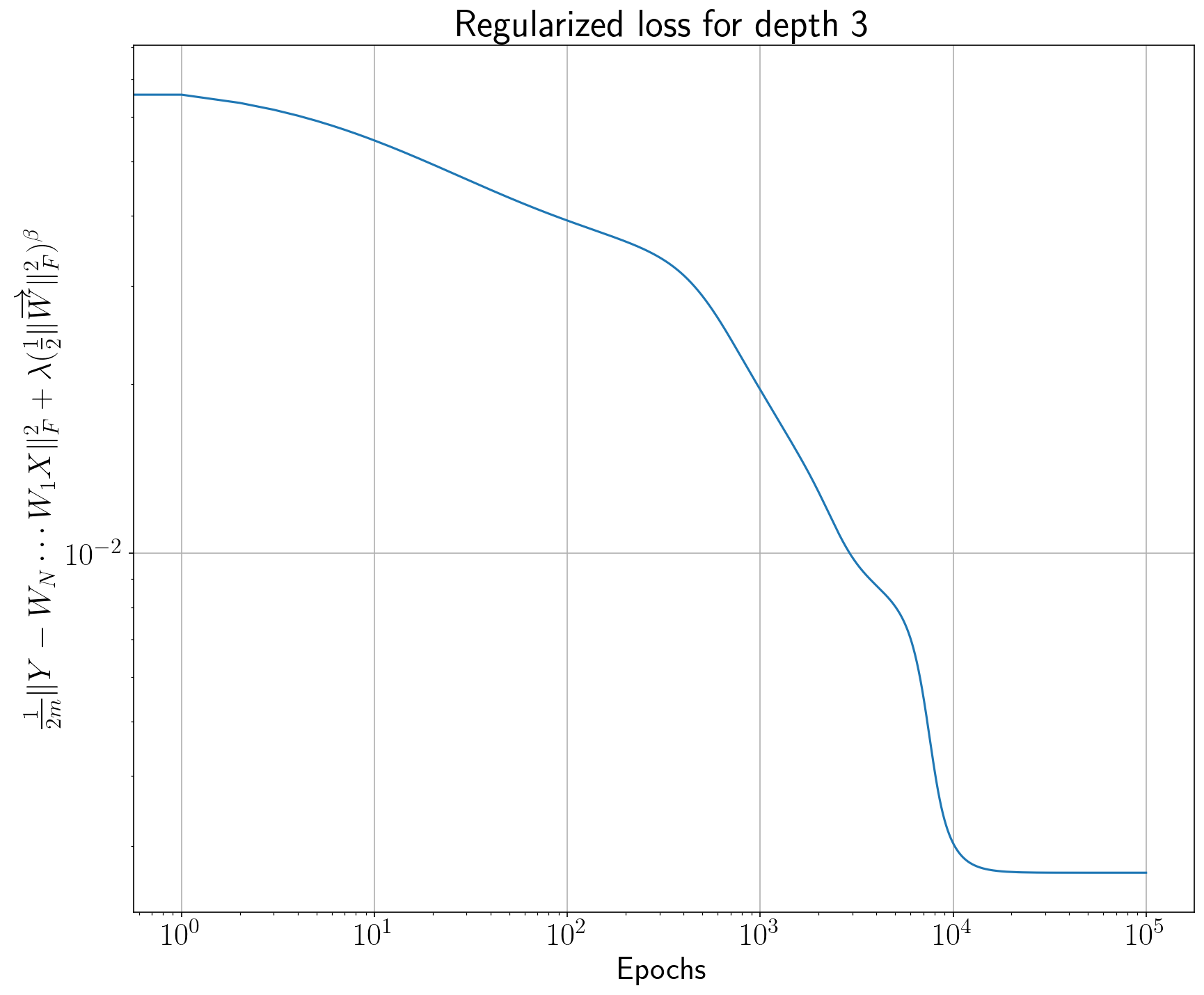}
					\caption{Evolution of the regularized losses in the network with depth 3}
				\label{fig:Regularized_loss without theo conditions adaptive etak N3}
			\end{subfigure}\hfill\begin{subfigure}{0.45\textwidth}\centering\includegraphics[width=0.99\textwidth]{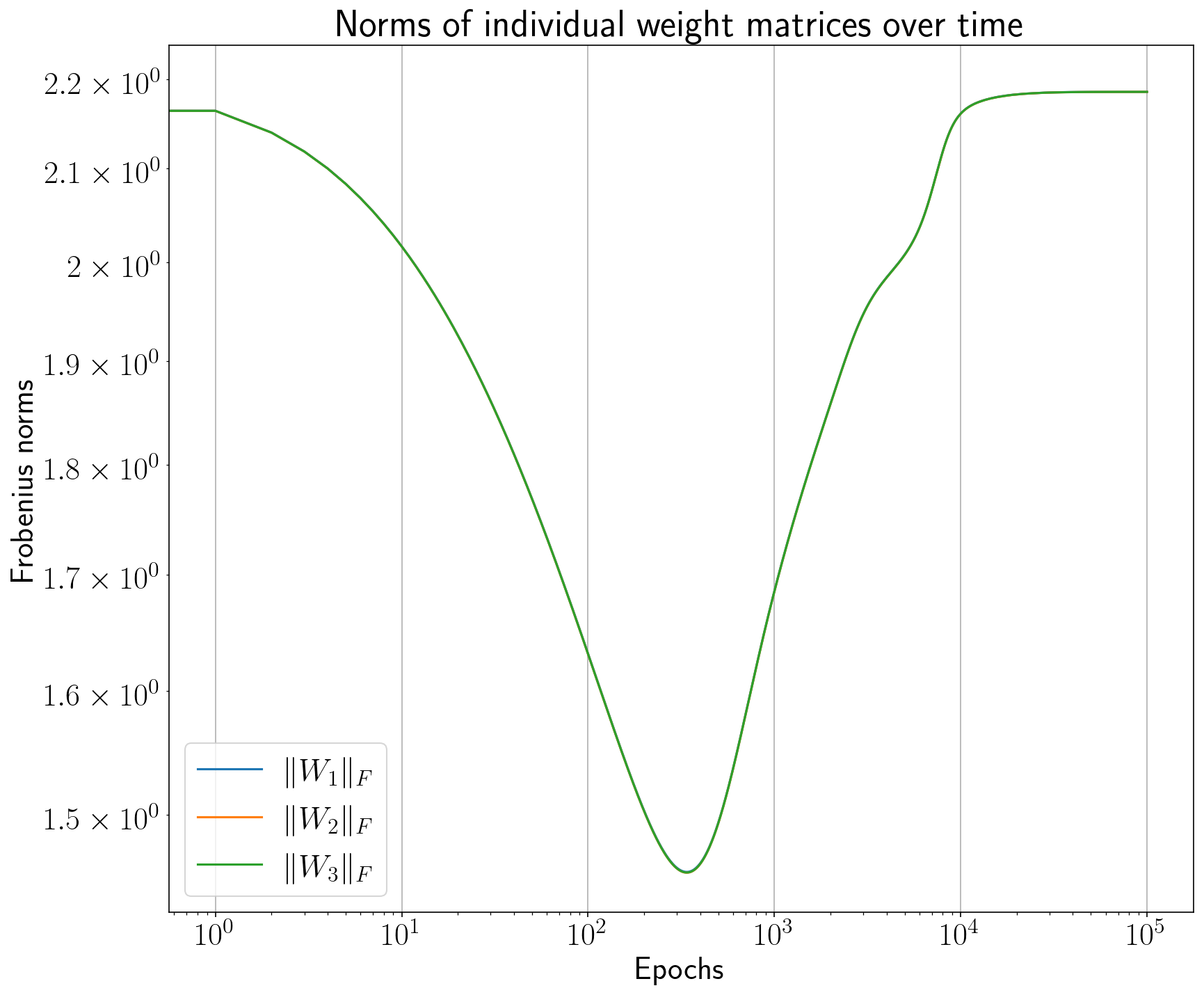}
				\caption{
					Frobenius norm of the individual weight matrices of a network of depth 3}
			\label{fig:Norms_of_Wj 
without theo conditions adaptive etak N3}
		\end{subfigure}
           				\begin{subfigure}{0.45\textwidth}
					\centering\includegraphics[width=0.99\textwidth]{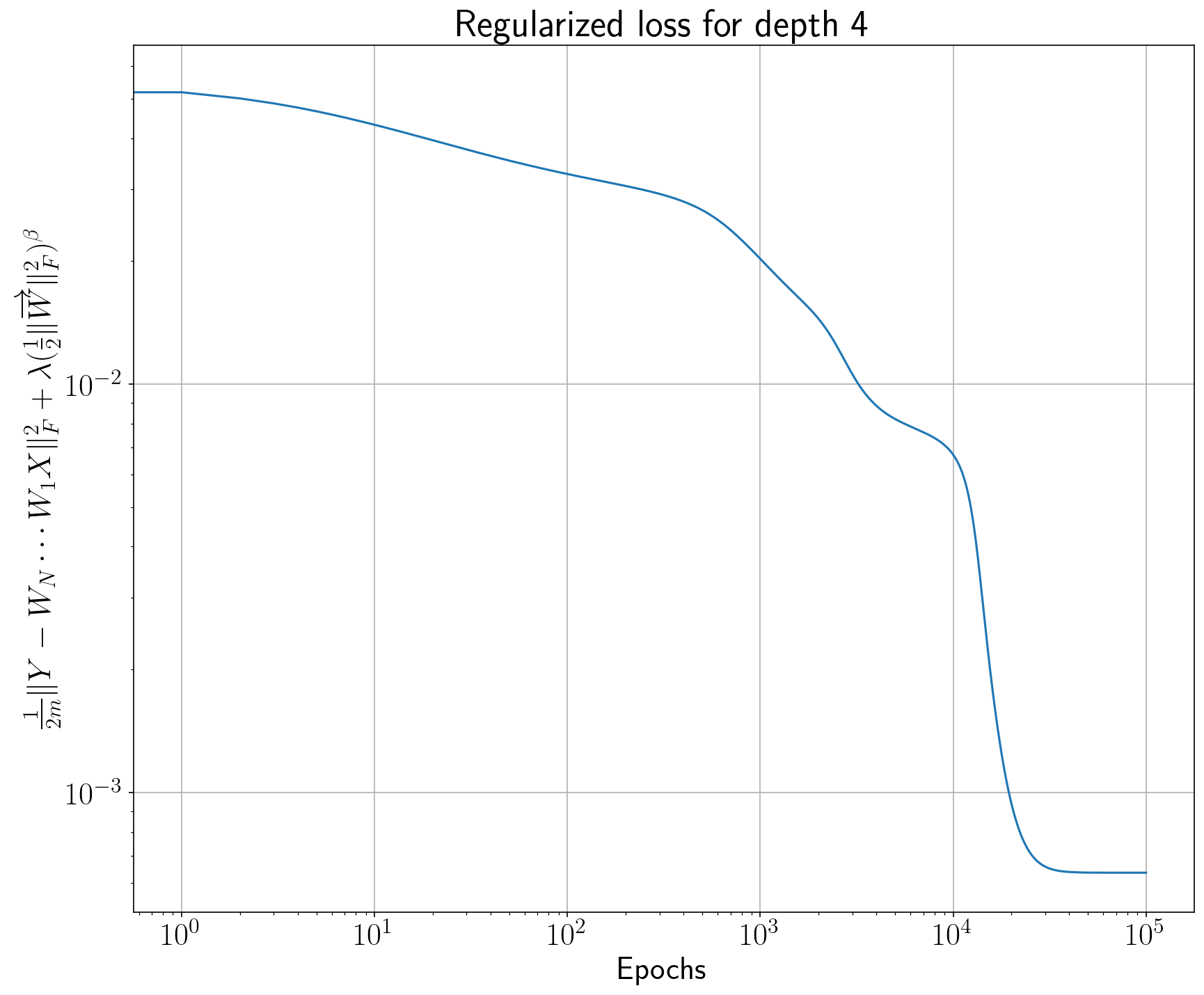}
					\caption{Evolution of the regularized loss for a network of depth 4}
				\label{fig:Regularized_loss without theo conditions adaptive etak N4}
			\end{subfigure}\hfill\begin{subfigure}{0.45\textwidth}\centering\includegraphics[width=0.99\textwidth]{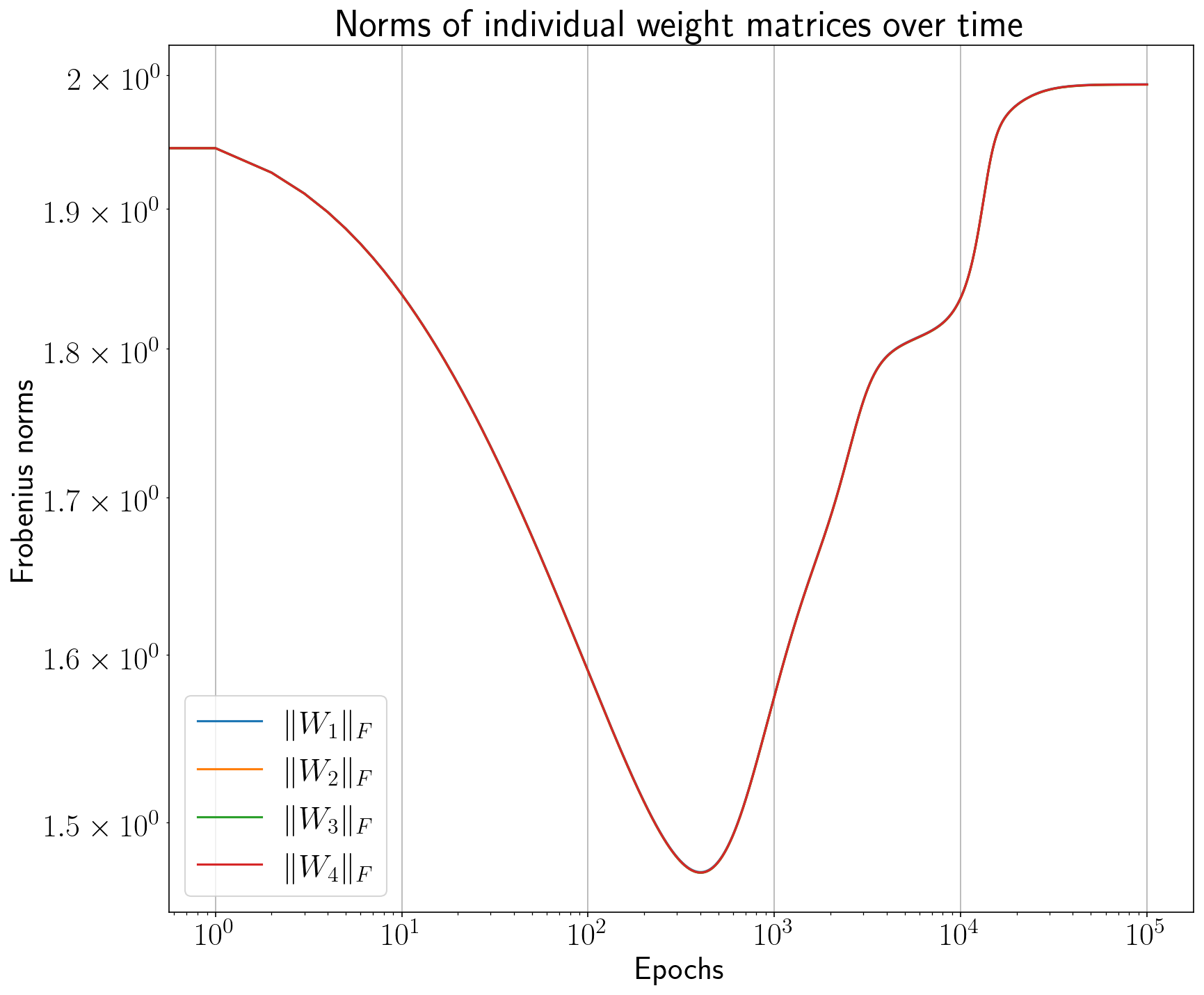}
				\caption{
					Frobenius norm of the individual weight matrices of a network of depth 4}
			\label{fig:Norms_of_Wj without theo conditions adaptive etak N4}
		\end{subfigure}
		\caption{The left panel shows the evolution of the regularized loss across networks with the modified adaptive step sizes $\eqref{exp: modified adaptive etak}$. The right panel displays the Frobenius norms of individual weight matrices for networks of depth 2, 3, and 4 during training with the same adaptive step sizes. 
       }\label{fig:regularized losses and Frobenius norm of individual weights without theo conditions adaptive stepsize}
\end{figure}

Figure~\ref{fig:Comparison of deterministic and adaptive step sizes} illustrates the convergence speed for both deterministic and adaptive step sizes. Subfigure~\ref{fig: relative loss N3} presents the relative loss, which is computed by normalizing the regularized loss iterates $\left(\tilde{L}^N\big(\W(k)\big)\right)_k$ with their initial value, whereas Subfigure~\ref{fig:side-by-side-comparison N3} displays the regularized loss for both deterministic (left panel) and adaptive step sizes (right panel). These plots indicate that the adaptive step sizes described in $\eqref{exp: deterministic etak}$ achieve faster convergence compared to the deterministic step sizes outlined in $\eqref{exp: deterministic etak}$. 

\begin{figure}[htbp]
    \centering
    \begin{subfigure}{\linewidth}
        \centering
        \includegraphics[width=0.45\linewidth]{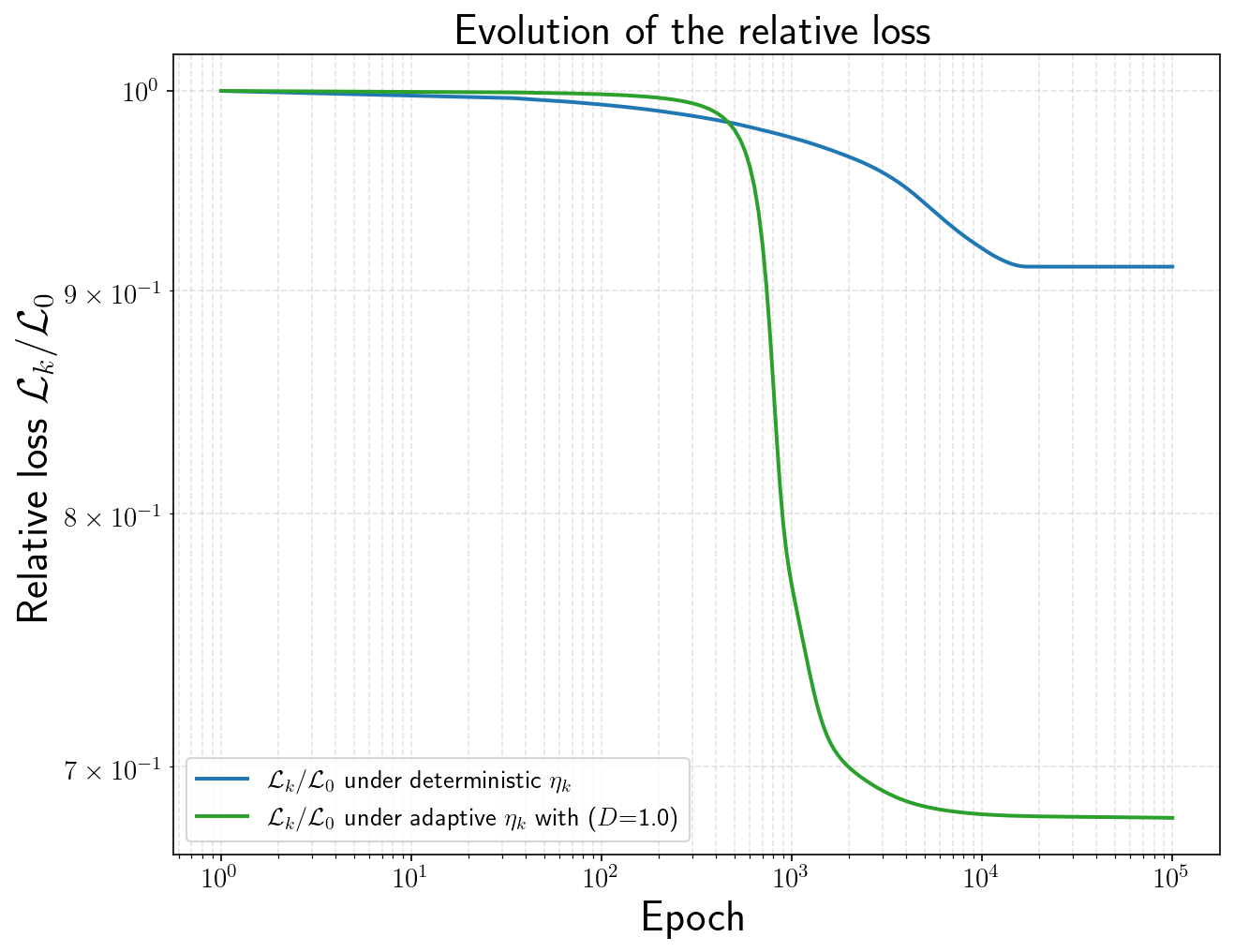}
        \caption{Evolution of the relative loss obtained by normalizing $\Tilde{L}^N\big(\W(k)\big)$ by its initial value ($\mathcal{L}_k/\mathcal{L}_0$) under deterministic and adaptive ($D=1$) stepsizes}
        \label{fig: relative loss N3}
    \end{subfigure}
    \begin{subfigure}{\linewidth}
        \centering
        \includegraphics[width=\linewidth]{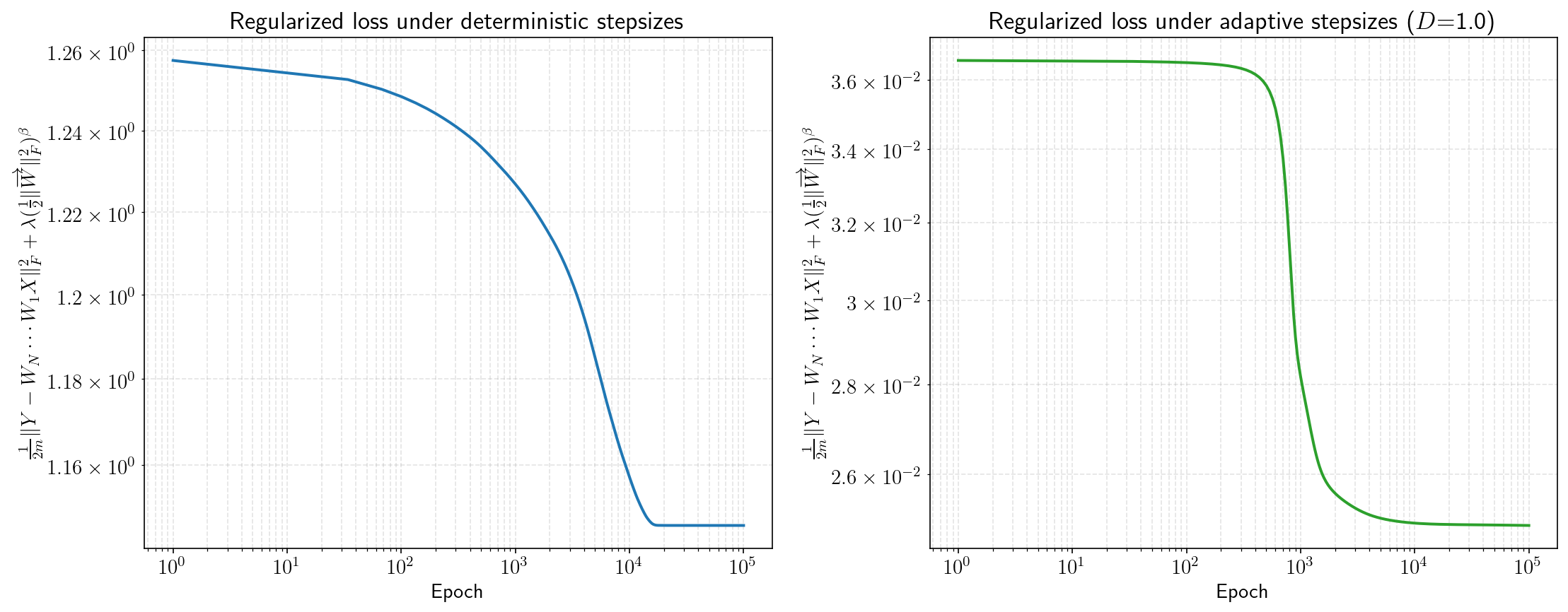}
        \caption{Evolution of the regularized loss under deterministic stepsizes (left panel) and adaptive stepsizes with $D=1$ (right panel)}
        \label{fig:side-by-side-comparison N3}
    \end{subfigure}

    \caption{Convergence comparison for deterministic versus adaptive stepsizes on a linear network of depth $3$ with parameters $\beta=5$, $\lambda=5\cdot10^{-5}$, $r=5$, and mini-batch size $T=30$.}
    \label{fig:Comparison of deterministic and adaptive step sizes}
\end{figure}

   \medskip
   
Finally, we assess convergence of SGD also for nonlinear neural networks, although these are not covered by our present theory.
More precisely, we check whether the conditions established in Theorem~\ref{thm:sgd-conv2cset} for training linear neural networks with SGD could transfer to nonlinear networks. For this purpose, we use a hyperbolic tangent activation function on the last layer, and a linear activation function on the remaining layers. This approach results in the regularized loss function
\begin{align}\label{exp:nonlinear regularized loss}
    \mathcal{L}_{\sigma}^N\big(\W\big) =\frac{1}{2m}\|Y - \sigma\big(W_N\cdots W_1X\big)\|_F^2 + \lambda\left(\frac{1}{2}\|\W\|_F^2\right)^{\beta},
\end{align}
where $\sigma(x)=\tanh(x)$ is applied componentwise. 


Figure~\ref{fig:nonlinear NN  regularized losses and Frobenius norm of individual weights with deterministic stepsize} presents the learning behavior of SGD in two settings of nonlinear neural networks. The values of $\mathcal{L}_{\sigma}^N\big(\W(k)\big)$ versus iterations are 
shown in Subfigures~\ref{fig:nonlinear NN regularized_loss deterministicetak N3} and \ref{fig:nonlinear NN regularized_loss deterministicetak N4} for networks of depth $3$ and $4$ respectively. Plots~\ref{fig:nonlinear NN Norms_of_Wj deterministicetak N3} and \ref{fig:nonlinear NN Norms_of_Wj deterministicetak N4} display the norms of the training weights under the deterministic stepsize rule in \eqref{exp: deterministic etak}.  Figure~\ref{fig:nonlinear NN  regularized losses and Frobenius norm of individual weights with deterministic stepsize} empirically demonstrate the boundedness of SGD in nonlinear networks and the convergence of $\left(\mathcal{L}_{\sigma}^N\big(\W(k)\big)\right)_k$ 
under the deterministic stepsizes \eqref{exp: deterministic etak}. These suggests that certain 
theoretical properties established for SGD with deterministic stepsizes in linear networks 
may extend to nonlinear networks. Furthermore, a comparison of Figure~\ref{fig:regularized losses and Frobenius norm of individual weights with deterministic stepsize} with Figure~\ref{fig:nonlinear NN  regularized losses and Frobenius norm of individual weights with deterministic stepsize} indicates that similar patterns are observed in the SGD experiments 
for both linear and nonlinear networks. 
The values of the regularization parameters $\lambda,$ $\beta$ and the rank $r$ are consistent with those used in Figure~\ref{fig:regularized losses and Frobenius norm of individual weights with deterministic stepsize}. 
\begin{figure}[h!] 
				\begin{subfigure}{0.45\textwidth}
					\centering\includegraphics[width=0.99\textwidth]{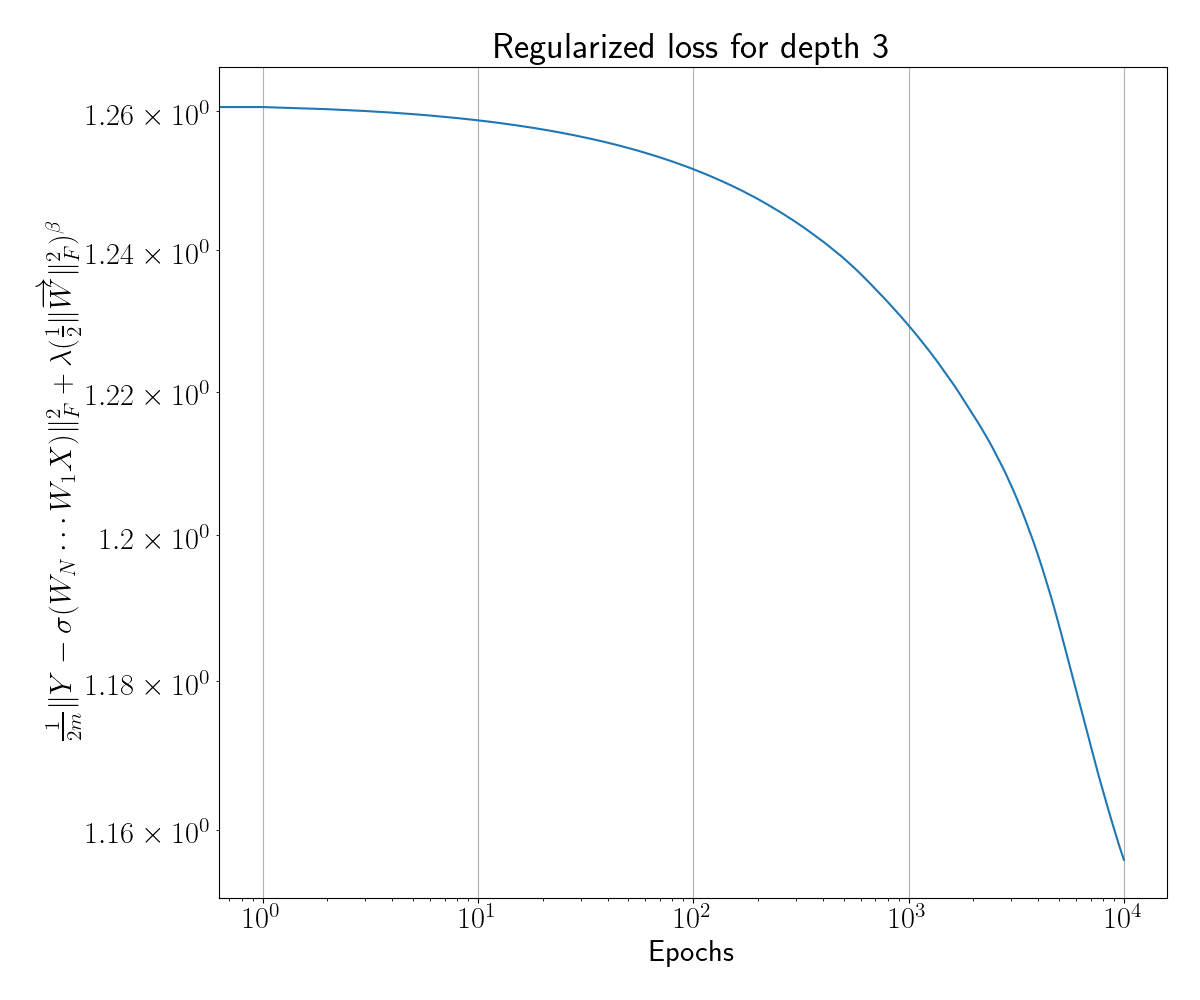}
					\caption{Evolution of the 
                   regularized 
                   losses for the network of depth 3}
				\label{fig:nonlinear NN regularized_loss deterministicetak N3}
			\end{subfigure}\hfill\begin{subfigure}{0.45\textwidth}\centering\includegraphics[width=0.99\textwidth]{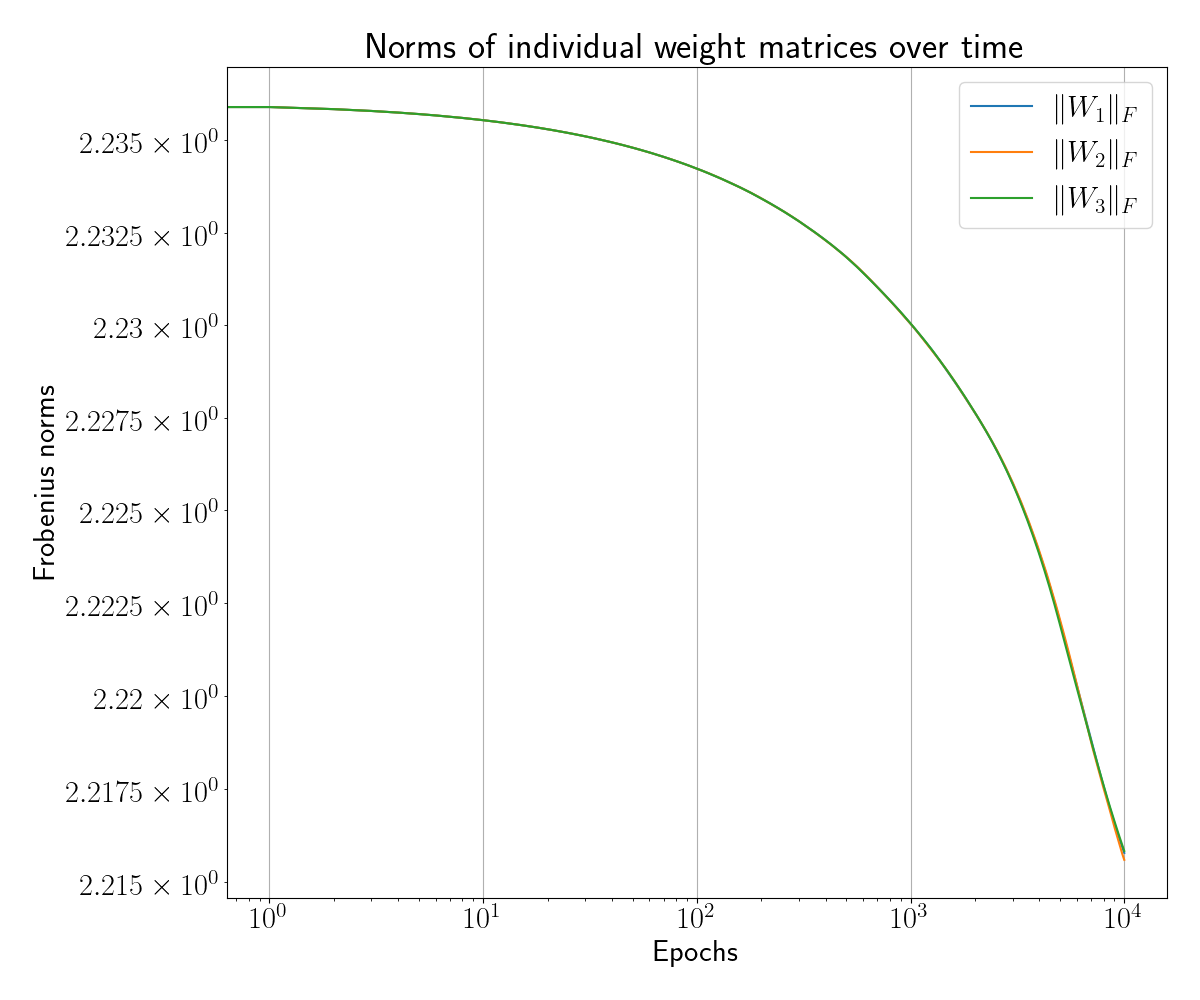}
				\caption{
					Frobenius norm of the individual weight matrices of the network of depth 3}
			\label{fig:nonlinear NN Norms_of_Wj deterministicetak N3}
		\end{subfigure}
           				\begin{subfigure}{0.45\textwidth}
					\centering\includegraphics[width=0.99\textwidth]{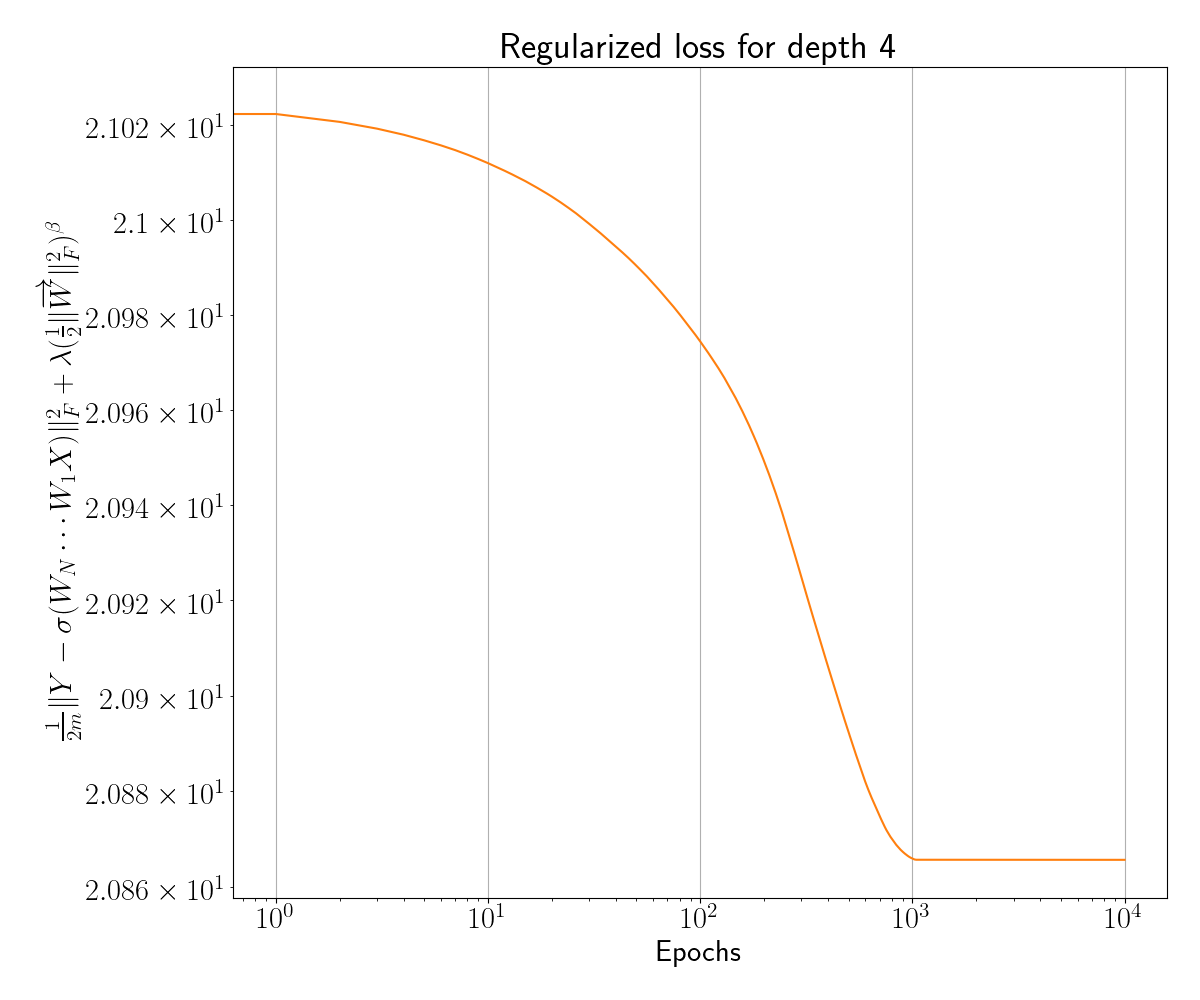}
					\caption{Evolution of the 
                   regularized 
                   losses for the network of depth 4}
				\label{fig:nonlinear NN regularized_loss deterministicetak N4}
			\end{subfigure}\hfill\begin{subfigure}{0.45\textwidth}\centering\includegraphics[width=0.99\textwidth]{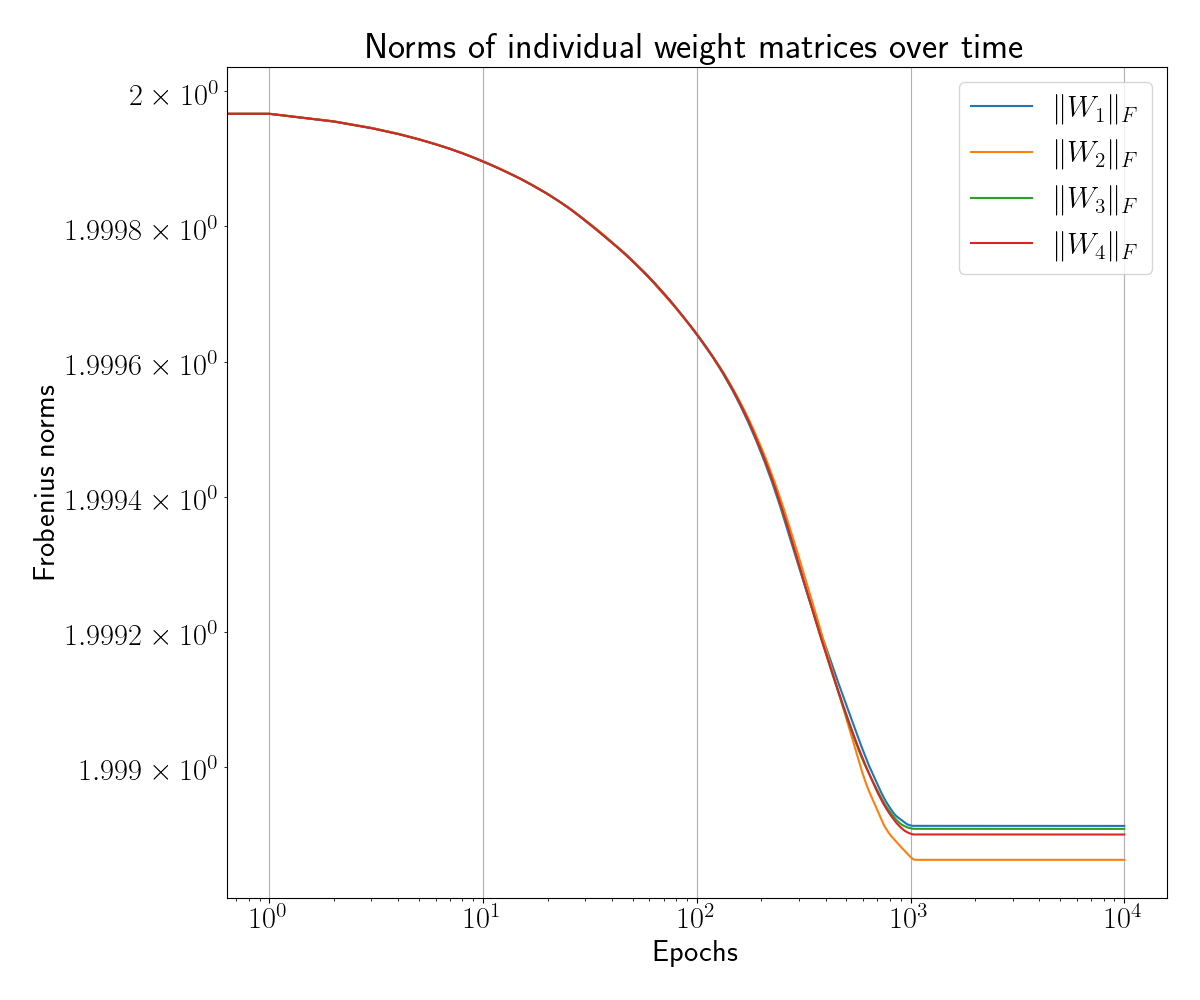}
				\caption{
					Frobenius norm of the individual weight matrices of the network of depth 4}
			\label{fig:nonlinear NN Norms_of_Wj deterministicetak N4}
		\end{subfigure}
		\caption{Training nonlinear neural networks: evolution 
       of the regularized loss functions across networks under deterministic step sizes in $\eqref{exp: deterministic etak}$ on the left panel. Frobenius norms of individual weight matrices of the corresponding networks for depth 3, and 4 
       on the right panel.}\label{fig:nonlinear NN  regularized losses and Frobenius norm of individual weights with deterministic stepsize}
\end{figure}

Figure~\ref{fig:nonlinear NN  regularized losses and Frobenius norm of individual weights with adaptive stepsize} depicts the behavior of SGD with adaptive stepsizes scheduled as in \eqref{exp: adaptive etak}, during the training of nonlinear neural networks. 
Plots~\ref{fig:nonlinear NN regularized_loss adaptiveetak N3} and \ref{fig:nonlinear NN regularized_loss adaptiveetak N4} show that the regularized loss functions for nonlinear networks of depth $3$ and $4$ converge when applied to SGD iterates. Plots~\ref{fig:nonlinear NN Norms_of_Wj adaptiveetak N3} and \ref{fig:nonlinear NN Norms_of_Wj adaptiveetak N4} show that the SGD iterates remain bounded during the training. 
The parameters $\lambda,$ $\beta,$ and $r$ used in Figure~\ref{fig:nonlinear NN  regularized losses and Frobenius norm of individual weights with adaptive stepsize} are consistent with those in Figure~\ref{fig:regularized losses and Frobenius norm of individual weights with adaptive stepsize}. A comparison of these two figures, 
indicates that the training trajectories of nonlinear neural networks are similar to those observed in their linear counterparts. 

\begin{figure}[h!] 
				\begin{subfigure}{0.45\textwidth}
					\centering\includegraphics[width=0.99\textwidth]{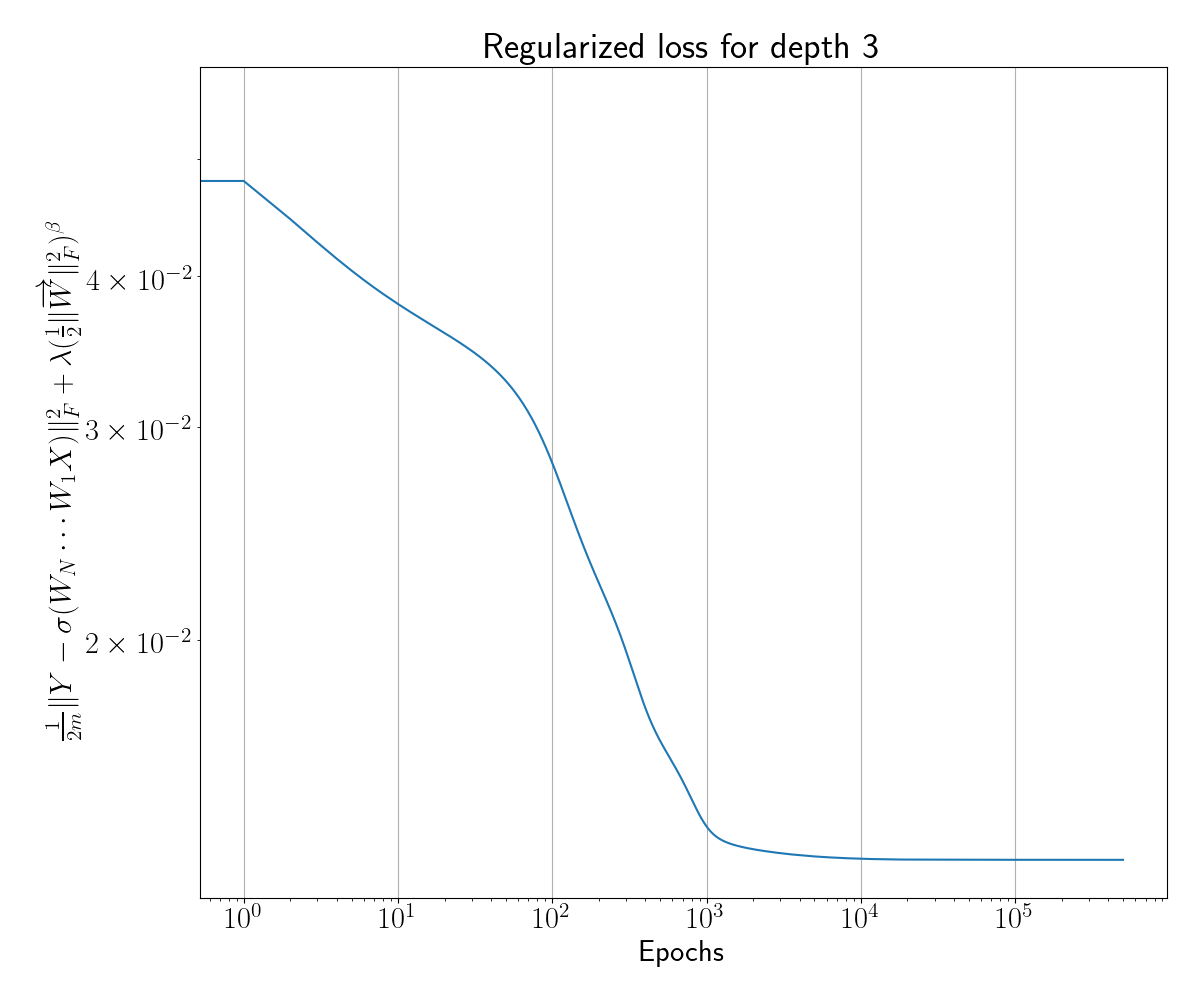}
					\caption{Evolution of the 
                   regularized 
                   losses for the network of depth 3}
				\label{fig:nonlinear NN regularized_loss adaptiveetak N3}
			\end{subfigure}\hfill\begin{subfigure}{0.45\textwidth}\centering\includegraphics[width=0.99\textwidth]{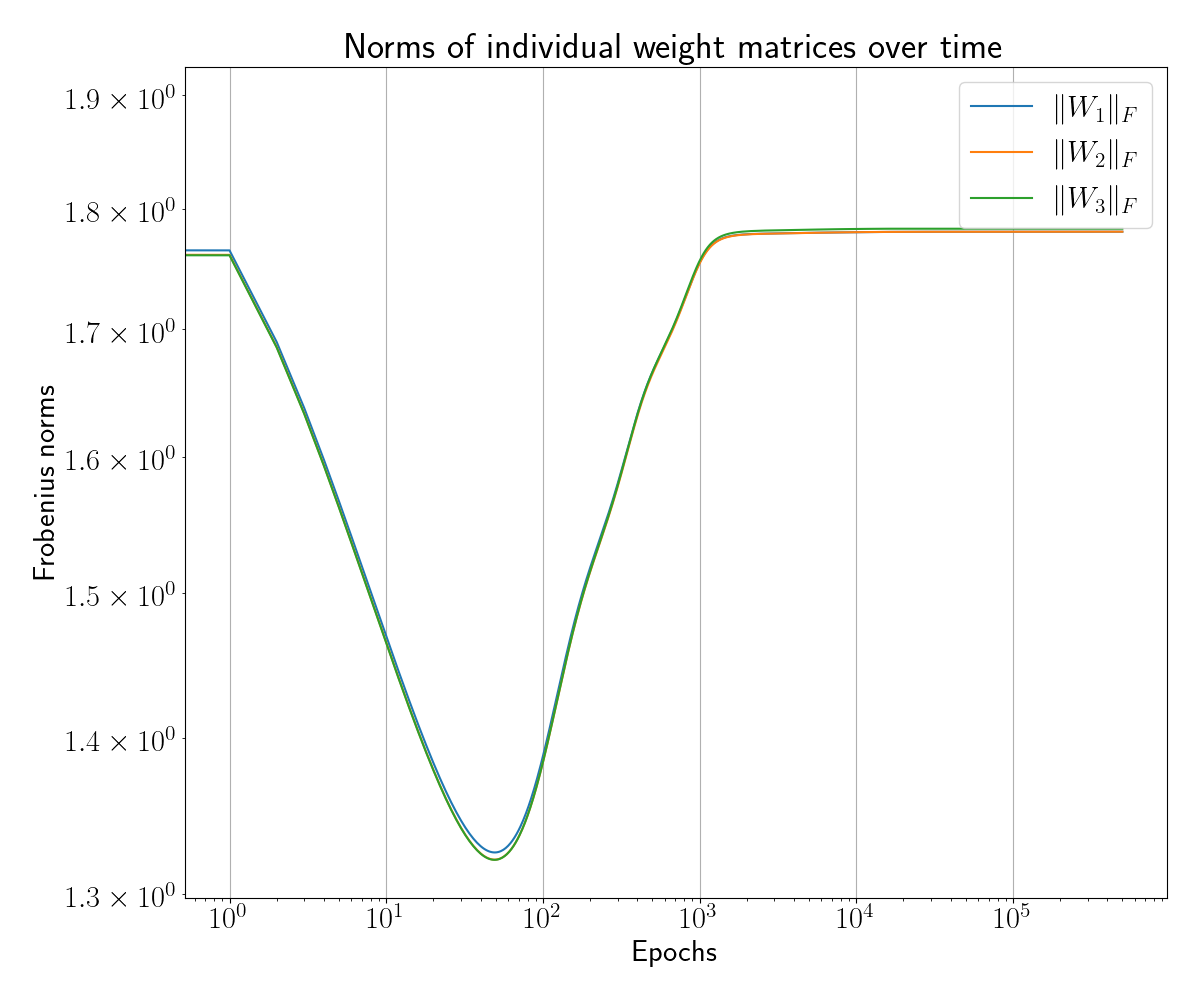}
				\caption{
					Frobenius norm of the individual weight matrices of the network of depth 3}
			\label{fig:nonlinear NN Norms_of_Wj adaptiveetak N3}
		\end{subfigure}
           				\begin{subfigure}{0.45\textwidth}
					\centering\includegraphics[width=0.99\textwidth]{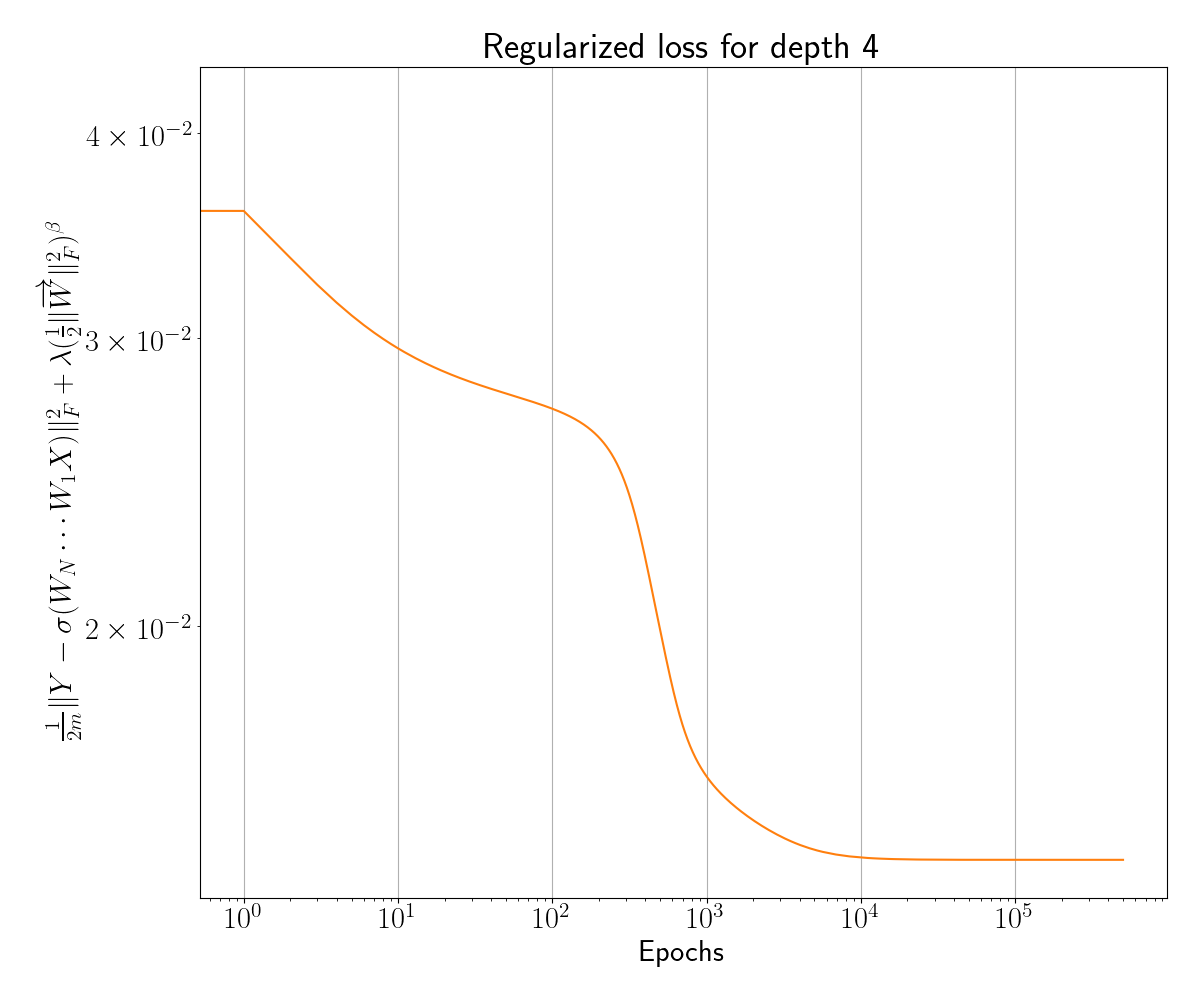}
					\caption{Evolution of the 
                   regularized 
                   losses for the network of depth 4}
				\label{fig:nonlinear NN regularized_loss adaptiveetak N4}
			\end{subfigure}\hfill\begin{subfigure}{0.45\textwidth}\centering\includegraphics[width=0.99\textwidth]{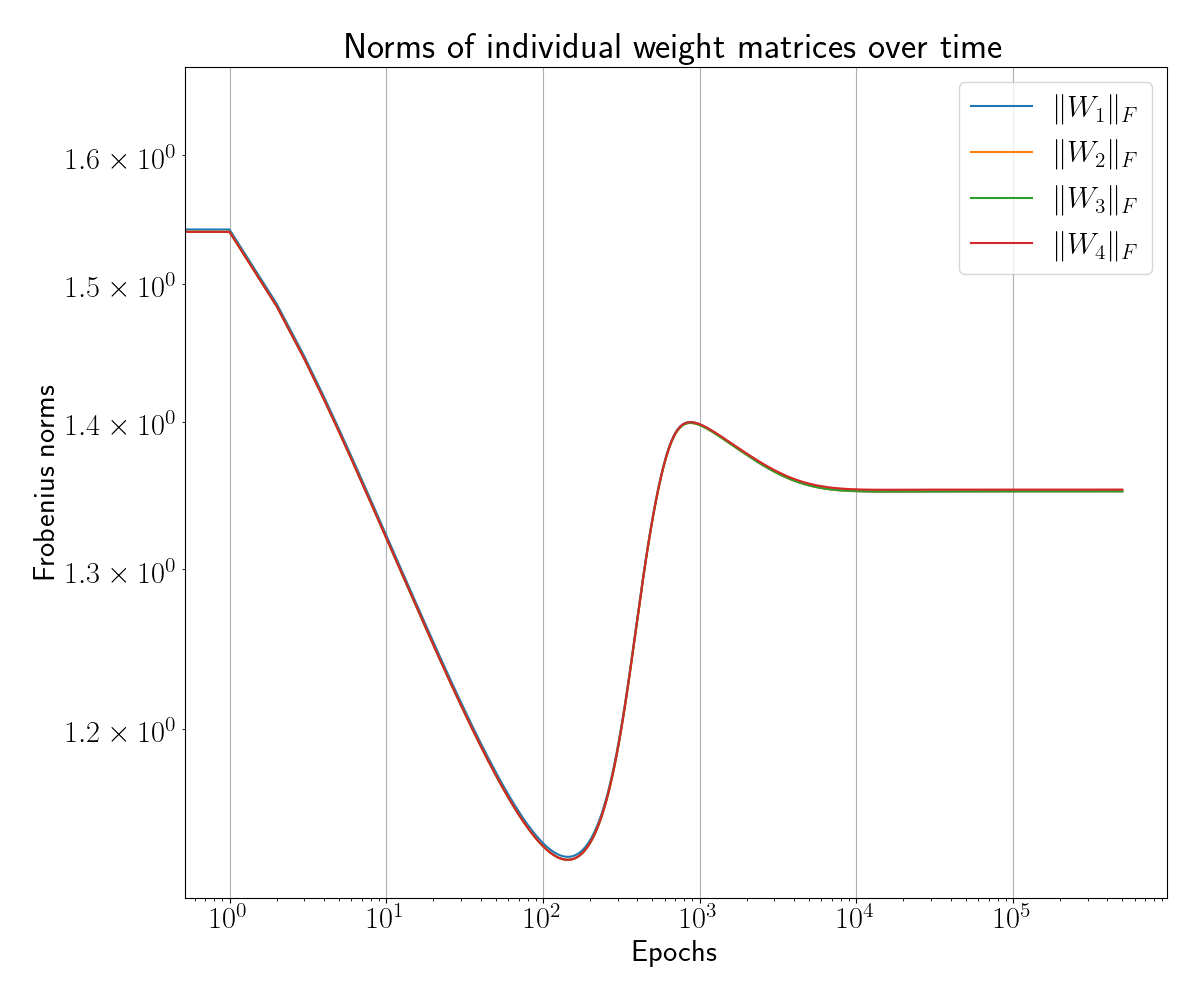}
				\caption{
					Frobenius norm of the individual weight matrices of the network of depth 4}
			\label{fig:nonlinear NN Norms_of_Wj adaptiveetak N4}
		\end{subfigure}
		\caption{Training nonlinear neural networks: evolution 
       of the regularized loss functions across networks under adaptive stepsizes in $\eqref{exp: adaptive etak}$ on the left panel. Frobenius norms of individual weight matrices of the corresponding networks for depth 3 and 4 
       on the right panel.}\label{fig:nonlinear NN  regularized losses and Frobenius norm of individual weights with adaptive stepsize}
\end{figure}

\FloatBarrier

  \section{Proofs}\label{sec:proofs}
  
  \subsection{Proof of Proposition~\ref{prop:crucial_apt_tool_update}} \label{prf:crucial_apt_tool_update}

\begin{proof}  
  The proof is adapted from the proof of \cite[Proposition 4.2]{benaim1999dynamics} additionally taking into account the randomness of the stepsizes.
  
  Let $t\geq 0,$ let $ T>0,$ and 
  define for $i\in \mathbb N_0$ the truncated step size $a_i$ as the length (i.e., the  Lebesgue measure) of the interval $[\tau_i, \tau_{i+1})\cap [t, t+T]$. Since $\eta_{i+1}$ is $\F_i$ measurable, it follows that $a_i$  is also $\F_i$ measurable (use that $a_i = \max\left\{\min\{\tau_{i+1},t+T\}-\max\{\tau_i,t\},0\right\}$). Also, since almost surely the sum of the $\eta_i$ diverges, it  holds almost surely that  only finitely many $a_i$ are nonzero. Indeed, we have $a_i=0$ for $i<m(t)$ and for $i>m(t+T)$.  (Recall that $m(s):=\sup\{k\geq0:~s\geq\tau_k\}$.) 
  We consider the process $\{S_k\}$ defined by
  	\begin{align*}
  		S_k =\sum_{i=0}^{k-1} a_{i} U_{i+1},\quad \text{with $S_0=0.$}
  	\end{align*}
  	Then the process $\{S_k\}$ is a martingale with respect to the filtration $(\F_k)$.
    Indeed, obviously $S_k$ is $\F_k$ measurable and $\E[S_{k+1}|\F_k]=\E[S_{k}+a_kU_{k+1}|\F_k]=S_k+a_k\E[U_{k+1}|\F_k]=S_k$, where we used that $S_k$ and $a_k$ are $\F_k$ measurable.

Recall that by definition $\displaystyle\Delta(t,T)=\sup_{0\leq h\leq T}\left\|\int_{t}^{t+h}\overline{U}(s)ds\right\|$ with $\overline{U}$ defined in \eqref{def: U-bar eta-bar}. Since the integral is a continuous and piecewise affine function in $h$ and   the norm is convex, the supremum is attained at a break point, i.e., one of the points $\{t,\tau_{m(t)+1},\tau_{m(t)+2},\hdots,\tau_{m(t+T)}, t+T\}$. Thus 
$$\displaystyle\Delta(t,T)=\sup_{k\geq 0}\left\|\sum_{i=0}^{k-1}a_iU_{i+1}\right\|.$$
   Now we  apply Burkholder's inequality  
   \cite[Theorem 6.25]{Stroock_2024}
  to obtain the estimate 
\begin{align}\label{qth moment of S_k}
  		\E\left[ (\displaystyle\Delta(t,T) )^q \right] 
  		\leq C_q \E\left[ \left( \sum_{i= 0}^{\infty}a_i^2 \| U_{i+1} \|^2 \right)^{\frac{q}{2}} \right]
  	\end{align}
    for some constant $C_q > 0$.
      To further estimate the upper bound in $\eqref{qth moment of S_k}$,
  	we 
    use the following consequence of Hölder's inequality valid for any $\alpha_i, \beta_i \in \RR$, $u>1$ and $0 < \delta < 1$:
  	\begin{align}\label{variant of Holder inequality}
  		\sum_i |\alpha_i \beta_i| 
  		\leq \left( \sum_i |\alpha_i|^{\frac{\delta u}{u-1}} \right)^{\frac{u-1}{u}} \left(\sum_i |\alpha_i|^{(1-\delta)u} |\beta_i|^u\right)^{\frac{1}{u}}. 
  	\end{align}
     Assuming $q>2$ and setting $u = \frac{q}{2}$, $\delta =\frac{q-2}{2q}$, $\alpha_i = a_{i}^2$ and $\beta_i = \| U_{i+1} \|^2$, inequality $\eqref{variant of Holder inequality}$ 
  	applied to the estimate in $\eqref{qth moment of S_k}$ 
  	gives
  	\begin{align*}
  		&\E\left[ (\displaystyle\Delta(t,T) )^q \right] 
  		\leq C_q \E \left[ \left( \sum_{i=0}^{\infty} a_{i} \right)^{\frac{q}{2} - 1} \left( \sum_{i=0}^{\infty} a_i^{1+\frac{q}{2}} \| U_{i+1} \|^q \right) \right].  
  	\end{align*} Note  that the involved  sums are a.s. finite sums.
  	Since $\sum_{i=0}^{\infty} a_{i}=T$ a.s., we obtain
  	\begin{align}\label{E-est1}
  		&\E\left[ (\displaystyle\Delta(t,T) )^q \right] 
  		\leq C_q T^{\frac{q}{2} - 1}\E \left[  \sum_{i=0}^{\infty} a_i^{1+\frac{q}{2}} \| U_{i+1} \|^q  \right]. 
  	\end{align}
   Note that this inequality also holds for $q=2$ because in this case it coincides with \eqref{qth moment of S_k}.
Next, observe that 
\begin{equation}
\begin{aligned}\label{E-est2}
   \E \left[  \sum_{i=0}^{\infty} a_i^{1+\frac{q}{2}} \| U_{i+1} \|^q  \right]&=\sum_{i=0}^{\infty} \E \left[ a_i^{1+\frac{q}{2}} \| U_{i+1} \|^q  \right] = \sum_{i=0}^{\infty} \E \left[ \E \left[a_i^{1+\frac{q}{2}} \| U_{i+1} \|^q \mid\F_i \right] \right] \\& = \sum_{i=0}^{\infty} \E\left[ a_i^{1+\frac{q}{2}}\E \left[  \| U_{i+1} \|^q \mid\F_i \right]\right]\leq C\sum_{i=0}^{\infty} \E\left[ a_i^{1+\frac{q}{2}}\right]=C\E\left[\sum_{i=0}^{\infty}  a_i^{1+\frac{q}{2}}\right]. 
\end{aligned}
\end{equation}
Here, the first and the last equality  hold by Tonelli's theorem (or monotone convergence), the third equality holds because  $a_i$ is $\F_i$-measurable, and the  inequality is due to the assumption $\E \left[  \| U_{i+1} \|^q \mid\F_i \right]\leq C$.

Now, since $a_i\leq \eta_{i+1}$ by definition of  $a_i$, we obtain (on the a.s. event that the sum of the $\eta_i$ diverges) $$\sum_{i=0}^{\infty} a_i^{1+\frac q2}=\sum_{i=0}^{\infty} a_ia_i^{\frac q2}\leq \sum_{i=0}^{\infty} a_i\eta_{i+1}^{\frac q2}=\sum_{i=0}^{\infty} \int_{[\tau_i, \tau_{i+1})\cap [t, t+T]} \overline{\eta}(s)^{q/2} ds=\int_t^{t+T} \overline{\eta}(s)^{q/2} ds,$$    
which implies 
\begin{align}\label{E-est3}
  \E\left[\sum_{i=0}^{\infty}  a_i^{1+\frac{q}{2}}\right]\leq \E\left[  \int_t^{t+T} \overline{\eta}(s)^{q/2} ds\right].
\end{align} 
Combining \eqref{E-est1}, \eqref{E-est2}, and \eqref{E-est3}, 
we obtain 
    \begin{align*}
    \E \left[\Delta(t,T)^q\right] & \leq 
    CC_q T^{\frac q2-1} \E \left[\int_{t}^{t+T} \overline{\eta}(s)^{q/2} ds\right].
    \end{align*}    
    We conclude that
  	\begin{align}\label{sum pth moment delta}
  		\sum_{k=0}^{\infty}\E[\Delta(kT,T)^q] &\leq C C_q T^{\frac q2-1} \E\left[\int_{0}^{\infty} \bar{\eta}^{\frac{q}{2}}(s) ds\right]
  		 = C C_q T^{\frac q2-1}\E\left[ \sum_{i=0}^{\infty}\eta_{i+1}^{1+\frac{q}{2}}\right] < \infty
  	\end{align}
  	since by assumption $\displaystyle \E\left[ \sum_{i=0}^{\infty}\eta_{i+1}^{1+\frac{q}{2}}\right]<\infty.$ 
    
    Given $\varepsilon>0$, define the event $A_k=\{\Delta(kT,T)>\varepsilon\}$. 
  	Markov's inequality implies that
  		$\mathbb{P}(A_k) \leq \E[\Delta(kT,T)^q]\varepsilon^{-q}$,
  	and taking into account $\eqref{sum pth moment delta}$ gives
  	\begin{align*}
  		\sum_{k=0}^{\infty}P(A_k) < \infty.  
  	\end{align*}
  	The Borel-Cantelli Lemma implies then that with probability one, for all sufficiently large $k$ we have
  	\begin{align*}
  		\Delta(kT,T)\leq\varepsilon.
  	\end{align*}
    Since $\varepsilon>0$ is arbitrary this shows that
  	\begin{align*}
  		\lim_{k\to\infty}\Delta(kT,T)=0
  	\end{align*}
  	with probability one. Observing that for any $t\in [kT, (k+1)T]$ we have 
  	\begin{align*}
  		\Delta(t,T)\leq 2 \Delta(kT,T)+\Delta((k+1)T,T),
  	\end{align*}
  	we conclude
  	\begin{align*}
  		\lim_{t\to\infty}\Delta(t,T)=0
  	\end{align*}
  	with probability one. This completes the proof.
  \end{proof}
  
 \subsection{Proof of Corollary ~\ref{cor:apt_convergence}} 
 We prove Corollary ~\ref{cor:apt_convergence}
 with assumption (1) replaced by the slightly weaker assumptions given in Remark \ref{adapted_assumption}.
 \label{prf:apt_convergence}
  \begin{proof} The proof strategy leverages Sard's theorem together with Theorem \ref{thm:chain_transitive_set} and Proposition \ref{prop:chain_transitive_set} in the appendix.  
  
  For any initial value $x$ the semiflow $\Phi$ is defined on $[0, \infty)$ via the gradient flow equation 
  	\begin{align*}
  		\frac{d\Phi_x(t)}{dt} &= -\nabla V(\Phi_x(t)), \ \Phi_x(0)=x.
  	\end{align*}
   Since we assume uniqueness of the solutions, the equilibrium points coincide with the critical points of $V$.

    Let $\operatorname{Crit}(V)$ be the set of critical points of $V$ and 
let $\Lambda:= \operatorname{Crit}(V)\cap L(X)$. (Recall that $L(X)$ is the set of limit points of $X$.)
We note that $L(X)$ is  compact  (by Theorem \ref{thm:chain_transitive_set} whose application uses the assumption that $X$ is precompact)   and since $V$ is of class $C^1$, the set $\operatorname{Crit}(V)$ is closed and therefore $\Lambda$ is compact. The set $\Lambda$ is also invariant under the semiflow because $\Lambda \subseteq \operatorname{Crit}(V)$. 

Now we consider 
the restriction of $\Phi$ to $\mathbb R_+\times L(X)$. Because $L(X)$ is invariant under  $\Phi$ (again by Theorem \ref{thm:chain_transitive_set}), this restriction defines indeed  again  a semiflow and obviously $\Lambda$ is the set of equilibrium points of this restricted semiflow.

Now we claim that $V_{|L(X)}$ (i.e., the restriction of $V$ to $L(X)$) is a Lyapunov function for $\Lambda$ (wrt.\ the restricted flow on $\mathbb R_+\times L(X)$).
Indeed, by the assumption that the flow trajectories are unique (see Remark \ref{adapted_assumption}), it follows that for $x\in \Lambda$ the map $\mathbb R_+\ni t\mapsto V(\Phi_x(t))$ is constant because $x$ is a equilibrium.
Furthermore, 
since \begin{align}\label{Vdot} 
  		\frac{dV(\Phi_x(t))}{dt} &= \nabla V(\Phi_x(t))\cdot \frac{d \Phi_x(t)}{dt} = -\left\|\nabla V(\Phi_x(t))\right\|^2\leq 0,
  	\end{align}
 for $x\in L(X)\setminus \Lambda$ the expression $\frac{dV(\Phi_x(t))}{dt} $ is negative at $t=0$. Additionally, it is also negative for all $t>0$ because otherwise there would exist a minimal $t>0$ with $\frac{dV(\Phi_x(t))}{dt}=0$ and then by \eqref{Vdot} $y:=\Phi_x(t)$ would be a critical point of $V$. But by assumption   (see Remark \ref{adapted_assumption}), the trajectory $\Phi_y(t)$ is uniquely determined and hence would constantly be equal to $y$ on an interval  $(-\varepsilon(y),\infty)$ for some $\varepsilon(y)>0$, implying that $\Phi_x(t)$ would  constantly be equal to $y$  on the interval $(t-\varepsilon(y),\infty)$   contradicting the minimality of $t$.

    Since the smoothness of $V$ satisfies $r\geq m>\max\{0, m-1\}$, Sard's theorem \cite[Chapter 3, Proposition 1.3]{hirsch2012differential}
    
    implies that the Lebesgue measure in $\RR$ of  $V(\operatorname{Crit}(V))$ is zero implying that also $V(\Lambda)$ has Lebesgue measure zero and in particular has
    empty interior.
   
    To summarize, we have shown that $\Lambda$ is a compact invariant set in $L  (X)$ (wrt. the restricted flow), that  $V_{|L(X)}$  is a Lyapunov function for $\Lambda$ and that $V(\Lambda)$ has empty interior. Thus we can apply Propostion  \ref{prop:chain_transitive_set}  which implies that $L(X)$ (which is internally chain  transitive by Theorem \ref{thm:chain_transitive_set})) is contained in $\Lambda$ and that $V_{|L(X)}$ is constant. 
    In particular, $L(X)$ consists of equilibria.

  	It now remains to prove that $V(X(t))$ converges. For every sequence $(t_k)$ of nonnegative real numbers with $\lim_{k\to \infty} t_k =\infty$, the compactness of $\overline{\operatorname{im}(X)}$ yields a subsequence of the sequence $\left(X(t_k)\right)_k$ that converges in $\overline{\operatorname{im}(X)}$. 
    Let $x$ be the limit of such a converging subsequence, hence, $x\in L(X)$ by definition of $L(X)$. 
    Set $v=V(x)$ and by way of contradiction, let us assume that for some  
  	sequence of nonnegative real numbers $(\tilde{t}_k)_k,$ with $\lim_{k\to \infty} \tilde{t}_k =\infty$ the sequence $\left(V(X(\tilde{t}_k))\right)_k$ does not converge. Then there exists a subsequence $(\tilde{t}_{k_\ell})_\ell,$ and $\epsilon>0$ such that 
  	\begin{align}\label{no-convergence}
  		\left|V(X(\tilde{t}_{k_\ell})) - v\right|\geq \epsilon\quad \text{for all $\ell$}.
  	\end{align}
  	Moreover, by compactness of $\overline{\operatorname{im}(X)}$, the sequence $\left(X(\tilde{t}_{k_\ell})\right)_\ell$ has a subsequence that converges to a point $\bar x\in L(X)$. Given that the restriction of $V$ to $L(X)$ is constant (as argued above), we have $V(\bar x)=v$ which contradicts \eqref{no-convergence}.  
  \end{proof}
  
  \subsection{Proof of Proposition~\ref{prop:bounded}} \label{prf:bounded}
  \begin{proof}
	A key ingredient of this proof is the following technical lemma, which estimates the Frobenius norm of the stochastic gradient with respect to Frobenius of the tuple $\W$, and the spectral norm of the mini batch data $X_S,~Y_S$.
	\begin{lemma}\label{lem:grad-bound} Let $S \subset [m]$ and $\W = (W_1, \hdots, W_N)$. Setting $W = W_N \cdots W_1$, it holds
		\begin{align*}
			\|\nabla L_S^N(\W) \|_F & \leq \frac{\sqrt{N}}{T}\|\W\|_F^{N-1} \left( \|W\| \|X_S\|^2 + \| Y_S X_S^T\|\right)\\
			& \leq \frac{\sqrt{N}}{T}\left(\|X_S\|^2 \|\W\|_F^{2N-1} + \| Y_S X_S^T\| \|\W\|_F^{N-1}\right).
		\end{align*}
	\end{lemma}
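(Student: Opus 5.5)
We compute the gradient explicitly and then estimate it. For $L_S^N(\W)=\frac{1}{2T}\|Y_S-W_N\cdots W_1X_S\|_F^2$, the chain rule gives for each $j$
\begin{align*}
\nabla_{W_j} L_S^N(\W) = \frac{1}{T}\, W_{j+1}^T\cdots W_N^T\,\bigl(W X_S - Y_S\bigr) X_S^T\, W_1^T\cdots W_{j-1}^T
= \frac{1}{T}\, W_{j+1}^T\cdots W_N^T\,\bigl(W X_S X_S^T - Y_S X_S^T\bigr)\, W_1^T\cdots W_{j-1}^T ,
\end{align*}
with empty products interpreted as identities. The computation is the standard one from the gradient-flow literature: perturb $W_j$ by $E$ and read off the linear term.

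Next we bound each block in Frobenius norm. We use $\|ABC\|_F\le \|A\|\,\|B\|_F\,\|C\|$ and put the Frobenius norm on the middle factor, which is rank-controlled. It is simpler to write $\|ABC\|_F\le \|A\|_F\|B\|\|C\|_F$ style bounds so that only operator norms of data appear. Concretely, we get
\[
\|\nabla_{W_j} L_S^N(\W)\|_F \le \frac{1}{T}\prod_{i\neq j}\|W_i\|_F\,\bigl(\|W\|\,\|X_S\|^2+\|Y_SX_S^T\|\bigr).
\]
Here the middle factor $W X_SX_S^T - Y_SX_S^T$ is bounded in operator norm by $\|W\|\|X_S\|^2+\|Y_SX_S^T\|$. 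The outer products are bounded using $\|A\|\le\|A\|_F$ and submultiplicativity, keeping one Frobenius factor to absorb the Frobenius norm of the product, since $\|AB\|_F\le\|A\|_F\|B\|$. Each $\|W_i\|_F\le\|\W\|_F$, so $\prod_{i\ne j}\|W_i\|_F\le \|\W\|_F^{N-1}$.

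Summing the squares over $j=1,\dots,N$ gives $\|\nabla L_S^N(\W)\|_F^2\le N\cdot\frac{1}{T^2}\|\W\|_F^{2(N-1)}(\cdots)^2$, which is the first inequality after taking square roots. The second inequality follows from $\|W\|\le\prod_i\|W_i\|\le\|\W\|_F^N$.

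The only delicate point is the bookkeeping of which factor carries the Frobenius norm in the first bound. When $j=1$ or $j=N$ one side is empty, and the Frobenius norm must still be placed on some factor. We resolve this by noting that $\|ABC\|_F\le\|A\|_F\|B\|\|C\|$ whenever at least one outer factor is present, which holds since $N\ge2$; the Frobenius norm is then placed on a weight matrix rather than on the data term. Otherwise the argument is routine.
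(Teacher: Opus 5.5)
Your proposal is correct and follows essentially the same route as the paper. Both use the explicit formula $\nabla_{W_j} L_S^N(\W) = W_{j+1}^T\cdots W_N^T\,\nabla L_S^1(W)\,W_1^T\cdots W_{j-1}^T$, bound each block with the operator norm on $\nabla L_S^1(W)$ and each weight factor by $\|\W\|_F$, sum the squares to get the factor $\sqrt{N}$, and use $\|W\|\le\|\W\|_F^N$ for the second inequality. Your remark that, since $N\geq 2$, the Frobenius norm can always be placed on a weight factor rather than on an identity factor (for $j=1$ or $j=N$) makes explicit a point the paper uses only implicitly.
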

    Here $\| \cdot  \|=\| \cdot  \|_{2\to2}$ denotes the operator norm wrt. the $2$-norm.
	\begin{proof} Recalling $\nabla L^N_{S}\big(\W\big)=\begin{pmatrix}
			\nabla_{W_1}L_S^N\big(\W\big)\\
			\vdots\\
			\nabla_{W_N}L_S^N\big(\W\big)
		\end{pmatrix},$ we have
		\begin{align}\label{grad L_S^N norm sq}
			\left\|\nabla L^N_{S}\big(\W\big)\right\|_F^2 = \left\|\nabla_{W_1} L^N_{S}\big(\W\big)\right\|_F^2 + \cdots + \left\|\nabla_{W_N} L^N_{S}\big(\W\big)\right\|_F^2.
		\end{align}
        The gradients of $L_S^1$ and $L_S^N$ are given by 
        \cite{arora2018convergence,arora2018optimization,bah2022learning}
		\begin{align}
			\nabla L^1_S(W)&=\frac{1}{T}\big(W X_SX_S^T - Y_SX_S^T\big) ,\label{stochastic grad L1}\\
			\nabla_{W_j} L^N_{S}\big(\W\big)&=W_{j+1}^T \hdots W_N^T \nabla L^1_S(W) W_1^T \hdots W_{j-1}^T. \label{stochastic grad LN}
		\end{align}
		It follows that
		\begin{align*}
			\left\|\nabla_{W_j} L^N_{S}\big(\W\big)\right\|_F
			& = \|W_{j+1}^T \hdots W_N^T \nabla L^1_S(W) W_1^T \hdots W_{j-1}^T\|_F
			\leq \prod_{i=1, i \neq j}^N \|W_j\|_F \|\nabla L^1_S(W)\|
		\end{align*}
		whereby we used the Frobenius norm property $\| A B \|_F \leq \|A \| \|F\|_F$ for matrices $A$, $B$ (of matching dimensions) to get the second inequality. Further, using the fact that $\|A\| \leq \|A\|_F$ and $\|W_j\|_F \leq \|\W\|_F$ we obtain
		\begin{align*}
			\left\|\nabla_{W_j} L^N_{S}\big(\W\big)\right\|_F
			& \leq \frac{1}{T}\|\W\|_F^{N-1} \|(W X_S - Y_S) X_S^T\|
			\leq \frac{1}{T}\|\W\|_F^{N-1} (\|W_N \cdots W_1\| \|X_S X_S^T\| + \|Y_S X_S^T\|)\\
			& \leq \frac{1}{T}\|\W\|_F^{N-1} (\|\W\|_F^N \|X_S X_S^T\| + \|Y_S X_S^T\|).
		\end{align*}
		Substituting the above estimates into $\eqref{grad L_S^N norm sq}$ gives
		\begin{align*}
			\|\nabla L_S^N(\W) \|_F & \leq \frac{\sqrt{N}}{T}\|\W\|_F^{N-1} \left( \|W\| \|X_S\|^2 + \| Y_S X_S^T\|\right)
			\leq \frac{\sqrt{N}}{T}\big(\|X_S\|^2 \|\W\|_F^{2N-1} + \| Y_S X_S^T\| \|\W\|_F^{N-1}\big).
		\end{align*}
		This completes the proof.
	\end{proof}
	
	We now prove  Proposition~\ref{prop:bounded} using an induction argument.
	For $k=0$, inequality \eqref{eq:bound_w} holds trivially since $\|\W(0)\|_F \leq \max\{ \|\W(0)\|, M+B\}$. 
	Let us assume that \eqref{eq:bound_w} holds for $0,1,\dots,k$ and prove the statement for $k+1$. The stochastic gradient descent scheme \eqref{eq:sgd} gives
	\begin{align}
		\W(k+1) &=\W(k)-\eta_{k}\nabla\Tilde{L}^N_{S_{k}}\big(\W(k)\big)
		=\W(k)-\eta_{k} \left(\nabla R_{\beta,\lambda}\big(\W(k)\big)+ \nabla L^N_{S_{k}}\big(\W(k)\big)\right)\nonumber\\
		&=\W(k) - \lambda \beta \eta_{k}\W(k)\left(\frac{1}{2}\|\W(k)\|_F^2\right)^{\beta-1}-\eta_{k}\nabla L^N_{S_{k}}\big(\W(k)\big)\nonumber\\
		&= \left(1 - \beta\lambda\eta_{k} \left(\frac{1}{2} \|\W(k)\|_F^2\right)^{\beta-1} \right) \W(k) 
		-\eta_{k} \nabla L^N_{S_{k}}\left( \W(k) \right), \label{W-iterate}
	\end{align}
	where we substituted the gradient of $R_{\beta,\lambda}$ to get the third equation. By using the definition of the stepsize as $\eta_k = \frac{b_k}{\beta\lambda \max\left\{\frac{1}{2}\|\W(0)\|_F^2, \frac{1}{2}(M+B)^2 \right\}^{\beta-1}}$ and the induction hypothesis together with $b_k\leq 1$, we obtain
	\begin{equation}\label{bound1}
		\beta\lambda\eta_{k}\left(\frac{1}{2}\|\W(k)\|_F^2\right)^{\beta-1}
		= \frac{b_k\left(\frac{1}{2}\|\W(k)\|_F^2\right)^{\beta-1}}{\max\left\{\frac{1}{2}\|\W(0)\|_F^2,\frac{1}{2}(M+B)^2\right\}^{\beta-1}} \leq 1.
	\end{equation}
	At this point, we distinguish two cases, namely $\|\W(k)\|_F \leq M$ and $\|\W(k)\|_F > M.$
	
	We proceed with the first case assuming $\|\W(k)\|_F \leq M$. Then \eqref{bound1} together with the triangle inequality applied to 
	\eqref{W-iterate} gives
	\begin{align}
		\| \W(k+1) \|_F & \leq 
		\left| 1 - \beta\lambda\eta_{k}\left(\frac{1}{2}\|\W(k)\|_F^2\right)^{\beta-1}\right| \|\W(k)\|_F + \eta_k \left\|\nabla L^N_{S_{k}}\big(\W(k)\big)\right\|_F\nonumber\\
		& \leq \|\W(k)\|_F + \eta_k \frac{\sqrt{N}}{T}\left(\|X_S\|^2 \|\W(k)\|_F^{2N-1} + \| Y_S X_S^T\| \|\W(k)\|_F^{N-1}\right)\nonumber\\
		&\leq\|\W(k)\|_F + \eta_k\sqrt{N}\left(P_T \|\W(k)\|_F^{2N-1} + Q_T \|\W(k)\|_F^{N-1}\right)\nonumber\\
		& \leq M + \frac{\sqrt{N}b_k(P_T M^{2N-1} + Q_T M^{N-1})}{\beta\lambda \max\left\{\frac{1}{2}\|\W(0)\|_F^2, \frac{1}{2}(M+B)^2\right\}^{\beta-1}} \label{Wk-bound1}
	\end{align}
	where we used the bound of Lemma \ref{lem:grad-bound} to get the second inequality, and use the definitions \eqref{def:p_t} and \eqref{def:q_t} of $P_T$ and $Q_T$ to get the third inequality. We then used $\|\W(k)\|_F \leq M$ to obtain the last inequality. The definition \eqref{def:B} of $B$ yields
	\begin{align}\label{eq:derive from def B}
		\sqrt{N}\left(P_T M^{2N-1} + Q_T M^{N-1}\right) = B \lambda \beta \max\left\{\frac{1}{2}\|\W(0)\|_F^2, \frac{1}{2}M^2\right\}^{\beta-1}.
	\end{align}
	Substituting this  
	into \eqref{Wk-bound1} and using $b_k\leq 1$ 
    leads to 
	\begin{align*}
		\|\W(k+1)\|& \leq M + \frac{B \lambda \beta \max\left\{\frac{1}{2}\|\W(0)\|_F^2, \frac{1}{2}M^2\right\}^{\beta-1}}{\beta\lambda \max\left\{\frac{1}{2}\|\W(0)\|_F^2, \frac{1}{2}(M+B)^2\right\}^{\beta-1}}
		\leq M+B,
	\end{align*}
	which proves \eqref{eq:bound_w} for $k+1$ in the case where $\|\W(k)\|_F \leq M.$
	
	For the second case we assume that $\|\W(k)\|_F > M$.
	Applying the triangle inequality on $\eqref{W-iterate}$ and then using \eqref{bound1} and the bound in Lemma~\ref{lem:grad-bound} gives
	\begin{align}
		&\|\W(k+1)\|_F \nonumber\\
		\leq& \left(1-\beta \lambda \eta_k \left( \frac{1}{2} \|\W(k)\|_F^2\right)^{\beta-1}\right)\|\W(k)\|_F + \eta_k \|\nabla L^N_{S_k}(\W(k))\|_F\nonumber\\
		\leq& \left(1-\beta \lambda \eta_k \left( \frac{1}{2} \|\W(k)\|_F^2\right)^{\beta-1}\right)\|\W(k)\|_F + \eta_k\frac{\sqrt{N}}{T}\left(\|X_S\|^2 \|\W(k)\|_F^{2N-1} + \| Y_S X_S^T\| \|\W(k)\|_F^{N-1}\right)\nonumber\\
		\leq& \left(1 - \eta_k\left( \beta \lambda \left(\frac{1}{2} \|\W(k)\|_F^2\right)^{\beta-1} - P_T\sqrt{N}\|\W(k)\|_F^{2N-2} - Q_T\sqrt{N}\|\W(k)\|_F^{N-2}\right) \right)\|\W(k)\|_F,\label{bound SGD at k+1}
	\end{align}
	where we used $\eqref{def:p_t}$ and $\eqref{def:q_t}$ estimating $\|X_S\|^2,$ and $\| Y_S X_S^T\|,$ respectively with $P_T$  and $Q_T$ to get the last inequality.
	
	We claim that $\|\W(k)\|_F \geq M$ implies 
	\begin{equation}\label{WM-bound}
		\beta \lambda \left(\frac{1}{2} \|\W(k)\|_F^2\right)^{\beta-1} - P_T\sqrt{N}\|\W(k)\|_F^{2N-2} - Q_T\sqrt{N} \|\W(k)\|_F^{N-2} \geq 0.
	\end{equation}
	Set $z = \|\W(k)\|_F$. Then $z \geq M > 0$ and inequality $\eqref{WM-bound}$ is 
	equivalent to 
	\begin{equation}\label{z:eq}
		\lambda c_\beta z^{2\beta-N} - P_T\sqrt{N} z^N - Q_T\sqrt{N} \geq 0,
	\end{equation}
	with $c_\beta = \beta 2^{1-\beta}.$
	
	If $\beta = N$ then, by assumption, $\lambda c_\beta > P_T\sqrt{N}$ and 
		$z \geq \left(\frac{Q_T\sqrt{N}}{\lambda c_\beta - P_T\sqrt{N}}\right)^{\frac{1}{N}} = M$
    implies $ \eqref{z:eq}$.
	We now consider the case $\beta > N$.
	If $z \geq \left(\frac{2P_T\sqrt{N}}{\lambda c_\beta}\right)^{\frac{1}{2(\beta - N)}}$ then
	$
	\lambda c_\beta z^{2\beta-N} \geq 2 P_T\sqrt{N}z^N.
	$
	If $z \geq \left(\frac{2Q_T\sqrt{N}}{\lambda c_\beta}\right)^{\frac{1}{2\beta-N}}$ then
	$
	\lambda c_\beta z^{2\beta-N} \geq 2 Q_T\sqrt{N}.
	$
	Consequently, if
	\[
	z \geq \max\left\{\left(\frac{2P_T\sqrt{N}}{\lambda c_\beta}\right)^{\frac{1}{2(\beta - N)}},  \left(\frac{2Q_T\sqrt{N}}{\lambda c_\beta}\right)^{\frac{1}{2\beta-N}}\right\} = M,
	\]
	then it holds
	\[
	\lambda c_\beta z^{2\beta-N} \geq 2 \max\left\{P_T\sqrt{N}z^N, Q_T\sqrt{N}\right\} \geq P_T \sqrt{N}z^N + Q_T\sqrt{N},
	\]
	which also implies $\eqref{z:eq}$. 
	Altogether, we proved our claim that $\|\W(k)\|_F \geq M$ implies $\eqref{WM-bound}$. Therefore, inequality $\eqref{bound SGD at k+1}$ implies $\|\W(k+1)\|_F \leq \|\W(k)\|_F$ and the induction hypothesis yields 
	\[
	\|\W(k+1)\|_F \leq \|\W(k)\|_F \leq \max\{\|\W(0)\|_F,M+B\}.
	\]
	This finishes the proof of Proposition~\ref{prop:bounded}.
\end{proof}

\subsection{Proof of Proposition~\ref{prop:bounded_adaptive}} \label{prf:bounded_adaptive}
\begin{proof}
    The proof is similar to the proof of Proposition~\ref{prop:bounded}. We prove the claimed bound 
	\begin{equation*}
		\|\W(k)\|_F \leq \max\left\{ \|\W(0)\|_F, M +D\sqrt N(P_TM^{2N-1}+Q_TM^{N-1})\right\} \quad\mbox{ for all } k \in \NN_0.
	\end{equation*} by induction on $k$. The case $k=0$ is clear.
	Now we assume that the bound holds for $k$ and show that this implies the bound for $k+1$.
	From equation $\eqref{W-iterate}$ we have
	\begin{align}
		\W(k+1) = \left(1 - \beta\lambda\eta_{k} \left(\frac{1}{2} \|\W(k)\|_F^2\right)^{\beta-1} \right) \W(k) 
		-\eta_{k} \nabla L^N_{S_{k}}\left( \W(k) \right) \label{W-iterate 2_alt}
	\end{align}
	We note that  the definition of the stepsize $\eta_k= b_k \cdot \min\left\{  \frac{1}{\beta\lambda(\frac12\|\W(k)\|_F^2)^{\beta-1}}, D\right\}$ immediately implies
	\begin{align}\label{2nd bound1_alt}
		\beta\lambda\eta_{k}\left(\frac{1}{2}\|\W(k)\|_F^2\right)^{\beta-1}
		\leq 1.
	\end{align}
	We now  again distinguish the cases  $\|W(k)\|_F\leq M$ and $\|W(k)\|_F>M.$ 
	First we assume $\|W(k)\|_F\leq M$. Then by \eqref{W-iterate 2_alt},\eqref{2nd bound1_alt} and the triangle inequality, it follows that
	\begin{align*}
		\| \W(k+1) \|_F & \leq 
		\left| 1 - \beta\lambda\eta_{k}\left(\frac{1}{2}\|\W(k)\|_F^2\right)^{\beta-1}\right| \|\W(k)\|_F + \eta_k \left\|\nabla L^N_{S_{k}}\big(\W(k)\big)\right\|_F\\
		&\leq\|\W(k)\|_F + \eta_k\sqrt{N}\left(P_T \|\W(k)\|_F^{2N-1} + Q_T \|\W(k)\|_F^{N-1}\right)\\
		& \leq M + D\sqrt{N}\left(P_T M^{2N-1} + Q_T M^{N-1}\right),
	\end{align*}
	which gives the desired bound on $\| \W(k+1) \|_F$.
	Here we used Lemma \ref{lem:grad-bound} together with \eqref{def:p_t} and $\eqref{def:q_t}$ to get the second inequality and  we used that $\eta_k\leq D$ (by definition of $\eta_k$)  and the assumption  $\|W(k)\|_F\leq M$ in the last inequality.
	Now assume $\|W(k)\|_F>M.$ In this case, by the same reasoning as in the proof of Proposition~\ref{prop:bounded}, we have
	\begin{align*}
		\| \W(k+1) \|_F\leq \| \W(k) \|_F,
	\end{align*} which by the induction hypothesis gives the desired bound.
	This completes the proof. 
	\end{proof}

\subsection{Proof of Theorem ~\ref{thm:sgd-conv2cset}} \label{prf:sgd-conv2cset}
\begin{proof}
	 The central argument of the proof is to show that the interpolated process $\overrightarrow{X} (t)$, see \eqref{eq:interp_process}, is a precompact asymptotic pseudotrajectory of the gradient flow defined via 
	\begin{align}\label{sgf_w2}
		\frac{d\W}{dt}= -\nabla\Tilde{L}^N\big(\W\big). 
	\end{align}
	Corollary~\ref{cor:apt_convergence} then leads to desired convergence properties.

	Let us start by showing that the stochastic gradient descent algorithm is a Robbins-Monro algorithm. We recall from equation \eqref{eq:interp_process} that in our setup
	\begin{align*}
		\overrightarrow{X} (t)=\W(k)+\frac{t-\tau_k}{\tau_{k+1}-\tau_k}\left(\W(k+1)-\W(k)\right) \quad \text{ for } t\in [\tau_k, \tau_{k+1}],~ k\in\NN.
	\end{align*}    
	In the case of deterministic stepsizes, we have $\displaystyle\eta_k = \frac{b_k}{\beta\lambda \max\left\{\left(\frac{1}{2}\|\W(0)\|_F^2\right), \frac{1}{2}(M+B)^2 \right\}^{\beta-1}}$ so that $\sum_{k=0}^{\infty}b_k=\infty$ implies $\sum_{k=0}^{\infty}\eta_k=\infty$.
	By the definition of the natural filtration $\mathcal{F}_k,$ we have that $\W(k)$ is adapted, which implies that $\overrightarrow{U}(k)$ is adapted as well. 
    Recalling $\eqref{grad noise}$, it holds
	\begin{align*}
		\E\left[\overrightarrow{U}(k+1) | \mathcal{F}_k\right]&=\E\left[\nabla \Tilde{L}^N_{S_k}\big(\W(k)\big) |       
		\mathcal{F}_k\right]-\nabla \Tilde{L}^N\big(\W(k)\big)=0.
	\end{align*} 
	Altogether, the stochastic gradient descent sequence $\left(\W(k)\right)_k$ fulfills  the Robbins-Monro requirements given in Definition~\ref{def:robbins_monro_algo}.     
	
    We will now prove that the continuous process $\overrightarrow{X}(t)$ is a.s. an APT. We start with the case of deterministic step sizes. 
	Proposition~\ref{prop:bounded} indicates that all iterates $\W(k)$ of SGD remain bounded for all $k\in\NN$. Therefore, using the definition of $\Tilde{L}^N$, Proposition~\ref{prop:bounded} implies that $\nabla \Tilde{L}^N_{S_k}\big(\W(k)\big)$ and $\nabla \Tilde{L}^N\big(\W(k)\big)$ are bounded. Hence, $\overrightarrow{U}(k+1)=\nabla \Tilde{L}^N_{S_k}\big(\W(k)\big)-\nabla \Tilde{L}^N\big(\W(k)\big)$ forms a bounded sequence as well. This implies that
	\begin{align*}
		\sup_{k}\E\left[\norm{\overrightarrow{U}(k+1)}_F^2\right]<\infty
	\end{align*} 
	and 
	\begin{align*}
		\sum_{k=0}^{\infty}\eta_k^2=\frac{1}{\beta^2\lambda^2 \max\left\{\left(\frac{1}{2}\|\W(0)\|_F^2\right), \frac{1}{2}(M+B)^2 \right\}^{2(\beta-1)}}\sum_{k=0}^{\infty}b_k^2<\infty
	\end{align*}
	since $\displaystyle\sum_{k=0}^{\infty}b_k^2<\infty$ by assumption. Altogether, the conditions of  Proposition \ref{prop:crucial_apt_tool} are fulfilled for $q=2$ and hence  condition ($1$) in Proposition~\ref{prop:apt_property} holds with probability one. 
    
    In the case of adaptive stepsizes, we have on the one hand,
	\begin{align}
		\eta_k &= b_k \cdot \min\left\{  \frac{1}{\beta\lambda(\frac12\|\W(k)\|_F^2)^{\beta-1}},\, D \right\} \leq D b_k.
        \label{up_adaptive_etak}
	\end{align}
	On the other hand, the upper bound on $\|W(k)\|_F$ of Proposition~\ref{prop:bounded_adaptive} implies that
	\begin{align}
		\eta_k &\geq \frac{b_k}{\beta\lambda\left(\frac12 \max\left\{ \|\W(0)\|_F, M +D\sqrt N(P_TM^{2N-1}+Q_TM^{N-1})\right\}\right)^{\beta-1}+1/D}\label{lb_adaptive_etak}
	\end{align}
    Since $\displaystyle \sum_{k=0}^{\infty}b_k= \infty,$ inequality \eqref{lb_adaptive_etak} implies $\displaystyle \sum_{k=0}^{\infty}\eta_k=\infty.$

    Moreover,
	inequality $\eqref{up_adaptive_etak}$ yields 
	\begin{align*}
		\sum_{k=0}^{\infty}\eta_k^2 &\leq D^2 \sum_{k=0}^{\infty}b_k^2.
	\end{align*}
	Taking expectation gives
	\begin{align*}
		\E\left[ \sum_{k=0}^{\infty}\eta_k^2\right] \leq D^2 \E\left[ \sum_{k=0}^{\infty}b_k^2\right] < \infty.
	\end{align*}
	
	The boundedness of the SGD iterates $\W(k)$ as stated in Proposition~\ref{prop:bounded_adaptive} implies that 
    $$   \norm{\overrightarrow{U}(k+1)}_F^2
    = \norm{\nabla \Tilde{L}^N_{S_k}\big(\W(k)\big)-\nabla \Tilde{L}^N\big(\W(k)\big)}_F^2 \leq K
    $$ 
    for all $k$ and some (deterministic) constant $K$ so that
$\sup_{k}\E\left[\norm{\overrightarrow{U}(k+1)}_F^2\mid \F_k\right]\leq C$
	for some deterministic constant $C<\infty$. Altogether, SGD with adaptive stepsizes satisfies the requirements of Proposition~\ref{prop:crucial_apt_tool_update} for $q=2$, implying that also in the case of adaptive stepsizes  condition (1) in Proposition~\ref{prop:apt_property} holds with probability one.

    Using boundedness of the $\W(k)$  once more, the $\W(k)$ 
	fulfill condition ($2$) of Proposition \ref{prop:apt_property}. Therefore, almost surely the continuous process $\overrightarrow{X}(t)$ is an asymptotic pseudotrajectory of the gradient flow. 
    
    We now use Corollary~\ref{cor:apt_convergence} to establish property ($1$)  of Theorem \ref{thm:sgd-conv2cset}. 
    Let us check the assumptions of  Corollary~\ref{cor:apt_convergence} with assumption (1) in the sense of Remark \ref{adapted_assumption}.
    The existence and uniqueness requirement follow from Proposition~\ref{global_existence}.
    Boundedness of the SGD iterates implies that the closure $\overline{\operatorname{im}(X)}$ is compact. Thus we obtain that almost surely 
    $\overrightarrow{X}(t)$ is a precompact asymptotic pseudotrajectory, verifying that condition ($2$) in Corollary~\ref{cor:apt_convergence} holds with probability one. Clearly, $\tilde L^N$ is $C^{\infty}$ so that condition (3) of Corollary~\ref{cor:apt_convergence} is also satisfied. Hence, $L^N(\overrightarrow{X}(t))$ converges a.s. as $t \to \infty$ and since $\overrightarrow{X}(t)$ interpolates $\W(k)$, also $L^N(\W(k))$ converges a.s. as $k \to \infty$. It also follows from  Corollary~\ref{cor:apt_convergence} that the limit set of the asymptotic pseudotrajectory $\overrightarrow{X}(t)$ consists of equilibria.

    Let us proceed with the proof of property ($2$) in Theorem \ref{thm:sgd-conv2cset}, i.e., that the sequence of gradients $\nabla\Tilde{L}^N\big(\W(k)\big)$ converges a.s. to zero.
    By contradiction, let us assume that this does not hold. Then there exists $\varepsilon>0$ and a subsequence $\left(\W(k_\ell)\right)_{\ell\in \NN}$ such that \begin{align}\label{lower bound to contradict}
		\norm{\nabla\Tilde{L}^N\big(\W(k_\ell)\big)}_F>\varepsilon\quad \text{for all $\ell$.}
	\end{align}  By Propositions~\ref{prop:bounded} resp.  
	\ref{prop:bounded_adaptive} $\left(\W(k_\ell)\right)_{\ell\in \NN}$ is bounded and therefore, has a converging subsequence. By Corollary~\ref{cor:apt_convergence}, a.s. the limit of this subsequence is an equilibrium, i.e., a critical point, which contradicts $\eqref{lower bound to contradict}.$ This proves that$\nabla\Tilde{L}^N\big(\W(k)\big)$ converges to zero with probability 1.
	
 It  remains to prove property ($3$).
 
    For $\varepsilon >0$, we denote by $S_{\varepsilon}$ the set $\{\W\ |\ \operatorname{d}(\W,\operatorname{Crit}(\Tilde{L}^N))\geq \varepsilon\}$. This is a closed set.
    Indeed, by the triangle inequality, if $\operatorname{d}(\W,\operatorname{Crit}(\Tilde{L}^N))< \varepsilon$, then this inequality also holds in an open neighborhood of $\W$.
    Let $K$ be a compact set with the property that all iterates are in $K$. Such a set $K$  exists by Proposition~\ref{prop:bounded} resp. \ref{prop:bounded_adaptive}. Now 
	suppose that $\operatorname{d}(\W(k), \operatorname{Crit}(\Tilde{L}^N))$ does not converge to zero. Then there exists a positive number $\varepsilon$ and a subsequence $(\W(i_k))_{k\in \mathbb N}$ such that $\W(i_k)\in S_{\varepsilon}$ for all $k\in \mathbb{N}$. Since 
	$S_{\varepsilon}\cap K$ is compact, the continuous map $S_{\varepsilon}\cap K\ni \W\mapsto \norm{\nabla\Tilde{L}^N(\W)}_2$ assumes a minimum $m_{\varepsilon}$ and by the definition of $S_{\varepsilon}$, we have $m_{\varepsilon}>0$. But we already know 
	that $\norm{\nabla\Tilde{L}^N(\W(i_k))}_2$ converges to zero 
	(almost surely) which contradicts $m_{\varepsilon}>0$. Thus indeed $\operatorname{d}(\W(k), \operatorname{Crit}(\Tilde{L}^N)))$ converges to zero almost surely.
 
\end{proof}	

\section{Appendix}\label{sec:appendix}

In this appendix we collect definitions such as equilibrium points, Lyapunov functions and pseudo-orbit as well as some general results about dynamical systems and asymptotic pseudotrajectories (mainly from \cite{benaim1999dynamics}) that are used in our proofs.

\textcolor{black}{ \begin{definition}[Equilibrium Point]\label{def:eqlbrm_pt}
		Let $\Phi$ be a flow or semiflow on the metric space $(M, d).$ An equilibrium point $p$ is a point that satisfies $\Phi_t(p)=p,$ for all $t.$
	\end{definition}
	Critical points of the loss function $\Tilde{L}^N$ are equilibria for the system \eqref{eq:sgf_w}.}
	
\begin{definition}[Lyapunov Function]\label{def:lyapunov_fn} 
Let $(M,d)$ be a metric space and let $\Lambda \subseteq M$ be a compact invariant set of the semiflow $\Phi$.
A Lyapunov function for 
    $\Lambda$ is a continuous function $V: M\to\RR$ that satisfies:
	\begin{enumerate}
		\item The function $\RR_+\ni t\mapsto V(\Phi_t(x))$ is constant for $x\in\Lambda.$ 
		\item For $x\in M\setminus\Lambda,$ the function $\RR_+\ni t\mapsto V(\Phi_t(x))$ is strictly decreasing.
	\end{enumerate}
	If $\Lambda$ is the set of equilibrium points then $V$ is said to be a strict Lyapunov function and $\Phi$  a gradientlike system.

\end{definition}

\begin{definition}[Pseudo-orbit]\label{def:pseudo_orbit}
	Let $\delta>0,$ $T>0$ and let $\Phi$ be a flow or semiflow on the metric space $\big(M, d\big)$ with $a,b\in M.$
	\begin{enumerate}
		\item A finite sequence of partial trajectories 
		\begin{align*}
			\left\{\Phi_t(y_{\ell}): \quad 0\leq t\leq t_{\ell} \right\}; ~\ell=0,\dots,k-1; ~t_{\ell}\geq T
		\end{align*}
		that satisfies 
		\begin{align*}
			&d(y_0,a)<\delta,\\
			&d\big(\Phi_{t_j}(y_j),y_{j+1}\big)<\delta, \quad j=0,\dots,k-1,\\
			&y_k=b
		\end{align*}
		is a $(\delta,T)$-pseudo-orbit from $a$ to $b.$  
		\item If there exists a $(\delta,T)$-pseudo-orbit from $a$ to $b$, we write $a\hookrightarrow_{\delta,T} b.$ If $a\hookrightarrow_{\delta,T} b$ for all $\delta>0,$ $T>0$ then we write $a\hookrightarrow b.$ The flow $\Phi$ is said to be chain transitive if $a\hookrightarrow b$ for every $a,b\in M.$
		\item A subset $N\subset M$ that satisfies $\Phi_t(N)=N,$ for all $t,$ is an invariant set.
		\item An internally chain transitive set is an invariant compact set upon which $\Phi$ is chain transitive.
	\end{enumerate}
\end{definition} 

\begin{theorem}[Theorem 5.7 in \cite{benaim1999dynamics}] \label{thm:chain_transitive_set}
	Let $M$ be a metric space.
	\begin{enumerate}
		\item Let $X$ be an asymptotic pseudotrajectory of $\Phi$ such that the image of $X$ has a compact closure in M. Then, the limit set $L(X)$ is internally chain transitive.
		\item Let $L\subset M$ be an internally chain transitive set, and suppose that M is locally path connected. Then, there exists an asymptotic pseudotrajectory $X$ that satisfies $L(X)=L.$
	\end{enumerate}
\end{theorem}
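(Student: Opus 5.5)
The plan for part (1) is to verify the three ingredients of Definition~\ref{def:pseudo_orbit}(4) for $L(X)$: compactness, invariance, and chain transitivity. Compactness is immediate: $L(X)=\bigcap_{t\ge 0}\overline{X([t,\infty))}$ is a decreasing intersection of nonempty closed subsets of the compact set $\overline{\operatorname{im}(X)}$. The same precompactness gives the auxiliary fact
\[
\lim_{t\to\infty}\operatorname{dist}(X(t),L(X))=0 ,
\]
since otherwise some sequence $X(t_k)$ with $t_k\to\infty$ would stay at distance at least $\varepsilon$ from $L(X)$ while having a subsequence converging into $L(X)$.

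For invariance, take $p=\lim_k X(t_k)\in L(X)$ and $h\ge 0$. By continuity of $\Phi$ and Definition~\ref{def:apt},
\[
\Phi_h(p)=\lim_k \Phi_h(X(t_k))=\lim_k X(t_k+h)\in L(X),
\]
so $\Phi_h(L(X))\subseteq L(X)$. For the reverse inclusion, pass to a subsequence along which $X(t_k-h)\to q\in L(X)$; the same computation gives $\Phi_h(q)=p$, hence $\Phi_h(L(X))=L(X)$.

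Chain transitivity is the main step. Fix $a,b\in L(X)$, $\delta>0$ and $T>0$. Let $K$ be a compact neighbourhood of $\overline{\operatorname{im}(X)}$, or simply $\overline{\operatorname{im}(X)}$ itself when $M$ is not locally compact. I would choose $\rho<\delta/3$ such that $d(x,y)<\rho$ with $x,y\in K$ implies $d(\Phi_t(x),\Phi_t(y))<\delta/3$ for all $t\in[0,2T]$; this is uniform continuity of $\Phi$ on $[0,2T]\times K$. Next I would pick $t_0$ so large that for $t\ge t_0$:
\begin{itemize}
\item the APT error on windows of length $2T$ is below $\delta/3$, and
\item $\operatorname{dist}(X(t),L(X))<\rho$.
\end{itemize}
Then I take $s\ge t_0$ with $d(X(s),a)<\rho$ and $s'\ge s+T$ with $d(X(s'),b)<\rho$. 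I subdivide $[s,s']$ into intervals of lengths $t_\ell\in[T,2T]$ with nodes $s_\ell$, and choose $y_\ell\in L(X)$ with $d(y_\ell,X(s_\ell))<\rho$, setting $y_0=a$ and $y_k=b$ at the endpoints. The triangle inequality then gives
\[
d(\Phi_{t_\ell}(y_\ell),y_{\ell+1})\le d(\Phi_{t_\ell}(y_\ell),\Phi_{t_\ell}(X(s_\ell)))+d(\Phi_{t_\ell}(X(s_\ell)),X(s_{\ell+1}))+d(X(s_{\ell+1}),y_{\ell+1})<\delta ,
\]
so this is a $(\delta,T)$-pseudo-orbit from $a$ to $b$ inside $L(X)$. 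The delicate point I expect is the uniform continuity when $M$ is not locally compact. I would handle it by using that $\Phi$ is continuous on the compact set $[0,2T]\times L(X)$ and that all $y_\ell\in L(X)$, arguing by contradiction with sequences.

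For part (2), I would build $X$ by concatenation. Fix a dense sequence $(p_j)$ in $L$ and parameters $\delta_n\downarrow 0$, $T_n\uparrow\infty$. Internal chain transitivity supplies, for each $n$, a $(\delta_n,T_n)$-pseudo-orbit inside $L$ from $p_n$ to $p_{n+1}$. I would follow the flow pieces one after another and bridge each jump of size $<\delta_n$ by a path of small diameter $\varepsilon(\delta_n)$ traversed in unit time, which local path connectedness provides, with $\varepsilon(\delta_n)\to 0$ by compactness of $L$. A window $[t,t+T]$ with $t$ large meets at most a bounded number of bridges, since flow pieces have length at least $T_n\gg T$. Continuity of $\Phi$ on $[0,T]\times L$ then makes the APT error tend to $0$. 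Finally, $L(X)=L$ holds because the construction stays $\varepsilon(\delta_n)$-close to $L$ and passes near every $p_j$ infinitely often.
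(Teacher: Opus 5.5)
The paper gives no proof of this theorem. It is quoted from \cite{benaim1999dynamics}, and only part (1) is used later, in the proof of Corollary~\ref{cor:apt_convergence}. So I judge your proposal on its own.

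\textbf{Part (1) is correct.} Compactness of $L(X)$, two-sided invariance, and your explicit $(\delta,T)$-chain through points $y_\ell\in L(X)$ shadowing $X(s_\ell)$ all go through. The ``delicate point'' you flag is not an issue. $K=\overline{\operatorname{im}(X)}$ is compact by hypothesis and contains both $y_\ell$ and $X(s_\ell)$, so $\Phi$ is uniformly continuous on $[0,2T]\times K$ without any local compactness of $M$.

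\textbf{Part (2) has a genuine gap.} Bridging each jump by a path traversed in \emph{unit time} destroys the pseudotrajectory property.

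\emph{Why it fails.} While $X$ crawls along a bridge of diameter $\varepsilon(\delta_n)$ starting at $x=\Phi_{t_\ell}(y_\ell)$, the comparison curve $h\mapsto\Phi_h(X(t))$ keeps moving along the flow. After the bridge, $X$ lags the flow by a time shift of $1$. So the error on a window containing a bridge is of order $\sup_{z\in L,\,0\le s\le 1} d(\Phi_s(z),z)$, which does not shrink as $n\to\infty$.

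\emph{A concrete failure.} Take $L=S^1\subset\RR^2$ with $\Phi_t$ the rotation by angle $t$. At the start time $t$ of any bridge of diameter at most $\varepsilon$ traversed in unit time, you get $d(X(t+1),\Phi_1(X(t)))\ge 2\sin(1/2)-\varepsilon$. This happens at arbitrarily large $t$. Continuity of $\Phi$ on $[0,T]\times L$ cannot rescue this, because the defect is a time lag, not a small spatial perturbation.

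\emph{The repair.} Traverse the $n$-th bridge in a time $r_n\to 0$. Set $\omega(r)=\sup_{z\in L,\,0\le s\le r}d(\Phi_s(z),z)$, which tends to $0$ as $r\to 0$. On a window meeting a bridge, the error is then bounded by $\varepsilon(\delta_n)+\omega(r_n)$, plus the propagation over time at most $T$ of a discrepancy of size $O(\delta_n+\varepsilon(\delta_n)+\omega(r_n))$. For windows starting on a bridge, $X(t)\notin L$. So the propagation step needs uniform continuity of $\Phi$ on $[0,T]$ at points of $M$ close to the compact set $L$ (your sequence argument from part (1)), not merely continuity on $[0,T]\times L$.

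\emph{A smaller point.} With one pseudo-orbit from $p_n$ to $p_{n+1}$, each $p_j$ is visited only once, not infinitely often as you claim. To get $L\subseteq L(X)$, either enumerate the dense set with every point repeated infinitely often, or use that a non-singleton internally chain transitive set is connected and so has no isolated points.
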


\begin{proposition}[Proposition 6.4 in \cite{benaim1999dynamics}]\label{prop:chain_transitive_set}
	Let $M$ be a metric space and $\Lambda\subset M$ be a compact invariant set. Let $V:~M\to\RR$ be a Lyapunov function for $\Lambda$ and suppose the interior of $V\left(\Lambda\right)\subset\RR$ is empty. Then every internally chain transitive set $L$ is contained in $\Lambda$ and the restriction of $V$ to $L$ 
    is constant.
\end{proposition}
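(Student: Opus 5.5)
The plan is to show first that $V$ is constant on $L$, by a "trapping level set" argument against chain transitivity. Then deduce $L\subset\Lambda$ from invariance of $L$. Throughout, $L$ is compact and invariant, and pseudo-orbits for the restricted semiflow $\Phi|_{\RR_+\times L}$ have all their points $y_\ell$ in $L$.

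\textbf{Preliminary monotonicity.} By Definition~\ref{def:lyapunov_fn}, $t\mapsto V(\Phi_t(x))$ is constant for $x\in\Lambda$ and strictly decreasing for $x\notin\Lambda$. Hence it is nonincreasing for every $x\in M$.

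\textbf{Step 1: choice of a level.} Suppose $V|_L$ is not constant. Set $v_-=\min V(L)<v_+=\max V(L)$; both exist by compactness of $L$ and continuity of $V$. The open interval $(v_-,v_+)$ cannot be contained in $V(\Lambda)$, since the interior of $V(\Lambda)$ is empty. So we can pick $r\in(v_-,v_+)\setminus V(\Lambda)$. This avoids any need to know that $L$ is connected.

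\textbf{Step 2: uniform decrease on the sublevel set.} Let $K=\{x\in L: V(x)\le r\}$, which is compact. I claim $V(\Phi_1(x))<r$ for every $x\in K$.
\begin{itemize}
\item If $V(x)<r$, this follows from monotonicity.
\item If $V(x)=r$, then $x\notin\Lambda$ because $r\notin V(\Lambda)$, so strict decrease gives $V(\Phi_1(x))<V(x)=r$.
\end{itemize}
Since $x\mapsto r-V(\Phi_1(x))$ is continuous and positive on the compact set $K$, it has a minimum $\delta_0>0$. Monotonicity then gives $V(\Phi_t(x))\le r-\delta_0$ for all $x\in K$ and $t\ge 1$. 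By uniform continuity of $V$ on a compact neighbourhood of $L$, choose $\delta>0$ such that $d(y,z)<\delta$ with $y,z$ in that neighbourhood implies $|V(y)-V(z)|<\delta_0$. Shrink $\delta$ further so that $\delta<\delta_0$ is not needed, but so that $d(a,y)<\delta$ forces $V(y)<r$ when $V(a)=v_-$; this is possible since $v_-<r$.

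\textbf{Step 3: contradiction with chain transitivity.} Pick $a\in L$ with $V(a)=v_-$ and $b\in L$ with $V(b)=v_+>r$. By chain transitivity there is a $(\delta,1)$-pseudo-orbit $y_0,\dots,y_k=b$ in $L$ from $a$ to $b$. By the choice of $\delta$, $y_0\in K$. Inductively, if $y_\ell\in K$, then $V(\Phi_{t_\ell}(y_\ell))\le r-\delta_0$ because $t_\ell\ge1$. Since $d(\Phi_{t_\ell}(y_\ell),y_{\ell+1})<\delta$, we get $V(y_{\ell+1})<r$, so $y_{\ell+1}\in K$. Hence $b=y_k\in K$, i.e.\ $V(b)\le r$, a contradiction. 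Therefore $V|_L\equiv c$ is constant.

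\textbf{Step 4: $L\subset\Lambda$.} If some $x\in L\setminus\Lambda$ existed, then $V(\Phi_1(x))<V(x)=c$. But $\Phi_1(x)\in L$ by invariance, so $V(\Phi_1(x))=c$, which is impossible.

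\textbf{Main obstacle.} The delicate point is Step 2: making the decrease uniform on the level set $\{V=r\}$. This is exactly where $r\notin V(\Lambda)$ is used, together with compactness of $K$ and the requirement $t_\ell\ge T=1$ in the pseudo-orbit. I expect the rest to be routine.
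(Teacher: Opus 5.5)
Your proof is correct, and it takes a different route from the one the paper relies on. The paper does not prove this statement; it imports it from Bena\"\i m. There the result comes from the structure theory of chain transitive sets. One shows that a sublevel set $\{x\in L: V(x)<v'\}$ with $v'\notin V(\Lambda)$ is a trapping region for $\Phi|_L$, so its omega-limit set is an attractor of the restricted semiflow. Then one invokes the fact, proved earlier in Bena\"\i m's notes, that an internally chain transitive set admits no proper attractor.

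You bypass attractors entirely. The uniform drop $\delta_0$ after time $1$ on the compact set $K=\{x\in L: V(x)\le r\}$, together with uniform continuity, shows directly that no $(\delta,1)$-pseudo-orbit starting near a minimiser of $V|_L$ can leave $K$. This is essentially the no-proper-attractor argument inlined for one specific trapping region. Choosing a single non-value $r$ strictly between $\min V(L)$ and $\max V(L)$, and aiming the pseudo-orbit at a maximiser, gives the contradiction in one stroke.

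Your version is self-contained and uses only the definitions the paper records in its appendix; attractors and the no-proper-attractor result are not stated there. Bena\"\i m's version is more modular and reuses general facts that serve elsewhere in his theory.

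One small repair is needed. A compact set in a general metric space need not have a compact neighbourhood (think of an infinite-dimensional Hilbert space), but you never need one. Every pair of points you compare, namely $a,y_0$ and $\Phi_{t_\ell}(y_\ell),y_{\ell+1}$, lies in $L$, by invariance of $L$ and because the pseudo-orbit is internal. So simply take $\delta$ from the uniform continuity of $V|_L$ on the compact set $L$ with tolerance $\min\{\delta_0,\,r-v_-\}$. This also replaces your garbled sentence about shrinking $\delta$.
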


The following result about global existence of gradient flow is  well-known, cf. e.g. \cite[Theoorem 4.1]{neuberger2010sobolev}. For completeness we include a proof. 
Another similar  statement and proof can be found in \cite[Proposition 2.54]{gallon2022blowphenomenagradientdescent}. 
    \begin{proposition}\label{global_existence}
Let $L:\mathbb R^n\to \mathbb R$ be a $C^2$-function and assume that $L(x)\geq 0$ for all $x\in \mathbb R^n$. Let $x_0\in \mathbb R^n$. Then the  gradient flow 
\[\dot x(t) = -\nabla L(x(t)), \ x(0)=x_0 \]
is globally defined in the sense that it has a solution that is defined for all  $t\geq -\varepsilon(x_0)$, for some positive $\varepsilon(x_0)$. Furthermore $x(t)$ is unique everywhere (i.e., it is uniquely determined for all $t$ in any interval $I$ containing $0$ such that a solution exists on $I$).
\end{proposition}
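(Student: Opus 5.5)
Local existence and uniqueness come first. Since $L$ is $C^2$, the vector field $-\nabla L$ is $C^1$ and therefore locally Lipschitz. Picard--Lindelöf then gives a unique solution on some open interval $(-\varepsilon(x_0),\varepsilon(x_0))$ around $0$. Uniqueness holds everywhere in the sense stated: if two solutions on intervals $I_u, I_v \ni 0$ agree at $0$, then the set where they agree is closed in $I_u\cap I_v$ (by continuity) and open (by local uniqueness at any common point, applying Picard--Lindelöf with that point as initial value). Connectedness of $I_u\cap I_v$ therefore gives $u=v$ there. Gluing yields a maximal solution on $(t_-,t_+)$ with $t_- <0<t_+$, and I can take $\varepsilon(x_0)=-t_-$ (or any smaller positive number if $t_-=-\infty$). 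It then remains to show $t_+=\infty$.

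The energy identity is the main tool. Along the solution, $\frac{d}{dt}L(x(t)) = -\|\nabla L(x(t))\|^2$. Integrating from $0$ to $t<t_+$ gives
\[
\int_0^t \|\dot x(s)\|^2\,ds = L(x_0)-L(x(t)) \leq L(x_0),
\]
where the inequality uses $L\geq 0$. By Cauchy--Schwarz, $\|x(t)-x_0\| \leq \int_0^t\|\dot x(s)\|ds \leq \sqrt{t\,L(x_0)}$. So on any bounded time interval $[0,t_+)$ with $t_+<\infty$, the trajectory stays in the compact ball $\bar B(x_0,\sqrt{t_+L(x_0)})$.

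The last step, which I expect to be the only real obstacle, is to rule out finite $t_+$ via the standard blow-up alternative. If $t_+<\infty$, a maximal solution of an ODE with locally Lipschitz right-hand side must leave every compact set as $t\to t_+$. This contradicts the confinement just shown. To keep the argument self-contained, I can argue directly. On the ball, $\nabla L$ is bounded by some constant $G$, so $x$ is $G$-Lipschitz on $[0,t_+)$ and $\lim_{t\to t_+}x(t)=:x^*$ exists. Solving the ODE from $x^*$ at time $t_+$ and concatenating (the derivative matches at $t_+$ by continuity of $\nabla L$) extends the solution beyond $t_+$, contradicting maximality. Hence $t_+=\infty$, and the solution exists on $(-\varepsilon(x_0),\infty)$ and is unique wherever it is defined, as claimed.
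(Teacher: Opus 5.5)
Your proof is correct and follows essentially the same route as the paper's: local existence and uniqueness via Picard--Lindel\"of, then the energy identity combined with $L\geq 0$ and Cauchy--Schwarz to get $\|x(t)-x_0\|\leq\sqrt{t\,L(x_0)}$, so the solution stays bounded on finite intervals and therefore exists globally. The only difference is that you spell out two steps the paper leaves implicit: the connectedness argument for uniqueness on $I_u\cap I_v$, and the explicit continuation argument in place of the blow-up alternative.
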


\begin{proof}
	The existence of a solution in an interval $(-\varepsilon(x_0),+\varepsilon(x_0))$ and the uniqueness statement follow in the standard way from the Picard-Lindelöf theorem because $\nabla L$ is of class $C^1$ and hence locally Lipschitz continuous.
	
	Next we show  existence for all $t\geq 0$. Suppose the solution exists in $[0,t]$ for some $t>0$.
Since
\[
\dot x(t)=-\nabla L(x(t)),
\]
we have as usual 
\[
\frac{d}{dt}L(x(t))
=
\langle \nabla L(x(t)),\dot x(t)\rangle
=
-\|\dot x(t)\|_2^2.
\]
Hence
\[
L(x(t))+\int_0^t \|\dot x(s)\|_2^2\,ds=L(x_0).
\]
Since \(L\ge 0\), we obtain
\[
\int_0^t \|\dot x(s)\|_2^2\,ds\le L(x_0).
\]
Hence for any component  $x_j$
\[
|x_j(t)-x_j(0)|=\left|\int_0^t \dot x_j(s)\,ds\right|\leq  \int_0^t \|\dot x(s)\|_2\,ds
\le
\sqrt{t}\,\Bigl(\int_0^t \|\dot x(s)\|_2^2\,ds\Bigr)^{1/2}
\le
\sqrt{t\,L(x_0)}.
\]
Thus, for any finite $T$ such that the solution exists in $[0,T)$, the solution $x(t)$ is bounded for $t\in [0,T)$.   This implies global existence.
\end{proof}

\section{Acknowledgements}
HR and UT acknowledge funding by DFG through the CRC "Sparsity and Singular Structures" (SFB 1481) -- project number 44204750.

All authors acknowledge funding by DAAD (German Foreign Exchange Service) through the project
Understanding stochastic gradient descent in deep learning (grant no: 57417829).

Use of AI: We used ChatGPT to discuss about the proofs of Proposition \ref{prop:crucial_apt_tool_update} and Corollary \ref{cor:apt_convergence}. These statements were essentially proved in Bena\"\i m's paper \cite{benaim1999dynamics}, but we explain his proofs here again providing a few more details. Similarly, ChatGPT was involved in suggesting a reference and proof for the statement of Proposition \ref{global_existence} (which is essentially Theorem 4.1 in \cite{neuberger2010sobolev} and in finding the references \cite{CoddingtonLevinson1955} and \cite{gallon2022blowphenomenagradientdescent}.

\bibliographystyle{abbrv}
\bibliography{references}
	
\end{document}